\newtheorem{theorem}{Theorem}
\newtheorem{lemma}[theorem]{Lemma}
\newtheorem{claim}{Claim}
\newtheorem{remark}{Remark}
\newcommand{\Var}{\mathbb{V}}
\newcommand{\E}{\ensuremath{\mathbb{E}}}
\newcommand{\hj}{\ensuremath{\widehat{j}}}
\newcommand{\bbw}{\ensuremath{\bar{\bw}}}
\newcommand{\bM}{\ensuremath{\bar{M}}}
\newcommand{\bbb}{\ensuremath{\bar{\bb}}}
\newcommand{\bT}{\ensuremath{\bar{T}}}
\newcommand{\boldeta}{\ensuremath{{\boldsymbol{\eta}}}}
\newcommand{\field}[1]{\mathbb{#1}}
\newcommand{\delete}{\mbox{\sc delete}}
\newcommand{\connected}{\mbox{\sc Is-connected}}
\newcommand{\ci}{\mbox{\sc ClusterIndices}}
\newcommand{\bb}{\boldsymbol{b}}
\newcommand{\bz}{\boldsymbol{z}}
\newcommand{\bx}{\boldsymbol{x}}
\newcommand{\ba}{\boldsymbol{a}}
\newcommand{\bw}{\boldsymbol{w}}
\newcommand{\bu}{\boldsymbol{u}}
\newcommand{\bzero}{\boldsymbol{0}}
\newcommand{\R}{\field{R}}
\newcommand{\aod}{A_{\otimes}}
\renewcommand{\tilde}{\widetilde}
\renewcommand{\Pr}{\mathbb{P}}
\newcommand{\ignore}[1]{}
\DeclareMathOperator*{\argmax}{argmax}
\newcommand{\CB}{\mbox{\sc cb}}
\newcommand{\sCB}{\widetilde \CB}
\newcommand{\TCB}{\mbox{\sc tcb}}
\newcommand{\sTCB}{\widetilde \TCB}
\newtheorem{theorem}{Theorem}
\newtheorem{lemma}{Lemma}
\icmltitlerunning{Online Clustering of Bandits}
\begin{document}

\twocolumn[
\icmltitle{Online Clustering of Bandits}

% It is OKAY to include author information, even for blind
% submissions: the style file will automatically remove it for you
% unless you've provided the [accepted] option to the icml2014
% package.

\icmlauthor{Claudio Gentile}{claudio.gentile@uninsubria.it}
\icmladdress{DiSTA, University of Insubria, Italy}
\icmlauthor{Shuai Li}{shuaili.sli@gmail.com}
\icmladdress{DiSTA, University of Insubria, Italy}
\icmlauthor{Giovanni Zappella}{zappella@amazon.com}
\icmladdress{Amazon Development Center Germany, Germany \\ (Work done when the author was PhD student at Univeristy of Milan)}

% You may provide any keywords that you
% find helpful for describing your paper; these are used to populate
% the "keywords" metadata in the PDF but will not be shown in the document
%\icmlkeywords{boring formatting information, machine learning, ICML}

\vskip 0.1in
]

\begin{abstract}
%updated the abs to best fit the space and better considered description by shuai
We introduce a novel algorithmic approach to content recommendation based
% in interacting user scenarios (e.g., a social network)
on adaptive clustering of exploration-exploitation (``bandit") strategies.
We provide a sharp regret analysis of this algorithm in a standard stochastic
noise setting, demonstrate its scalability properties, and
prove its effectiveness on a number of artificial and real-world datasets.
Our experiments show a significant increase in prediction performance
over state-of-the-art methods for bandit problems.
%
\iffalse
********************
{\bf GZ: tentative}
Contextual bandit problems formalize the exploration-exploitation dilemma arising in several industrial applications, such as online advertisement and, more generally, recommendation systems.
In some of these applications such webpages personalization, the algorithm should learn a model for each user of the website starting with little or no information.
In practice, it is often the case that a user is not completely different from each other, and in general, it is often possible to find a small set of models those represents the preferences of all the users creating in this way a cluster of users for each model.

Unfortunately, the learner does not have any previous knowledge on the similarity among the users, on the candidate models or on the number of models those are necessary to represent all the users.
The only viable way for the learner to exploit this similarity is to learn a clusterization of the users while it learns the models, \textit{betting} on the fact that the number of distinct models will be much smaller than the total number of users.

In this paper we present an algorithm for learning similarity and sharing feedback among multiple linear bandit problems, while obtaining a regret bound that depends on the real number of clusters instead of on the number of users.
The last part of the paper is dedicated to compare our algorithm with the state of the art algorithms for linear bandits.
*********************
\fi
\end{abstract}

\vspace{-0.25in}

\section{Introduction}
\vspace{-0.05in}
Presenting personalized content to users is nowdays a crucial functionality for many online
recommendation services. Due to the ever-changing set of available options, these services
have to exhibit strong adaptation capabilities when trying to match
%as much as possible the
users' preferences.
%In essence,
Coarsely speaking, the underlying systems repeatedly learn a mapping between available content
and users, the mapping being based on {\em context} information (that is, sets of features) which is
typically extracted from both users and contents.
The need to focus on content that raises the users' interest,
combined with the need of exploring new content so as to globally improve users' experience,
generates a well-known exploration-exploitation dilemma, which is commonly formalized as
a multi-armed bandit problem~(e.g., \cite{lr85,ACF01,audibert:hal-00711069,CaronKLB12}).
In particular, the contextual bandit methods~(e.g., \cite{Aue02,lz07,li2010contextual,chu2011contextual,
bogers2010movie,abbasi2011improved,cg11,ko11,salso11,yhg12,dkc13}, and references therein)
have rapidly become a reference algorithmic technique for implementing adaptive recommender systems.

\vspace{-0.05in}
% new challenge: users are increasing in number, and interact
Within the above scenarios, the widespread adoption of online social networks, where users are
engaged in technology-mediated social interactions (making product endorsement and word-of-mouth
advertising a common practice), raises further challenges and opportunities to content recommendation
systems: On one hand, because of the mutual influence among friends, acquaintances, business partners,
etc., users having strong ties are more likely to exhibit similar interests, and therefore similar
behavior. On the other hand, the nature and scale of such interactions calls for adaptive
algorithmic solutions which are also computationally affordable.

\vspace{-0.05in}
% two approaches: explicit networks and implicit networks
Incorporating social components into bandit algorithms can lead to a dramatic increase in the quality
of recommendations. For instance, we may want to serve content to a group of users by taking advantage of
an underlying network of social relationships among them. These social relationships can either be
explicitly encoded in a graph, where adjacent nodes/users are deemed similar to one another, or implicitly
contained in the data, and given as the outcome of an inference process that recognizes similarities
across users based on their past behavior.
Examples of the first approach are the recent works~\cite{bes13,dkmc13,cgz13}, where
a social network structure over the users is assumed to be given that reflects actual interest similarities
among users -- see also \cite{cb13, valko2014spectral} for recent usage of social information to tackle the so-called
``cold-start" problem. Examples of the second approach are the more traditional
collaborative-filtering (e.g.,~\cite{skr99}), content-based filtering, and hybrid approaches (e.g.~\cite{b05}).

Both approaches have important drawbacks hindering their practical deployment. One obvious drawback
of the ``explicit network" approach is that the social network information may be misleading
%(e.g., ``is it really true that if we are friends in Facebook then we like the same books ?")
(see, e.g., the experimental evidence reported by~\cite{dkmc13}), or simply unavailable.
Moreover, even in the case when this information is indeed available and useful,
the algorithmic strategies to implement the needed feedback sharing mechanisms might lead to severe
scaling issues~\cite{cgz13}, especially when the number of targeted users is large.
A standard drawback of the ``implicit network" approach of traditional recommender systems
is that in many practically relevant scenarios (e.g., web-based), content universe and popularity often
undergo dramatic changes, making these approaches difficult to apply.

\vspace{-0.05in}
In such settings, most notably in the relevant case
when the involved users are many, it is often possible to identify
a few subgroups or communities within which users share similar interests~\cite{rlkr06,bwdh12},
%~\cite{xlyxzyc05,rlkr06,bwdh12},
thereby greatly facilitating
the targeting of users by means of {\em group} recommendations. Hence the system need not learn
a different model for each user of the service, but just a single model for each group.

\vspace{-0.05in}
In this paper, we carry out\footnote
{
Due to space limitations, we postpone the discussion of related work to the supplementary
material.
}
%\footnote
%{
%The only paper we are aware of which has similar goals as ours is~\cite{bkm12}, though that
%paper apparently works under more restrictive assumptions.
%}
a theoretical and experimental investigation of adaptive clustering algorithms
for linear (contextual) bandits under the assumption that we have to serve content to a set of $n$ users
organized into $m << n$ groups (or {\em clusters}) such that users within each group tend to provide
similar feedback to content recommendations. We give a $O(\sqrt{T})$ regret analysis holding in a standard
stochastically linear setting for payoffs where, importantly, the hidden constants in the big-oh
depend on $m$, rather than $n$, as well as on the geometry of the user models within the different
clusters.
The main idea of our algorithm is to use confidence balls of the users' models to both estimate user similarity,
and to share feedback across (deemed similar) users. The algorithm adaptively interpolates between
the case when we have a single instance of a contextual bandit algorithm making the same predictions for all
users and the case when we have $n$-many instances providing fully personalized recommendations.
We show that our algorithm can be implemented efficiently (the large $n$ scenario being of special concern here)
by means of off-the-shelf data-structures relying on random graphs. Finally, we test our algorithm
on medium-size synthetic and real-world datasets, often reporting a significant increase in prediction performance
over known state-of-the-art methods for bandit problems.

\vspace{-0.07in}
\section{Learning Model}\label{s:model}
\vspace{-0.05in}
We assume the user behavior similarity %over the social network
is encoded as
an {\em unknown} clustering of the users. Specifically, let
$V = \{1,\ldots, n\}$ represent the set of $n$ users. Then $V$ can be partitioned
into a small number $m$ of clusters $V_1, V_2, \ldots, V_m$, with $m << n$, such that
users lying in the same cluster share similar behavior and users lying in different
clusters have %significantly
different behavior. The actual partition of $V$
(including the number of clusters $m$) and the common user behavior within each
cluster are unknown to the learner, and have to be inferred on the fly.

\vspace{-0.05in}
Learning proceeds in a sequential fashion: At each round $t=1,2,\dots$,
the learner receives a user index $i_t \in V$ together with a set of
context vectors $C_{i_t} = \{\bx_{t,1}, \bx_{t,2},\ldots, \bx_{t,c_t}\} \subseteq \R^d$.
The learner then selects some ${\bar \bx_t} =  \bx_{t,k_t} \in C_{i_t}$ to recommend to user $i_t$,
and observes some payoff $a_t \in \R$, which is a function of both $i_t$ and the recommended
${\bar \bx_t}$.
The following assumptions are made on how index $i_t$, set $C_{i_t}$, and payoff $a_t$
are generated in round $t$. Index $i_t$ represents the user to be served by the system,
and we assume $i_t$ is selected uniformly at random\footnote
{
Any other distribution that insures a positive probability of visiting each node of $V$ would suffice here.
}
from $V$. Once $i_t$ is selected,
the number of context vectors $c_t$ in $C_{i_t}$
is generated arbitrarily as a function of past indices $i_1, \ldots, i_{t-1}$, payoffs $a_1, \ldots, a_{t-1}$,
and sets $C_{i_1}, \ldots, C_{i_{t-1}}$, as well as the current index $i_t$.
Then
the sequence $\bx_{t,1}, \bx_{t,2},\ldots, \bx_{t,c_t}$ of context vectors
within $C_{i_t}$
% (as well as the number $c_t$ of its elements) is generated
% arbitrarily as a function of past indices $i_1, \ldots, i_{t-1}$, rewards $a_1, \ldots, a_{t-1}$,
% and sets $C_{i_1}, \ldots, C_{i_{t-1}}$, as well as the current index $i_t$.
%
is generated i.i.d. (conditioned on $i_t, c_t$ and all past indices $i_1, \ldots, i_{t-1}$,
payoffs $a_1, \ldots, a_{t-1}$, and sets $C_{i_1}, \ldots, C_{i_{t-1}}$)
from a random process on the surface of the unit sphere,
whose process matrix $\E[XX^\top]$ is full rank, with minimal eigenvalue $\lambda > 0$.
Further assumptions on the process matrix $\E[XX^\top]$ are made later on.
Finally, payoffs are generated by noisy versions of unknown linear functions
of the context vectors. That is, we assume each cluster $V_j$, $j = 1, \ldots, m$,
hosts an unknown parameter vector $\bu_j \in \R^d$ which is common to each user $i \in V_j$.
Then the payoff value
$a_i(\bx)$ associated with user $i$ and context vector $\bx \in \R^d$ is given by the random variable
\[
a_i(\bx) = \bu_{j(i)}^\top\bx + \epsilon_{j(i)}(\bx)~,
\]
where $j(i) \in \{1, 2, \ldots, m\}$ is the index of the cluster that node $i$ belongs to, and
$\epsilon_{j(i)}(\bx)$ is a conditionally zero-mean and bounded variance noise term.
Specifically, denoting by $\E_t[\,\cdot\,]$ the conditional expectation\
\(
\E\bigl[\,\cdot\,\big|\, (i_1, C_{i_1}, a_1), \ldots, (i_{t-1}, C_{i_{t-1}}, a_{t-1}), i_t\,\bigr],
\)
% \ we take the general approach of \cite{abbasi2011improved},
% and 
we assume that for any fixed $j \in \{1,\ldots,m\}$ and $\bx \in \R^d$, the variable
$\epsilon_j(\bx)$ is such that
% conditionally sub-Gaussian with variance parameter $\sigma^2 > 0$, namely,
% $\E_t\bigl[\exp(\gamma\,\epsilon_j(\bx))|\,\bx\,\bigr] \leq \exp\bigl(\sigma^2\,\gamma^2/2\bigr)$
% for all $\gamma \in \R$ and all $\bx,v$.
% This implies 
$\E_t[\epsilon_{j}(\bx)|\,\bx\,] = 0$ and
$\Var_t\bigl[\epsilon_{j}(\bx)|\,\bx\,\bigr] \leq \sigma^2$, where $\Var_t[\,\cdot\,]$ is
a shorthand for the conditional variance
$
\Var\bigl[\,\cdot\,\big|\, (i_1, C_{i_1}, a_1), \ldots, (i_{t-1}, C_{i_{t-1}}, a_{t-1}),i_t\,\bigr]
$
of the variable at argument.
So we clearly have $\E_t[a_i(\bx)|\,\bx\,] = \bu_{j(i)}^\top\bx$ and
$\Var_t\bigl[a_i(\bx)|\,\bx\,\bigr] \leq \sigma^2$.
Therefore, $\bu_{j(i)}^\top\bx$ is the expected payoff observed at user $i$
for context vector $\bx$. %(e.g., the click rate of user $i$ on the ad encoded by $\bx$).
In the special case when the noise
$\epsilon_{j(i)}(\bx)$ is a bounded random variable taking values in the range
$[-1,1]$, this implies $\sigma^2 \leq 1$. We will make throughout the assumption that
$a_i(\bx) \in [-1,1]$ for all $i \in V$ and $\bx$. Notice that this implies
$-1 \leq \bu_{j(i)}^\top\bx \leq 1$ for all $i \in V$ and $\bx$.
Finally, we assume well-separatedness among the clusters, in that\
\(
||\bu_{j}-\bu_{j'}|| \geq \gamma > 0\ {\mbox{for all $j \neq j'$}}.
\)
\ We define the regret $r_t$ of the learner at time $t$ as
%
% the extent to which
% the average payoff of the best choice in hindsight at user $i_t$ exceeds the average payoff
% of the algorithm's choice, i.e.,
\vspace{-0.05in}
\[
r_t = \left(\max_{\bx \in C_{i_t} }\, \bu_{j(i_t)}^\top\bx\right) - \bu_{j(i_t)}^\top{\bar \bx_{t}}~.
\]
We are aimed at bounding with high probability (over the variables $i_t$,
$\bx_{t,k}$, $k = 1,\ldots, c_t$,
and the noise variables $\epsilon_{j(i_t)}$) the cumulative regret
\(
\sum_{t=1}^T r_t~.
\)
The kind of regret bound we would like to obtain (we call it the {\em reference} bound) is one where
the clustering structure of $V$ (i.e., the partition of $V$ into $V_1, \ldots, V_m$) is known to the algorithm
ahead of time, and we simply view each one of the $m$ clusters as an independent bandit problem.
In this case, a standard contextual bandit analysis \cite{Aue02,chu2011contextual,abbasi2011improved}
shows that, as $T$ grows large, the cumulative
regret $\sum_{t=1}^T r_t$ can be bounded with high probability as\footnote
{
The ${\tilde O}$-notation hides logarithmic factors.
}
\vspace{-0.05in}
%\[
\begin{center}
\(
\sum_{t=1}^T r_t
=
{\tilde O}\Bigl(\sum_{j = 1}^m \left(\sigma\,d + ||\bu_j||\,\sqrt{d}\right)\,\sqrt{T}\Bigl)~.
\)
\end{center}
%\]
%with high probability.
% at least $1-\delta$.
% where
% \[
% L(\bu_1, \ldots, \bu_m) = \sum_{i = 1}^m ||\bu_i||^2
% \]
% encodes the ``complexity" of the parametric inference problem. Notice that
% $L(\bu_1, \ldots, \bu_m)$ is supposed to be small, since so is $m$ (compared to $n$).
%
For simplicity, we shall assume that $||\bu_j|| = 1$ for all $j = 1, \ldots, m$. Now,
a more careful analysis exploiting our assumption about the randomness of $i_t$ (see the supplementary
material) reveals that one can replace
the $\sqrt{T}$ term contributed by each bandit $j$ by a term of the form
$\sqrt{T}\,\left(\frac{1}{m}+\sqrt{\frac{|V_j|}{n}}\right)$, so that
under our assumptions the reference bound becomes
\vspace{-0.02in}
\begin{equation}\label{e:referencebound}
\sum_{t=1}^T r_t
=
{\tilde O}\Biggl(\left(\sigma\,d + \sqrt{d}\right)\sqrt{T}\Bigl(1+\sum_{j=1}^m \sqrt{\frac{|V_j|}{n}}\Bigl)\Biggl)~.
\end{equation}
%\vspace{-0.05in}
%
%$L(\bu_1, \ldots, \bu_m) = m$.
Observe the dependence of this bound on the size of clusters $V_j$. The worst-case scenario is when
we have $m$ clusters of the same size $\frac{n}{m}$, resulting in the bound
\vspace{-0.08in}
%\[
\begin{center}
\(
\sum_{t=1}^T r_t
=
{\tilde O}\left(\left(\sigma\,d + \sqrt{d}\right)\,\sqrt{m\,T}\right)~.
\)
\end{center}
%\]
%\vspace{-0.05in}
%
At the other extreme lies the easy case when we have a single big cluster and many small ones.
For instance, $|V_1| = n-m+1$, and $|V_2| = |V_3| = \ldots |V_m| = 1$, for $m << n$, gives
\vspace{-0.05in}
%\[
\begin{center}
\(
\sum_{t=1}^T r_t
=
{\tilde O}\Bigl(\left(\sigma\,d + \sqrt{d}\right)\,\sqrt{T}\,\Bigl(1+\frac{m}{\sqrt{n}}\Bigl)\Bigl)~.
\)
\end{center}
%\]
%\vspace{-0.05in}
%
A relevant geometric parameter of the set of $\bu_j$ is the {\em sum of distances}
$SD(\bu_j)$ of a given vector $\bu_j$
%$R$
w.r.t. the set of vectors $\bu_1, \ldots, \bu_m$,
%w.r.t. among them,
which we define as\
\(
SD(\bu_j) = \sum_{\ell = 1}^m ||\bu_j - \bu_{\ell}||.\
%R = \frac{1}{n}\max_{i,j = 1,\ldots, m} ||\bu_i - \bu_j||~,
\)
%i.e., the maximal distance of $\bu_i$ to all other vectors.
%
If it is known that $SD(\bu_j)$ is small for all $j$, one can modify the abovementioned
independent bandit algorithm, by letting the bandits share signals,
as is done, e.g., in \cite{cgz13}. This allows one to exploit the vicinity of the $\bu_j$ vectors,
and roughly replace
\(
1+\sum_{j=1}^m \sqrt{\frac{|V_j|}{n}}
\)
in (\ref{e:referencebound}) by a quantity also depending on the mutual distances
$||\bu_j - \bu_{j'}||$ among cluster vectors.
However, this improvement is obtained at the cost of a substantial increase of running time \cite{cgz13}.
In our analysis (Theorem \ref{t:regret} in Section \ref{s:alg}), we would like to leverage both
%the payoff noise model and
the geometry of the clusters, as encoded by vectors $\bu_j$, and the
relative size $|V_j|$ of the clusters, with no prior knowledge of $m$ (or $\gamma$), and
without too much extra computational burden.

\vspace{-0.1in}
\section{The Algorithm}\label{s:alg}
\vspace{-0.05in}
Our algorithm, called Cluster of Bandits (CLUB), is described in Figure \ref{alg:club}.
In order to describe the algorithm
%(and its subsequent analysis in Section \ref{s:analysis}),
we find it convenient to re-parameterize the problem described in Section \ref{s:model},
and introduce $n$ parameter vectors $\bu_1, \bu_2, \ldots, \bu_n$, one per node, where nodes
within the same cluster $V_j$ share the same vector. An illustrative example is given in
Figure \ref{f:1}.

\vspace{-0.05in}
The algorithm maintains at time $t$ an estimate $\bw_{i,t}$ for
vector $\bu_i$ associated with user $i \in V$. Vectors $\bw_{i,t}$ are updated based
on the payoff signals, similar to a standard linear bandit algorithm
(e.g.,~\cite{chu2011contextual}) operating on the context vectors contained in
$C_{i_t}$. Every user $i$ in $V$ hosts a linear bandit algorithm
like the one described in \cite{cgz13}.
%In fact there are two versions of these weight estimates: one is $\bw'_{i,t}$, the other is
%$\bw_{i,t}$. The former is just an intermediate version, the latter being the
%projection of the former into the unit ball with respect to the Mahalanobis distance
%$d_{i,t}(\bw,\bw') = (\bw - \bw') M_{i,t} (\bw - \bw')$.
One can see that the
%intermediate
prototype vector $\bw_{i,t}$ is the result of a standard linear least-squares
approximation to the corresponding unknown parameter vector $\bu_i$.
In particular, $\bw_{i,t-1}$ is defined through the inverse correlation matrix $M^{-1}_{i,t-1}$,
and the additively-updated vector $\bb_{i,t-1}$.
Matrices $M_{i,t}$ are initialized to the $d\times d$ identity matrix,
and vectors $\bb_{i,t}$ are initialized to the $d$-dimensional zero vector.
In addition, the algorithm maintains at time $t$ an undirected
graph $G_t = (V,E_t)$ whose nodes are precisely the users in $V$. The algorithm starts off from the complete
graph, and progressively erases edges based on the evolution of vectors $\bw_{i,t}$. The graph is intended
to encode the current partition of $V$ by means of the {\em connected components} of $G_t$.
We denote by $\hat V_{1,t}, \hat V_{2,t}, \ldots, \hat V_{m_t,t}$ the partition of $V$ induced by the connected
components of $G_t$. Initially, we have $m_1 = 1$ and
$\hat V_{1,1} = V$. The clusters $\hat V_{1,1}, \hat V_{2,t}, \ldots, \hat V_{m_t,t}$
(henceforth called the {\em current} clusters)
are indeed meant to estimate the underlying true partition $V_1, V_2, \ldots, V_{m}$, henceforth called
the {\em underlying} or {\em true} clusters.
%, and have corresponding {\em representative} nodes $\hs_{1,t}, \hs_{2,t}, \ldots, \hs_{m_t,t} \in V$
% with associated weight vectors
% $\bw_{\hs_{1,t},t}, \bw_{\hs_{2,t},t}, \ldots, \bw_{\hs_{m_t,t},t}$.
% taken from among $\bw_{1,t}, \ldots, \bw_{n,t}$.

%
\begin{figure}[t!]
\begin{center}
\begin{algorithmic}
\small
%\vspace{-1.4in}
\STATE \textbf{Input}: Exploration parameter $\alpha > 0$; edge deletion parameter $\alpha_2 > 0$
\STATE \textbf{Init}:
\vspace{-0.1in}
\begin{itemize}
\item $\bb_{i,0} = \bzero \in \R^d$ and $M_{i,0} = I \in \R^{d\times d}$,\ \ $i = 1, \ldots n$;
\vspace{-0.1in}
\item Clusters ${\hat V_{1,1}} = V$, number of clusters $m_1 = 1$;
\vspace{-0.1in}
\item Graph $G_1 = (V,E_1)$, $G_1$ is connected over $V$.
\end{itemize}
\vspace{-0.1in}
\FOR{$t =1,2,\dots,T$}
\STATE Set $\bw_{i,t-1} = M_{i,t-1}^{-1}\bb_{i,t-1}$, \quad $i = 1, \ldots, n$;
\STATE Receive $i_t \in V$, and get context
       $
       C_{i_t} = \{\bx_{t,1},\ldots,\bx_{t,c_t}\};
       $
\STATE Determine $\hj_t \in \{1, \ldots, m_t\}$ such that $i_t \in {\hat V_{\hj_t,t}}$, and set
\vspace{-0.1in}
\begin{align*}
   \bM_{{\hj_t},t-1}  &= I+\sum_{i \in {\hat V_{\hj_t,t}}} (M_{i,t-1}-I),\\
   \bbb_{{\hj_t},t-1} &= \sum_{i \in {\hat V_{\hj_t,t}}} \bb_{i,t-1},\\
   \bbw_{{\hj_t},t-1} &= \bM_{{\hj_t},t-1}^{-1}\bbb_{{\hj_t},t-1}~;
\end{align*}
\vspace{-0.4in}
%\STATE Set $\bw'_{\hs_t,t-1} = M_{\hs_t,t-1}^{-1}\,\bb_{\hs_t,t-1}$;
%\STATE Set, for all $i \in V$,
%       \[
%       \bw_{i,t-1} = \arg\min_{\bw \in \R^d\,:\, ||\bw||=1} d_{i,t-1}(\bw,\bw'_{i,t-1}),
%       \]
%       where
%       \[
%       d_{i,t}(\bw,\bw') = (\bw-\bw')^\top M_{i,t}(\bw-\bw');
%       \]
\STATE \[
       {\mbox{Set\ \ }}
        k_t = \argmax_{k = 1, \ldots, c_t} \left({\bbw_{\hj_t,t-1}}^\top\bx_{t,k}  + \CB_{\hj_t,t-1}(\bx_{t,k})\right),
       \]
\vspace{-0.2in}
       \begin{align*}
       \CB_{j,t-1}(\bx) &= \alpha\,\sqrt{\bx^\top \bM_{j,t-1}^{-1} \bx\,\log(t+1)},\\%\ j = 1, \ldots, m_t~,
       \bM_{j,t-1} &= I+\sum_{i \in {\hat V_{j,t}}} (M_{i,t-1}-I)\,,\quad j = 1, \ldots, m_t~.
       \end{align*}
%       and
%       \[
%         \clip(x) =
%         \begin{cases}
%           1 & {\mbox{\ If $ x > 1$,}}\\
%           x & {\mbox{\ If $ -1 \leq x \leq 1$,}}\\
%          -1 & {\mbox{\ If $ x < -1$;}}
%         \end{cases}
%       \]
\vspace{-0.1in}
\STATE Observe payoff $a_t \in [-1,1]$;
\STATE Update weights:
       \begin{itemize}
       \vspace{-0.1in}
       \item $M_{i_t,t} = M_{i_t,t-1} + {\bar \bx_{t}}{\bar \bx_{t}}^\top$,
       \vspace{-0.1in}
       \item $\bb_{i_t,t} = \bb_{i_t,t-1} + a_t {\bar \bx_t}$,
       \vspace{-0.1in}
       \item Set $M_{i,t} =  M_{i,t-1},\ \bb_{i,t} = \bb_{i,t-1}$ for all $i \neq i_t$~;
       \end{itemize}
\STATE Update clusters:
       \begin{itemize}
       \vspace{-0.1in}
       \item Delete from $E_t$ all $(i_t,\ell)$ such that
       \[
             ||\bw_{i_t,t-1} - \bw_{\ell,t-1}|| > \sCB_{i_t,t-1} + \sCB_{\ell,t-1}~,
       \]
%      where
       \vspace{-0.3in}
       \begin{align*}
        \sCB_{i,t-1} &= \alpha_2\,\sqrt{\frac{1+\log(1+T_{i,t-1})}{1+T_{i,t-1}}},\\
        T_{i,t-1}    &= |\{s \leq t-1\,:\, i_s = i \}|,\qquad i \in V;
       \end{align*}
       \vspace{-0.3in}
       \item Let $E_{t+1}$ be the resulting set of edges, set $G_{t+1} = (V,E_{t+1})$,
        and compute associated clusters
        $\hat V_{1,t+1}, \hat V_{2,t+1}, \ldots, \hat V_{m_{t+1},t+1}$~.
       \end{itemize}
\ENDFOR
\end{algorithmic}
\vspace{-0.1in}
\caption{\label{alg:club}
Pseudocode of the CLUB algorithm.
The confidence functions $\CB_{j,t-1}$ and $\sCB_{i,t-1}$ are simplified versions of
their ``theoretical" counterparts $\TCB_{j,t-1}$ and $\sTCB_{i,t-1}$, defined later on.
The factors $\alpha$ and $\alpha_2$ are used here as tunable parameters that bridge the simplified
versions to the theoretical ones.
\vspace{-0.25in}
}
\end{center}
\end{figure}
%
%
%GZ: copied and modified in order to save space
%most of the content is already written in other parts of the paper
%
\vspace{-0.05in}
At each time $t=1,2,\dots$, the algorithm receives the index $i_t$ of the user to serve, and the
associated context vectors $\bx_{t,1}, \ldots, \bx_{t,c_t}$
(the set $C_{i_t}$), and must select one among them. In doing so, the algorithm first determines
which cluster (among $\hat V_{1,1}, \hat V_{2,t}, \ldots, \hat V_{m_t,t}$)
node $i_t$ belongs to, call this cluster $\hat V_{\hj_t,t}$,
then builds the aggregate weight vector $\bbw_{\hj_t,t-1}$ by taking prior ${\bar \bx_s}$, $s < t$,
such that $i_s \in \hat V_{\hj_t,t}$, and computing the least squares approximation as if all nodes
$i \in \hat V_{\hj_t,t}$ have been collapsed into one.
%
\iffalse
***************************************************
%CG: commented out due to lack of space
, that is,
%
\vspace{-0.1in}
\begin{eqnarray*}
\bM_{\hj_t,t-1} &=& I + \sum_{s < t\,:\, i_s \in {\hat V_{\hj_t,t}}} {\bar \bx_s}{\bar \bx_s}^\top,\\
\bbb_{\hj_t,t-1} &=& \sum_{s < t\,:\, i_s \in {\hat V_{\hj_t,t}}} a_s\,{\bar \bx_s},\\
\bbw_{{\hj_t},t-1} &=& \bM_{{\hj_t},t-1}^{-1}\bbb_{{\hj_t},t-1}~.
\end{eqnarray*}
%\vspace{-0.1in}
%
Notice that the above definition of $\bM_{\hj_t,t-1}$
is equivalent to the one given in Figure \ref{alg:club}.
***************************************************
\fi
%
It is weight vector $\bbw_{{\hj_t},t-1}$ that the algorithm uses to select $k_t$.
In particular,
% the representative $\hs_{t}$ is computed as\footnote
% {
% Ties are broken in some deterministic way, e.g., by smallest node index.
% }
% \[
%          \hs_t = \arg\min_{v \in {\hat V_{j,t}}} \sCB_{v,t-1}~,
% \]
% where
% \[
%         \sCB_{i,t} = \alpha\,\sqrt{\frac{1+\log(1+T_{i,t})}{1+T_{i,t}}},\qquad i \in V~,
% \]
% being
% \[
% T_{i,t} = |\{s \leq t\,:\, i_s = i\}|
% \]
% the number of rounds up to time $t$ that node $i$ was requested.
% \[
% \hs_t = \argmax_{v \in \hat V_{j,t}} T_{v,t-1}~,
% \]
% i.e., $\hs_t$ is the node that has so far being served most often, among the nodes in the
% current cluster that $i_t$ belongs to.
% Notice that this definition of $\hs_t$ is equivalent to the one in CLUB's pseudocode,
% since the function $x\,:\, \rightarrow \frac{1+\log(1+x)}{1+x}$ is monotonically decreasing in $x \geq 0$.
% Moreover,
\vspace{-0.04in}
\[
       k_t = \argmax_{k = 1, \ldots, c_t} \left({\bbw_{\hj_t,t-1}}^\top\bx_{t,k}  + \CB_{\hj_t,t-1}(\bx_{t,k})\right)~.
\]
%with
% \vspace{-0.05in}
% \[
%        \CB_{j,t-1}(\bx) = \alpha\,\sqrt{\bx^\top \bM_{j,t-1}^{-1} \bx\,\log(t+1)}\,,\, j = 1, \ldots, m_t~,
% \]
% where\ \
% \(
%         \bM_{j,t-1} = I+\sum_{i \in {\hat V_{j,t}}} (M_{i,t-1}-I),\ \ j = 1, \ldots, m_t,\ \
% \)
% is the aggregate matrix associated with (current) cluster $j$.
The quantity $\CB_{\hj_t,t-1}(\bx)$ is a version of the upper confidence bound in the approximation of
$\bbw_{\hj_t,t-1}$ to a suitable combination
of vectors $\bu_i$, $i \in {\hat V_{\hj_t,t}}$ -- see the supplementary material for details.
%, as well as Figure \ref{f:1}.

\vspace{-0.05in}
Once this selection is done and the associated payoff $a_t$ is observed, the algorithm uses the selected vector
${\bar \bx_t}$ for updating $M_{i_t,t-1}$ to $M_{i_t,t}$ via a rank-one adjustment,
and for turning vector $\bb_{i_t,t-1}$ to $\bb_{i_t,t}$
via an additive update whose learning rate is precisely $a_t$.
Notice that the update is only performed at node $i_t$, since for all other $i \neq i_t$ we have
$\bw_{i,t} = \bw_{i,t-1}$. However, this update at $i_t$ will also implicitly update the
aggregate weight vector $\bbw_{\hj_{t+1},t}$ associated with cluster ${\hat V_{\hj_{t+1},t+1}}$ that node $i_t$
will happen to belong to in the next round.
% not at node $\hs_t$.
%
Finally, the cluster structure is possibly modified. At this point
CLUB compares, for all existing edges $(i_t,\ell) \in E_{t}$, the distance
\(
||\bw_{i_t,t-1}- \bw_{\ell,t-1}||
\)
between
%the projected
vectors $\bw_{i_t,t-1}$ and $\bw_{\ell,t-1}$
to the quantity
\(
\sCB_{i_t,t-1} + \sCB_{\ell,t-1}~.
\)
If the above distance is significantly large (and $\bw_{i_t,t-1}$ and $\bw_{\ell,t-1}$ are good approximations to
the respective underlying vectors $\bu_{i_t}$ and $\bu_{\ell}$),
%
% then we expect that both $\sCB_{i_t,t}$ and
% $\sCB_{j,t}$ are small, making
%
then this is a good indication that
$\bu_{i_t} \neq \bu_{\ell}$ (i.e., that node $i_t$ and node $\ell$ cannot belong to the same true cluster), so that
edge $(i_t,\ell)$ gets deleted.
The new graph $G_{t+1}$, and the induced partitioning clusters $\hat V_{1,t+1}, \hat V_{2,t+1}, \ldots, \hat V_{m_{t+1},t+1}$,
%and associated representatives $\hs_{1,t+1}, \hs_{2,t+1}, \ldots, \hs_{m_{t+1},t+1}$
are then computed, and a new round begins.

\vspace{-0.1in}
\subsection{Implementation}\label{ss:implementation}
\vspace{-0.05in}
In implementing the algorithm in Figure \ref{alg:club}, the reader should bear in mind that we are
expecting $n$ (the number of users) to be quite large, $d$ (the number of features of each item) to be relatively small,
and $m$ (the number of true clusters) to be very small compared to $n$.
With this in mind, the algorithm can be implemented by storing a least-squares estimator $\bw_{i,t-1}$
at each node $i \in V$, an aggregate least squares estimator $\bbw_{\hj_t,t-1}$ for each current cluster
$\hj_t \in \{1,\ldots, m_t\}$, and an extra data-structure which is able to perform decremental dynamic connectivity.
Fast implementations of such data-structures are those studied by~\cite{Tho97,kkm13}
(see also the research thread referenced therein).
One can show (see the supplementary material) that
in $T$ rounds we have an overall (expected) running time
\vspace{-0.05in}
\begin{align}
O\Bigl(&T\,\Bigl(d^2 + \frac{|E_1|}{n}\,d\Bigl) + m\,(n\,d^2 + d^3) + |E_1|\notag\\
&+ \min\{n^2, |E_1|\,\log n\} + \sqrt{n\,|E_1|}\,\log^{2.5} n\Bigl)~.\label{e:runningtime}
\end{align}
%
%where $|E_1|$ is the number of edges in the initial graph $G_1$.
Notice that the above is $n\cdot{\mbox{poly}}(\log n)$,
if so is $|E_1|$. In addition, if $T$ is large compared to $n$ and $d$, the average running time per round
becomes $O(d^2 + d\cdot{\mbox{poly}}(\log n))$.
As for memory requirements, this implementation
takes $O(n\,d^2 + m\,d^2 + |E_1|) = O(n\,d^2+|E_1|)$. Again, this is $n\cdot{\mbox{poly}}(\log n)$
if so is $|E_1|$.

\vspace{-0.05in}
\subsection{Regret Analysis}\label{s:analysis}
\vspace{-0.05in}
%
% We now provide a regret analysis for the CLUB algorithm
% that relies on the high probability analysis contained in~\cite{abbasi2011improved}
% (Theorems~1 and 2 therein).
%
Our analysis relies on the high probability analysis contained in~\cite{abbasi2011improved}
(Theorems~1 and 2 therein).
The analysis (Theorem \ref{t:regret} below) is carried out in the case when
the initial graph $G_1$ is the complete graph. However, if the true clusters are sufficiently large,
then we can show (see Remark \ref{r:sparse}) that a formal statement can be made even if we
start off from sparser random graphs, with substantial time and memory savings.
% In fact, this is how we ran our experiments in Section \ref{s:exp}.

\vspace{-0.05in}
The analysis actually refers to a version of the algorithm where the confidence bound
functions $\CB_{j,t-1}(\cdot)$ and $\sCB_{i,t-1}$ in Figure \ref{alg:club}
are replaced by their ``theoretical" counterparts
$\TCB_{j,t-1}(\cdot)$, and $\sTCB_{i,t-1}$, respectively,\footnote
{%typo fixed by shuai, from notation to notations
Notice that, in all our notations, index $i$ always ranges over nodes, while index $j$ always ranges over clusters.
Accordingly, the quantities $\sCB_{i,t}$ and $\sTCB_{i,t}$ are always associates with node $i \in V$, while
the quantities $\CB_{j,t-1}(\cdot)$ and $\TCB_{j,t-1}(\cdot)$ are always associates with clusters $j \in \{1,\ldots,m_t\}$.
}
which are defined as follows. Set for brevity
\vspace{-0.05in}
\[
A_{\lambda}(T,\delta) \hspace{-0.03in} 
= \hspace{-0.03in} \left(\frac{\lambda\,T}{4}\hspace{-0.03in}-\hspace{-0.03in}8\log\Bigl(\frac{T+3}{\delta}\Bigl)
- 2\sqrt{T\log\Bigl(\frac{T+3}{\delta }\Bigl)} \right)_+
\]
where $(x)_+ = \max\{x,0\}$, $x \in \R$.
Then, for $j = 1, \ldots, m_t$,
\vspace{-0.1in}
\begin{equation}\label{e:tcb}
\TCB_{j,t-1}(\bx)
=
\sqrt{\bx^\top \bM_{j,t-1}^{-1}\bx}\Biggl(\sigma\sqrt{2\log \frac{|\bM_{j,t-1}|}{\delta/2}} + 1\Biggl),
\end{equation}
%
%\vspace{-0.1in}
being $|\cdot|$ the determinant of the matrix at argument, and, for $i \in V$,
\vspace{-0.1in}
\begin{equation}\label{e:stcb}
\sTCB_{i,t-1}
=  \frac{\sigma\sqrt{2d\,\log t + 2\log (2/\delta)} + 1}
         {\sqrt{1+ A_{\lambda}(T_{i,t-1}, \delta/(2nd))}}\,.
\end{equation}
%
%
\iffalse
***************************************************************
%CG: this chunk is not needed here, moved to the appendix
Finally, we find it convenient to also introduce the node counterpart to $\TCB_{j,t-1}(\bx)$, and
the cluster counterpart to $\sTCB_{i,t-1}$.
Given round $t$, node $i \in V$, and cluster index $j \in \{1, \ldots, m_t\}$, we let
\vspace{-0.1in}
%
\begin{align*}
\TCB_{i,t-1}(\bx) &=  \sqrt{\bx^\top M_{i,t-1}^{-1}\bx}\,\left(\sigma\sqrt{2\,\log \frac{|M_{i,t-1}|}{\delta/2}} + 1\right)\\
\sTCB_{j,t-1} &= \frac{\sigma\sqrt{2d\,\log t + 2\log (2/\delta)} + 1}
         {\sqrt{1+A_{\lambda}(\bT_{j,t-1}, \delta/(2^{m+1}d))}}~,
\end{align*}
%
%\vspace{-0.15in}\\
being $\bT_{j,t-1}$
% \[
% \bT_{j,t-1} = \sum_{i \in {\hat V_{j,t}}} T_{i,t-1} = |\{s \leq t-1\,:\, i_s \in {\hat V_{j,t}} \}|~,
% \]
% i.e.,
the number of past rounds where a node lying in cluster ${\hat V_{j,t}}$ was served. From a notational
standpoint, notice the difference\footnote
{
Also observe that $2nd$ has been replaced by $2^{m+1}d$ inside the log's.
}
between $\sTCB_{i,t-1}$ and $\TCB_{i,t-1}(\bx)$, both referring to a single node $i \in V$, and
$\sTCB_{j,t-1}$ and $\TCB_{j,t-1}(\bx)$ which refer to an aggregation (cluster) of nodes
$j$ among the available ones at time $t$.
***************************************************************
\fi
%
%For the sake of the analysis, we
Recall the difference between {\em true} clusters $V_1, \ldots, V_m$ and
{\em current} clusters $\hat V_{1,t}, \ldots, \hat V_{m_t,t}$ maintained by the algorithm at time $t$. Consistent
with this difference, we let $G = (V,E)$ be the true underlying graph, made up of the $m$ disjoint cliques over
the sets of nodes  $V_1, \ldots, V_m \subseteq V$, and $G_t = (V,E_t)$ be the one kept by the algorithm --
see again Figure \ref{f:1} for an illustration of how the algorithm works.
\begin{figure}[t!]
\begin{picture}(7,152)(7,152)
\scalebox{0.4}{\includegraphics{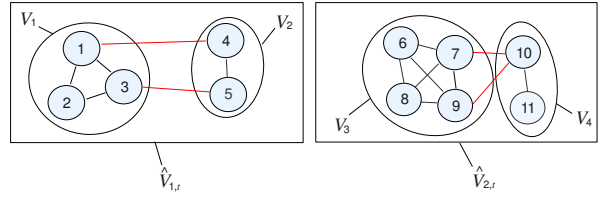}}
\end{picture}
\vspace{-1.2in}
\caption{A true underlying graph $G = (V,E)$ made up of $n = |V| = 11$ nodes, and $m = 4$ true clusters
$V_1 = \{1,2,3\}$, $V_2 = \{4,5\}$, $V_3 = \{6,7,8,9\}$, and $V_4 = \{10,11\}$.
%As a consequence, $\bu_1 = \bu_2 = \bu_3$, $\bu_4 = \bu_5$, $\bu_6 = \bu_7 = \bu_8 = \bu_9$, and $\bu_{10} = \bu_{11}$.
There are $m_t = 2$ current clusters
$\hat V_{1,t} $ and $\hat V_{2,t} $. The black edges are the ones contained in $E$,
%$\hat V_{1,t} = \{1,2,3,4,5\}$ and $\hat V_{2,t} =\{6,7,8,9,10,11\}$. The black edges are the ones contained in $E$,
while the red edges are those contained in $E_t\setminus E$. The two current clusters also correspond to the
two connected components of graph $G_t = (V,E_t)$.
% Nodes 5 and 7 are time-$t$ representatives for the two clusters $\hat V_{1,t}$ and $\hat V_{2,t}$, respectively,
% that is, $\hs_{1,t} = 5$ and $\hs_{2,t} = 7$.
Since aggregate vectors $\bbw_{j,t}$ are build based on current cluster membership, if for instance, $i_t = 3$, then $\hj_t = 1$, so
%(i.e., node 3 belongs to cluster $\hat V_{1,t})$,
$\bM_{1,t-1} = I + \sum_{i = 1}^5 (M_{i,t-1}-I)$, $\bbb_{1,t-1} = \sum_{i = 1}^5 \bb_{i,t-1}$, and
$\bbw_{1,t-1} = \bM_{1,t-1}^{-1}\bbb_{1,t-1}$.
% the algorithm singles out representative $\hs_t = 5$,
%and the algorithm computes its prediction using this vector.
%index $k_t \in \{1, \ldots, c_t\}$ based on aggregate weight vector $\bbw_{1,t-1}$.
%
% In turn (Lemma \ref{l:comboapprox} in the supplementary material),
% % Appendix \ref{s:technical}),
%vector $\bbw_{1,t-1}$ is an approximation to the unknown combination vector
% \(
% %\begin{center}
% {\bar \bu_t}
% =
% \bM_{1,t-1}^{-1}\bigl(I + (M_{1,t-1} + M_{2,t-1} + M_{3,t-1}-2I)\,\bu_{1} + (M_{4,t-1} + M_{5,t-1}-I)\,\bu_{4}\bigl)~.
% %\end{center}
% \)
%
\label{f:1}
}
\vspace{-0.15in}
\end{figure}
The following is the main theoretical result of this paper,\footnote{
The proof is provided in the supplementary material.
}
where additional conditions are needed on the process $X$ generating the context vectors.
\begin{theorem}\label{t:regret}
Let the CLUB algorithm of Figure~\ref{alg:club} be run on the initial complete graph $G_1= (V,E_1)$,
whose nodes $V = \{1,\ldots,n\}$ can be partitioned into $m$ clusters $V_1, \ldots, V_m$ where,
for each $j = 1, \ldots, m$,
nodes within cluster $V_j$ host the same vector $\bu_j$, with $||\bu_j|| = 1$ for $j = 1, \ldots, m$, and
$||\bu_j-\bu_{j'}|| \geq \gamma > 0$ for any $j \neq j'$.
Denote by $v_j = |V_j|$ the cardinality of cluster $V_j$.
Let the $\CB_{j,t}(\cdot)$ function in Figure~\ref{alg:club} be replaced by the
$\TCB_{j,t}(\cdot)$ function defined in (\ref{e:tcb}), and $\sCB_{i,t}$ be replaced by $\sTCB_{i,t}$
defined in (\ref{e:stcb}). In both $\TCB_{j,t}$ and $\sTCB_{i,t}$, let $\delta$ therein be replaced by $\delta/10.5$.
Let, at each round $t$, context vectors $C_{i_t} = \{\bx_{t,1}, \ldots, \bx_{t,c_t}\}$ being generated
i.i.d. (conditioned on $i_t, c_t$ and all past indices $i_1, \ldots, i_{t-1}$,
payoffs $a_1, \ldots, a_{t-1}$, and sets $C_{i_1}, \ldots, C_{i_{t-1}}$)
from a random process $X$ such that $||X|| = 1$, $\E[XX^\top]$ is full rank, with minimal eigenvalue
$\lambda > 0$. Moreover, for any fixed unit vector $\bz \in \R^d$, let the random variable
\(
(\bz^\top X)^2
\)
be (conditionally) sub-Gaussian with variance parameter\
\(
\nu^2 = \Var_t\bigl[(\bz^\top X)^2\,|\,c_t\,\bigr]
\leq \frac{\lambda^2}{8\log (4c)},\
\)
with $c_t \leq c$ for all $t$.
Then with probability at least $1-\delta$ the cumulative regret satisfies
\vspace{-0.08in}
\begin{align}
\sum_{t=1}^T r_t
&\hspace{-0.03in}=\hspace{-0.03in} {\tilde O}\Biggl(\hspace{-0.03in} (\sigma\sqrt{d}+1)\,\sqrt{m}\,\Biggl(\frac{n}{\lambda^2}
+\sqrt{T}\Bigl(1+\sum_{j=1}^m \sqrt{\frac{v_j}{\lambda\,n}} \Bigl)\hspace{-0.03in}\Biggl)\notag\\
&\qquad\ \  + \left(\frac{n}{\lambda^2} + \frac{n\,\sigma^2\,d\,}{\lambda\gamma^2} \right)\,\E[SD(\bu_{i_t})] + m \Biggl)\notag\\
&\hspace{-0.03in}=\hspace{-0.03in} {\tilde O}\Biggl( (\sigma\sqrt{d}+1)\,\sqrt{m\,T}\,\Bigl(1+\sum_{j=1}^m \sqrt{\frac{v_j}{\lambda\,n}} \Bigl) \Biggl)~,
\label{e:asymptotic_bound}
\end{align}
as $T$ grows large.
In the above, the ${\tilde O}$-notation hides $\log (1/\delta)$, $\log m$, $\log n$, and $\log T$ factors.
\end{theorem}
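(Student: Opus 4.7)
The plan is to prove the theorem in four stages: (i) self-normalized martingale confidence bounds for the per-node and per-cluster ridge estimators, (ii) correctness of the edge-deletion rule, (iii) a coupon-collector bound on the rounds needed before true clusters are recovered, and (iv) a per-cluster contextual-bandit regret decomposition once the current partition agrees with the true one. For stage (i), I would apply Theorem~2 of Abbasi-Yadkori et al.~(2011) to the ridge estimator $\bw_{i,t-1}$ at each node $i$, obtaining with probability at least $1-\delta/10.5$, uniformly in $t$ and $i$,
\[
\|\bu_{j(i)} - \bw_{i,t-1}\|_{M_{i,t-1}} \le \sigma\sqrt{2\log(|M_{i,t-1}|/(\delta/2))}+1,
\]
and an analogous bound for the aggregate estimator $\bbw_{j,t-1}$ whenever the current cluster $\hat V_{j,t}$ sits inside a single true cluster (yielding $\TCB_{j,t-1}$). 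The sub-Gaussian assumption on $(\bz^\top X)^2$ then feeds a matrix-Chernoff bound showing $M_{i,t-1}\succeq (1+A_\lambda(T_{i,t-1},\delta'))I$ with high probability, which converts the $\|\cdot\|_{M_{i,t-1}}$ bound into the Euclidean bound $\|\bu_{j(i)}-\bw_{i,t-1}\|\le \sTCB_{i,t-1}$ of Eq.~(\ref{e:stcb}).

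On the intersection of these concentration events I would next argue correctness of the dynamic graph $G_t$. If $i,\ell$ lie in the same true cluster, then $\bu_{j(i)}=\bu_{j(\ell)}$, so the triangle inequality gives $\|\bw_{i,t-1}-\bw_{\ell,t-1}\|\le \sTCB_{i,t-1}+\sTCB_{\ell,t-1}$ and the edge is never deleted. If $i,\ell$ lie in distinct true clusters, $\|\bu_{j(i)}-\bu_{j(\ell)}\|\ge\gamma$, so once both $\sTCB_{i,t-1},\sTCB_{\ell,t-1}<\gamma/4$ — equivalently $T_{i,t-1},T_{\ell,t-1}\ge T^\star=\tilde\Theta(d\sigma^2/(\lambda\gamma^2))$ — the edge is cut the next time one endpoint is served. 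Since $i_t$ is uniform on $V$, a Chernoff bound implies $T_{i,t}\ge t/(2n)$ for every $i$ once $t\gtrsim n\log(n/\delta)$; combined with the threshold for $A_\lambda(\cdot,\delta')$ to become positive, this identifies a burn-in time $t_0=\tilde O(n/\lambda^2 + n\sigma^2 d/(\lambda\gamma^2))$ after which $\hat V_{\hj_t,t}=V_{j(i_t)}$.

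It then remains to split the regret at $t_0$. For $t\ge t_0$, the instantaneous regret is bounded by $2\TCB_{\hj_t,t-1}(\bar\bx_t)$ via an optimism-in-the-face-of-uncertainty argument, and since the current cluster equals a \emph{true} cluster the usual self-normalized per-cluster regret bound yields $\tilde O((\sigma\sqrt d+1)\sqrt{T_j})$ on cluster $V_j$, where $T_j$ is its visit count. Bernstein concentration on $T_j$, whose expectation is $Tv_j/n$, and a Cauchy-Schwarz step across $j$ produce the leading $\sqrt{mT}(1+\sum_j\sqrt{v_j/(\lambda n)})$ term. For $t<t_0$ the aggregate $\bbw_{\hj_t,t-1}$ is biased toward a convex combination of the $\bu_{j'}$'s with $V_{j'}\subseteq\hat V_{\hj_t,t}$; this bias is at most $SD(\bu_{j(i_t)})$, and summing over the $t_0$ burn-in rounds produces the additive $(n/\lambda^2+n\sigma^2 d/(\lambda\gamma^2))\,\E[SD(\bu_{i_t})]$ term.

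The crux is the interface between stages (ii) and (iv) for \emph{merged} current clusters: when $\hat V_{\hj_t,t}$ strictly contains $V_{j(i_t)}$, the estimator $\bbw_{\hj_t,t-1}$ concentrates not around $\bu_{j(i_t)}$ but around a $\bM_{\hj_t,t-1}$-weighted combination of distinct $\bu_{j'}$'s. Establishing a clean confidence bound around this weighted combination and showing that the resulting bias translates into regret controlled by $SD(\bu_{j(i_t)})$ (the ``comboapprox'' lemma in the supplement), while simultaneously maintaining a union bound over the infinitely many $(i,t)$ node-level concentration events under a $\delta/10.5$ budget, is where the main technical effort lies.
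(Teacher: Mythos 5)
Your skeleton largely mirrors the paper's own strategy: node-level self-normalized (Abbasi-Yadkori) bounds turned into Euclidean balls $\|\bu_{j(i)}-\bw_{i,t-1}\|\le\sTCB_{i,t-1}$ via a lower bound on $\lambda_{\min}(M_{i,t-1})$, two-sided correctness of the deletion rule, Bernstein/coupon-collector control of the serving counts, and a bias term measured by $SD(\cdot)$. The bookkeeping differs: you split time at a deterministic burn-in $t_0$, whereas the paper never splits, but bounds each round by $(5+2\sqrt{3m})\,\sTCB_{j_t,t-1}+4\{V_{j_t}\neq\hat V_{\hj_t,t}\}SD(\bu_{i_t})$ and sums the indicators; and after $t_0$ you invoke the usual log-determinant per-cluster bound, which should read $\tilde O((\sigma\sqrt d+1)\sqrt{d\,T_j})$ rather than $\tilde O((\sigma\sqrt d+1)\sqrt{T_j})$ (a minor slip; since $\lambda\le 1/d$ it would still sit under the claimed $\sqrt{1/\lambda}$ factor), while the paper instead sums $\sTCB_{j_t,t-1}\approx A/\sqrt{1+\lambda\bT_{j_t,t-1}/8}$, which is exactly where the $\sqrt m$ and $1/\sqrt\lambda$ of the statement come from.

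The genuine gap is that the two steps carrying the real content are named but not carried out. First, the central lemma of this theorem is the concentration of the \emph{merged-cluster} aggregate $\bbw_{\hj_t,t-1}$ around the matrix-weighted combination $\bar\bu_t=\bM_{\hj_t,t-1}^{-1}\bigl(\sum_{\ell}\bigl(\tfrac1k I+\sum_{i\in V_{j_\ell}}(M_{i,t-1}-I)\bigr)\bu_{j_\ell}\bigr)$, at scale $\sqrt{3m}\,\sTCB_{\hj_t,t-1}$, together with the deterministic bound $\|\bar\bu_t-\bu_{i_t}\|\le 2\,SD(\bu_{i_t})$; you explicitly defer this as ``where the main technical effort lies,'' but without it the burn-in rounds are not controlled, since for $t<t_0$ the regret contains the estimation error of the merged aggregate and the width $\TCB_{\hj_t,t-1}$ used in the selection rule, not only the bias you quote. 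A key subtlety you must also supply is why the cluster-level confidence event can be union-bounded at all: on the edge-deletion event every current cluster is a union of true clusters, so only $2^m$ shapes (not $2^n$) need to be covered, and this interplay between the deletion lemma and the aggregate concentration is part of the proof, not an afterthought. Second, your ``matrix-Chernoff'' step converting the $M$-norm bound into $\sTCB_{i,t-1}$ treats the served contexts as if they were i.i.d.\ draws of $X$; in fact $\bar\bx_s$ is \emph{selected} by the algorithm among the $c_s$ candidates, so one must first show $\E_s[(\bz^\top\bar\bx_s)^2\mid c_s]\ge\lambda/4$ by bounding the minimum over the $c_s$ i.i.d.\ candidates — this is precisely what the sub-Gaussian condition $\nu^2\le\lambda^2/(8\log(4c))$ is for — and then apply a matrix Freedman/Bernstein inequality to the resulting martingale with optional skipping at the stopping times $T_{i,t}$. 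As it stands, the proposal identifies the right architecture but leaves both of these load-bearing arguments unproved.
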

%
% A nunber of remarks are in order at this point.
%
\vspace{-0.04in}
\begin{remark}
A close look at the cumulative regret bound presented in Theorem \ref{t:regret} reveals that this bound is
made up of three main terms: The first term is of the form
\vspace{-0.04in}
\[
(\sigma\sqrt{dm}+\sqrt{m})\,\frac{n}{\lambda^2} + m~.
\]
This term is constant with
%the time horizon
$T$, and essentially accounts for the transient regime due to the
convergence of the minimal eigenvalues of $\bM_{j,t}$ and $M_{i,t}$ to the corresponding minimal eigenvalue $\lambda$
of $\E[XX^\top]$.
The second term is of the form
\vspace{-0.07in}
\[
\Bigl(\frac{n}{\lambda^2} + \frac{n\,\sigma^2\,d\,}{\lambda\gamma^2} \Bigl)\,\E[SD(\bu_{i_t})]~.
\]
This term is again constant with $T$, but it depends through $\E[SD(\bu_{i_t})]$ on the geometric properties
of the set of $\bu_j$ as well as on the way such $\bu_j$ interact with the cluster sizes $v_j$.
Specifically,
%
%\begin{align*}
\vspace{-0.04in}
\begin{center}
\(
\E[SD(\bu_{i_t})]
= \sum_{j=1}^m \frac{v_j}{n}\,\sum_{j'=1}^m ||\bu_j-\bu_{j'}||~.
%&= \sum_{j,j' =1,\,j < j'}^m \frac{v_j+v_{j'}}{n}\,||\bu_j-\bu_{j'}||~.
\)
\end{center}
%\end{align*}
%
Hence this term is small if, say, among the $m$ clusters, a few of them together cover almost all nodes in $V$
(this is a typical situation in practice) and,
in addition, the corresponding $\bu_j$ are close to one another. This term accounts for the hardness of learning
the true underlying clustering through edge pruning.
We also have an inverse dependence on $\gamma^2$, which is likely due to an artifact of our analysis.
Recall that $\gamma$ is not known to our algorithm.
Finally, the third term is the one characterizing the asymptotic behavior of our algorithm as $T \rightarrow \infty$,
its form being just (\ref{e:asymptotic_bound}). It is instructive to compare this term to the reference bound
(\ref{e:referencebound}) obtained by assuming prior knowledge of the cluster structure.
Broadly speaking, (\ref{e:asymptotic_bound}) has an extra $\sqrt{m}$ factor,\footnote
{
This extra factor could be eliminated at the cost of having a higher second term in the bound, which does not
leverage the geometry of the set of $\bu_j$.
}
and replaces a factor $\sqrt{d}$
in (\ref{e:referencebound}) by the larger
%CG: removed due to lack of space
%\footnote
%{
%Notice that $\lambda \leq 1/d$, since $\E[XX^\top]$ is a $d\times d$ matrix, and
%$\tr(\E[XX^\top]) = \E[\tr(XX^\top)] = 1$.
%}
factor $\sqrt{\frac{1}{\lambda}}$.
\end{remark}
\vspace{-0.05in}
\begin{remark}\label{r:emptygraph}
The reader should observe that a similar algorithm as CLUB can be designed that starts off from the empty graph
instead, and progressively draws edges (thereby merging connected components and associated aggregate vectors)
as soon as two nodes host individual vectors $\bw_{i,t}$ which are close enough to one another.
This would have the advantage to lean on even faster data-structures for maintaining disjoint sets
(e.g., \cite{CLR90}[Ch. 22]),
but has also the significant drawback of requiring prior knowledge of the separation parameter $\gamma$.
In fact, it would not be possible to connect two previously unconnected nodes without knowing something about
this parameter.
A regret analysis similar to the one in Theorem \ref{t:regret} exists, though our current understanding is that
the cumulative regret would depend linearly on $\sqrt{n}$ instead of $\sqrt{m}$.
%GZ: non abbiamo detto di averlo, quindi non serve dire di non averlo
%We do not have a sharp regret analysis for this algorithm but,
%CG: beh, l'analisi di per se non sarebbe difficile, ma ha gli svantaggi sopra detti
Intuitively, this algorithm is biased towards a large number of true clusters, rather
than a small number.
\end{remark}
\vspace{-0.05in}
\begin{remark}\label{r:data_dependent}
A data-dependent variant of the CLUB algorithm can be designed and analyzed which relies on
data-dependent clusterability assumptions of the set of users with respect to a set of context vectors.
These data-dependent assumptions allow us to work in a fixed design setting for the sequence of context vectors
$\bx_{t,k}$, and remove the sub-Gaussian and full-rank hypotheses regarding $\E[XX^\top]$.
On the other hand, they also require that the power of the adversary generating context vectors be suitably
restricted. See the supplementary material for details.
\end{remark}

\vspace{-0.05in}
\begin{remark}\label{r:sparse}
Last but not least, we would like to stress that the same analysis contained in Theorem \ref{t:regret} extends
to the case when we start off from a $p$-random Erdos-Renyi initial graph $G_1 = (V,E_1)$, where
$p$ is the independent probability that two nodes are connected by an edge in $G_1$.
Translated into our context, a classical result on random graphs due to~\cite{Kar94}
reads as follows.
\begin{lemma}
Given $V = \{1, \ldots, n\}$, let $V_1, \ldots, V_m$ be a partition of $V$, where $|V_j| \geq s$ for all $j = 1,\ldots, m$.
Let $G_1 = (V,E_1)$ be a $p$-random Erdos-Renyi graph with\
\(
p \geq \frac{12\,\log(6n^2/\delta)}{s-1}~.
\)
\ Then with probability at least $1-\delta$ (over the random draw of edges), all $m$ subgraphs
%$(V_1,E_{1,1}), \ldots, (V_1,E_{1,m})$
induced by true clusters $V_1, \ldots, V_m$ on $G_1$ are connected in $G_1$.
\end{lemma}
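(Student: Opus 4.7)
The plan is to fix a single true cluster $V_j$, show that the induced subgraph $G_1[V_j]$ is connected with probability at least $1 - \delta/m$, and then take a union bound over $j = 1, \ldots, m$. Since each edge of $G_1$ is drawn independently with probability $p$, the restriction $G_1[V_j]$ is itself an Erd\H{o}s--R\'enyi graph $G(|V_j|, p)$ on $|V_j| \geq s$ vertices, so it suffices to bound the disconnection probability of $G(k,p)$ uniformly over $k \geq s$.

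First, I would invoke the standard cut-based disconnection estimate: $G_1[V_j]$ fails to be connected iff some nonempty proper subset $S \subsetneq V_j$ has no $G_1$-edges crossing to $V_j \setminus S$; grouping cuts by the size of the smaller side and applying a union bound gives
\[
\Pr\bigl[G_1[V_j]\text{ disconnected}\bigr] \;\leq\; \sum_{k=1}^{\lfloor |V_j|/2 \rfloor} \binom{|V_j|}{k}\,(1-p)^{k(|V_j|-k)}.
\]
Second, I would bound each summand using $\binom{|V_j|}{k} \leq |V_j|^k \leq n^k$, $(1-p)^x \leq e^{-px}$, and the elementary estimate $|V_j| - k \geq |V_j|/2 \geq s/2$, valid whenever $k \leq |V_j|/2$ and $|V_j| \geq s$. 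Hence $k(|V_j| - k) \geq k(s-1)/2$, and the right-hand side collapses to the geometric series $\sum_{k \geq 1}\bigl(n\,e^{-p(s-1)/2}\bigr)^k$.

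Third, I would substitute the hypothesis $p(s-1) \geq 12 \log(6n^2/\delta)$, which forces the common ratio $n\,e^{-p(s-1)/2}$ of that geometric series down to at most $n\,\bigl(\delta/(6n^2)\bigr)^{6}$, comfortably below $1/2$. The sum is then bounded by twice its first term, which is dramatically smaller than $\delta/m$, so a union bound over the $m \leq n$ true clusters yields the claimed overall failure probability of at most $\delta$.

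The main obstacle is simply the bookkeeping of constants: ensuring that the specific form $12 \log(6n^2/\delta)/(s-1)$ in the hypothesis produces enough slack to absorb both the binomial factor $\binom{|V_j|}{k}$ and the outer union bound over $m$ clusters while still leaving geometric decay in $k$. No combinatorial tool beyond the cut-based union bound is really required; a sharper variant (which would genuinely justify the attribution to \cite{Kar94}) could instead invoke Karger's polynomial bound on the number of near-minimum cuts of $K_{|V_j|}$ together with a tail estimate on how many of their edges survive in the random subgraph, but for the constants stated the bare-hands union bound already suffices.
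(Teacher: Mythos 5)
Your proof is correct, and it is worth noting that the paper itself does not prove this lemma at all: it is imported as a translation of Karger's random-sampling theorem \cite{Kar94}, which says that independently sampling the edges of a graph with minimum cut $c$ with probability $p = \Omega(\log(n/\delta)/c)$ preserves connectivity (indeed, all cuts up to constant factors) with high probability; here each clique on $V_j$ has minimum cut $|V_j|-1 \geq s-1$, which is exactly where the $s-1$ in the denominator comes from. Your route is more elementary and self-contained: the restriction of $G_1$ to $V_j$ is $G(|V_j|,p)$, the cut-based union bound gives $\Pr\bigl[G_1[V_j]\ \text{disconnected}\bigr] \leq \sum_{k=1}^{\lfloor |V_j|/2\rfloor}\binom{|V_j|}{k}(1-p)^{k(|V_j|-k)}$, and your estimates $\binom{|V_j|}{k}\le n^k$ and $k(|V_j|-k)\ge k(s-1)/2$ turn this into a geometric series with ratio $n\,e^{-p(s-1)/2}\le \delta^6/(6^6 n^{11})$ under the stated hypothesis, so each cluster fails with probability at most $2\delta^6/(6^6 n^{11})$ and the union bound over $m\le n$ clusters goes through with enormous slack. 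The only points worth making explicit are trivial: $s\ge 2$ is implicit in the hypothesis (so $p$ is well defined), singleton clusters are vacuously connected (the sum is empty), and $\delta<1$, $n\ge 1$ are what let you call the ratio smaller than $1/2$. What the citation-based route buys is generality (connectivity preservation for arbitrary graphs with large minimum cut, via Karger's counting of near-minimum cuts) at the cost of invoking heavier machinery; your bare-hands argument buys a transparent verification of the specific constant $12\log(6n^2/\delta)/(s-1)$, which, as your calculation shows, is far larger than actually needed.
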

\vspace{-0.05in}
For instance, if $|V_j| = \beta\,\frac{n}{m}$, $j = 1,\ldots, m$, for some constant $\beta \in (0,1)$,
then it suffices to have\
\(
|E_1| = O\left(\frac{m\,n\,\log (n/\delta)}{\beta}\right)~.
\)
\ Under these assumptions, if the initial graph $G_1$ is such a random graph, it is easy to show that
Theorem \ref{t:regret} still holds.
As mentioned in Section \ref{ss:implementation} (Eq. (\ref{e:runningtime}) therein), the striking advantage of
beginning with a sparser connected graph than the complete graph is computational, since we need not handle
anymore a (possibly huge) data-structure having $n^2$-many items.
In our experiments, described next, we set $p = \frac{3\,\log n}{n}$, so as to be reasonably
confident that $G_1$ is (at the very least) connected.

%
%
\iffalse
**************************
\begin{figure}[t!]
\begin{picture}(-40,240)(-40,240)
\scalebox{0.65}{\includegraphics{}}
\end{picture}
\vspace{-1.8in}
\caption{{\bf Left:} Three clusters $V_1$, $V_2$, and $V_3$ partionining a set $V$ of $n = 13$ nodes.
{\bf Right:} An example of a  (random) Erdos-Renyi graph $G_1 = (V,E_1)$ with $|E_1| = 19$. The three
subgraphs of $G_1$ induced by $V_1$, $V_2$, and $V_3$ are connected subgraphs of $G_1$.
\label{f:2}
}
%\vspace{-0.15in}
\end{figure}
%
%
*************************
\fi
%
\end{remark}

\vspace{-0.18in}
\section{Experiments}\label{s:exp}
\vspace{-0.07in}
We tested our algorithm on both artificial and freely available
real-world datasets against standard bandit baselines.

\vspace{-0.08in}
\subsection{Datasets}
\vspace{-0.05in}
\textbf{Artificial datasets.}
We firstly generated synthetic datasets, so as to have
a more controlled experimental setting. We tested the relative performance of the algorithms
along different axes: number of underlying clusters, balancedness of cluster sizes, and amount of payoff noise.
We set $c_t = 10$ for all $t = 1, \ldots, T$, with time horizon $T = 5,000 + 50,000$,
$d = 25$, and  $n = 500$.
For each cluster $V_j$ of users, we created a random
unit norm vector $\bu_j \in \R^d$. All $d$-dimensional context vectors $\bx_{t,k}$
%
% For practical reasons we fixed the number of vectors per time step as 10, the number of dimensions of the
% context vectors as 50 and the number of users as 2000.
% These values were picked with the aim to have a fair comparison under the constraint of limited
% computational resources: the number of context vectors and dimensions are reasonable values for practical applications.
%If they are too small the problem becomes kind of too easy, while if they are too big the requirements
% in term of computational resources becomes unaffordable.
% The number of users and cluster was fixed in order to explore different practical situations.
%
have then been generated uniformly at random on the surface of the Euclidean ball.
The payoff value associated with cluster vector $\bu_j$ and context vector $\bx_{t,k}$ has been generated
by perturbing the inner product $\bu_j^\top\bx_{t,k}$ through an additive white noise term $\epsilon$ drawn
uniformly at random across the interval $[-\sigma,\sigma]$.
It is the value of $\sigma$ that determines the amount of payoff noise.
The two remaining parameters
%we wanted to control
are the number of clusters $m$ and the clusters' relative size.
We assigned to cluster $V_j$ a number of users $|V_j|$ calculated as\footnote
{
We took the integer part in this formula, and reassigned
the remaining fractionary parts of users to the first cluster.
}
$|V_j| = n\,\frac{j^{-z}}{\sum_{\ell=1}^m \ell^{-z}}$, $j = 1, \ldots, m$,
with $z \in \{0,1,2,3\}$, so that $z = 0$ corresponds to equally-sized clusters, and
$z = 3$ yields highly unbalanced cluster sizes.
Finally, the sequence of served users $i_t$ is generated uniformly at random over the $n$ users.

\vspace{-0.05in}
\textbf{LastFM \& Delicious datasets.}
These datasets are extracted from the music streaming service Last.fm and the
social bookmarking web service Delicious.
The LastFM dataset contains $n = 1,\!892$ nodes, and $17,\!632$ items (artists).
%, described by $11,\!946$ tags.
This dataset contains information about the listened artists, and we used this information to
create payoffs: if a user listened to an artist at least once the payoff is $1$, otherwise the payoff is $0$.
Delicious is a dataset with $n = 1,\!861$ users, and $69,\!226$ items (URLs).
% described by $53,\!388$ tags.
The payoffs were created using the information about the bookmarked URLs for each user:
the payoff is $1$ if the user bookmarked the URL, otherwise the payoff is $0$.\footnote
{
Datasets and their full descriptions are available at \texttt{www.grouplens.org/node/462}.
}
%Both datasets were created by the Information Retrieval group at Universidad Autonoma de Madrid for the
%HetRec 2011 Workshop~\cite{hetrec} with the goal of investigating the usage of heterogeneous information in recommendation systems.
These two datasets are inherently different: on Delicious, payoffs depend on users more strongly
than on LastFM, that is, there are more popular artists whom everybody listens to
than popular websites which everybody bookmarks. LastFM is a ``few hits" scenario, while
Delicious is a ``many niches" scenario, making a big difference in recommendation practice.
% and the choice of these two datasets
% allow us to make a more realistic comparison of recommendation techniques.
Preprocessing was carried out by closely following previous experimental settings, like
the one in \cite{cgz13}. In particular, we only retained the first 25 principal components of the context
vectors resulting from a tf-idf representation of the available items, so that on both datasets $d=25$.
We generated random context sets $C_{i_t}$ of size $c_t = 25$ for all $t$ by selecting index $i_t$
at random over the $n$ users, then picking $24$ vectors at random from the available items,
and one among those with nonzero payoff for user
$i_t$.\footnote
{
This is done so as to avoid a meaningless comparison: With high probability,
a purely random selection would result in payoffs equal to zero for all the context vectors in $C_{i_t}$.
}
We repeated this process $T = 5,000 +  50,000$ times for the two datasets.

\textbf{Yahoo dataset.} We extracted two datasets from the one
adopted by the ``ICML 2012 Exploration and Exploitation 3 Challenge''\footnote
{
\texttt{https://explochallenge.inria.fr/}
}
for news article recommendation.
Each user is represented by a 136-dimensional binary feature vector, and
we took this feature vector as a proxy for the identity of the user.
%(no other usage was made of these feature vectors).
We operated on the first week of data. After removing ``empty" users,\footnote
{
Out of the 136 Boolean features, the first feature is always 1 throughout all records.
We call ``empty" the users whose only nonzero feature is the first feature.
}
this gave rise to a dataset of $8,362,905$ records, corresponding to $n = 713,862$ distinct users.
The overall number of distinct news items turned out to be $323$, $c_t$ changing from round to
round, with a maximum of $51$, and a median of $41$.
The news items have no features, hence they
have been represented as $d$-dimensional {\em versors}, with $d = 323$.
Payoff values $a_t$ are either 0 or 1 depending on whether
the logged web system which these data refer to has observed a positive (click) or negative (no-click)
feedback from the user in round $t$.
We then extracted the two datasets ``5k users"
and ``18k users" by filtering out users that have occurred less than $100$ times and less than $50$ times,
respectively.
Since the system's recommendation need not coincide
with the recommendation issued by the algorithms we tested, we could only retain the records on which
the two recommendations were indeed the same.
Because records are discarded on the fly, the actual number of retained records changes
across algorithms, but it is about $50,000$ for the ``5k users" version and  about $70,000$ for the
``18k users" version.

\vspace{-0.09in}
\subsection{Algorithms}
\vspace{-0.05in}
We compared CLUB with two main competitors: LinUCB-ONE and LinUCB-IND.
Both competitors are members of the LinUCB family of algorithms~\cite{Aue02,chu2011contextual,
li2010contextual,abbasi2011improved,cgz13}.
LinUCB-ONE allocates a single instance of LinUCB across all users (thereby making the same
prediction for all users), whereas LinUCB-IND (``LinUCB INDependent") allocates an independent instance of LinUCB
to each user, thereby making predictions in a fully personalised fashion.
Moreover, on the synthetic experiments,
we added two idealized baselines: a GOBLIN-like algorithm~\cite{cgz13} fed
with a Laplacian matrix encoding the true underlying graph $G$, and
a CLAIRVOYANT algorithm that knows the true clusters a priori, and runs one instance of LinUCB {\em per cluster}.
Notice that an experimental comparison to multitask-like algorithms, like GOBLIN, or to the 
idealized algorithm that knows all clusters beforehand, can only be done on the artificial datasets, 
not in the real-world case where no cluster information is available. 
On the Yahoo dataset, we tested the featureless version of the LinUCB-like algorithm in~\cite{cgz13}, which
is essentially a version of the UCB1 algorithm of~\cite{ACF01}.
The corresponding ONE and IND versions are denoted by UCB-ONE and UCB-IND, respectively.
On this dataset, we also tried a single instance of UCB-V~\cite{audibert:hal-00711069} across all users,
the winner of the abovementioned ICML Challenge.
Finally, all algorithms have also been compared
to the trivial baseline (denoted by RAN) that picks the item within $C_{i_t}$ fully at random. 

\vspace{-0.06in}
As for parameter tuning, CLUB was run with $p = \frac{3\,\log n}{n}$, so as to be reasonably confident
that the initial graph is at least connected.
In fact, after each generation of the graph, we checked for its
connectedness, and repeated the process until the graph happened to be connected.\footnote
{
Our results are averaged over 5 random initial graphs, but this randomness
turned out to be
%of minor influence to CLUB's performance.
a minor source of variance. %for CLUB's performance.
}
All algorithms (but RAN) require parameter tuning: an exploration-exploitation tradeoff parameter
which is common to all algorithms (in Figure \ref{alg:club}, this is the $\alpha$ parameter),
and the edge deletion parameter $\alpha_2$ in CLUB.
On the synthetic datasets, as well as on the LastFM and Delicious datasets, we tuned these parameters
by picking the best setting (as measured by cumulative regret) after the first $t_0 = 5,000$ rounds, and
then sticked to those values for the remaining $T-t_0 = 50,000$ rounds.
It is these $50,000$ rounds that our plots refer to.
%which our plots actually refer to.
%Our plots report algorithms' behavior on these $50,000$ rounds only.
%
On the Yahoo dataset,
this optimal tuning was done within the first $t_0 = 100,000$ records, corresponding
to a number of retained records between $4,350$ and $4,450$ across different algorithms.
\vspace{-0.09in}
\subsection{Results}
\vspace{-0.05in}
Our results are summarized in\footnote{Further plots can be found in the supplementary material.}
% \footnote
% {
% For space limitations, we can only provide a few such figures here.
% }
Figures \ref{fig:artificiala}, \ref{fig:last_del}, and \ref{fig:yahoo}.
On the synthetic datasets (Figure \ref{fig:artificiala}) and
the LastFM and Delicious datasets (Figure \ref{fig:last_del}) we measured the ratio of
the cumulative regret of the algorithm
to the cumulative regret of the random predictor RAN (so that the lower the better).
On the synthetic datasets, we did so under combinations of number of clusters, payoff noise,
and cluster size balancedness.
On the Yahoo dataset (Figure \ref{fig:yahoo}), because the only available payoffs are those
associated with the
items recommended in the logs, we instead measured the Clickthrough Rate (CTR), i.e., the fraction
of times we get $a_t = 1$ out of the number of retained records so far (so the higher
the better). This experimental setting is in line with previous ones (e.g., \cite{li2010contextual})
and, by the way data have been prepared, gives rise to a reliable estimation of actual CTR behavior
under the tested experimental conditions \cite{lclw11}.

%updated the figure and tune the corresponding best display arguments by shuai
\begin{figure}[t!]
\begin{picture}(0,31)(0,31)
\begin{tabular}{l@{\hspace{-1.2pc}\vspace{6.4pc}}l}
%\includegraphics[width=0.34\textwidth]{plots/z0_numClus2_pNoise0-1.pdf}
%& \includegraphics[width=0.34\textwidth]{plots/z0_numClus10_pNoise0-3.pdf}\\[-2.1in]
%\includegraphics[width=0.34\textwidth]{plots/z2_numClus2_pNoise0-3.pdf}
%& \includegraphics[width=0.34\textwidth]{plots/z2_numClus10_pNoise0-1.pdf}\\[-2.1in]
\includegraphics[width=0.23\textwidth]{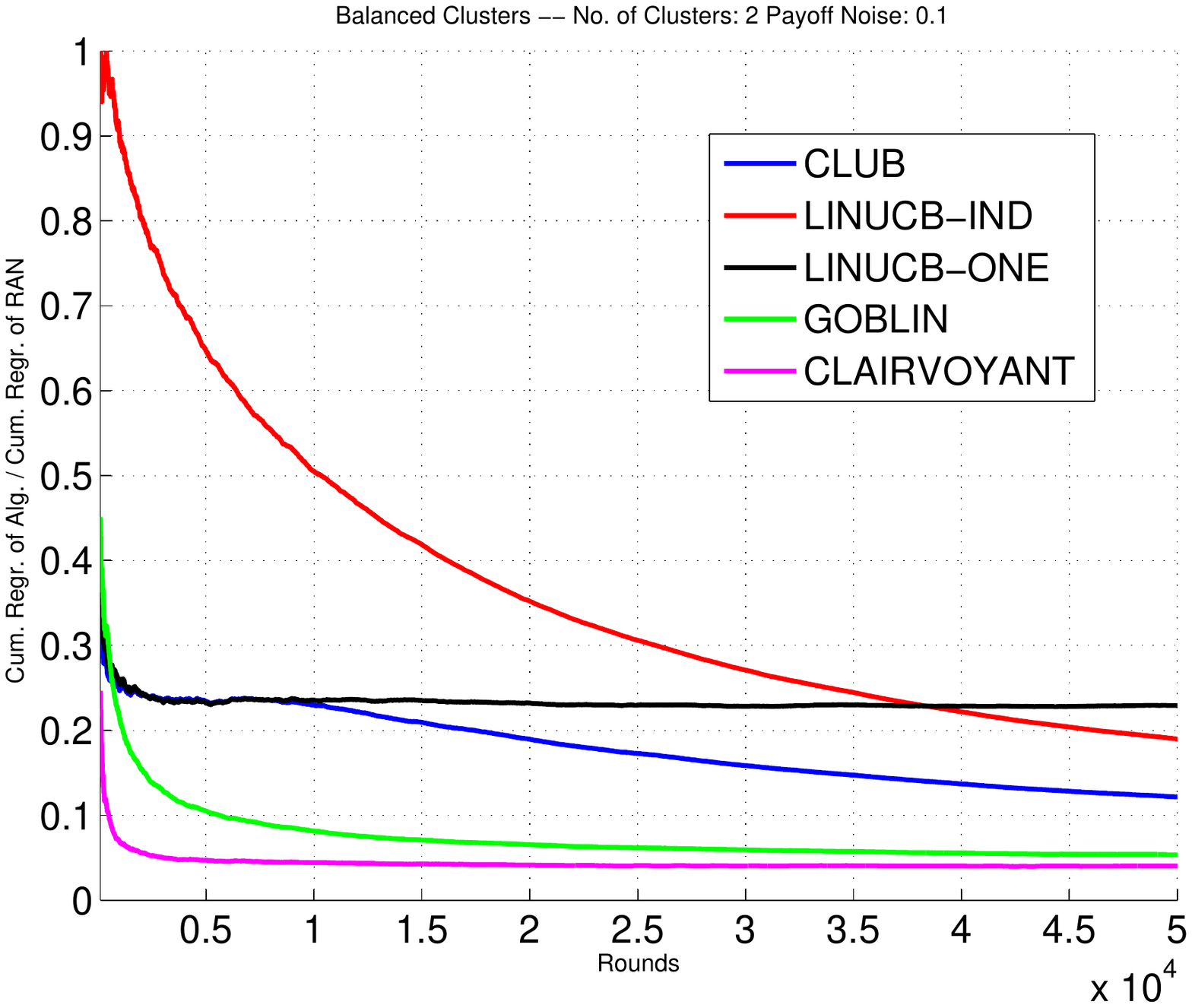}
& \includegraphics[width=0.23\textwidth]{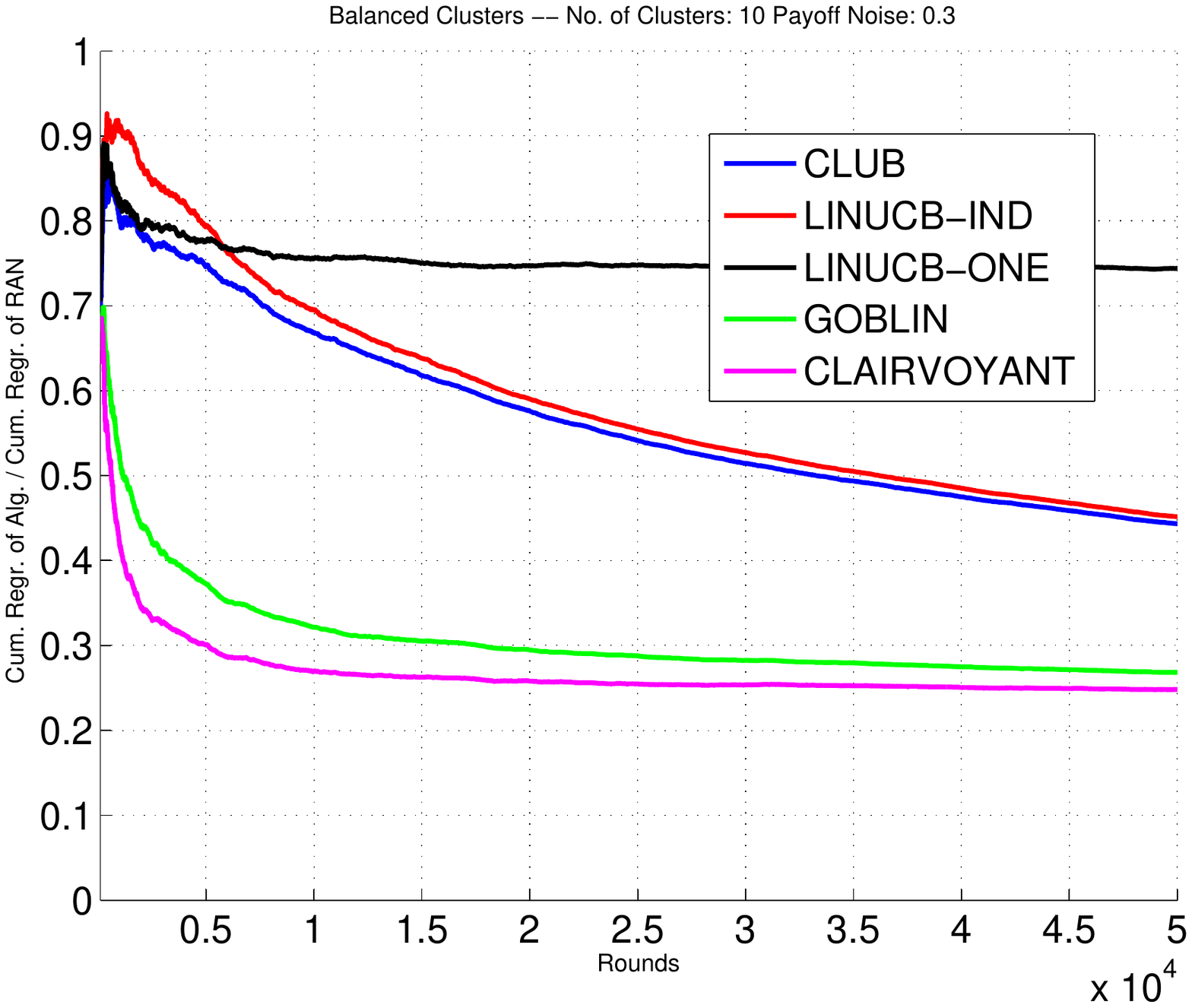}\\[-2.0in]
\includegraphics[width=0.23\textwidth]{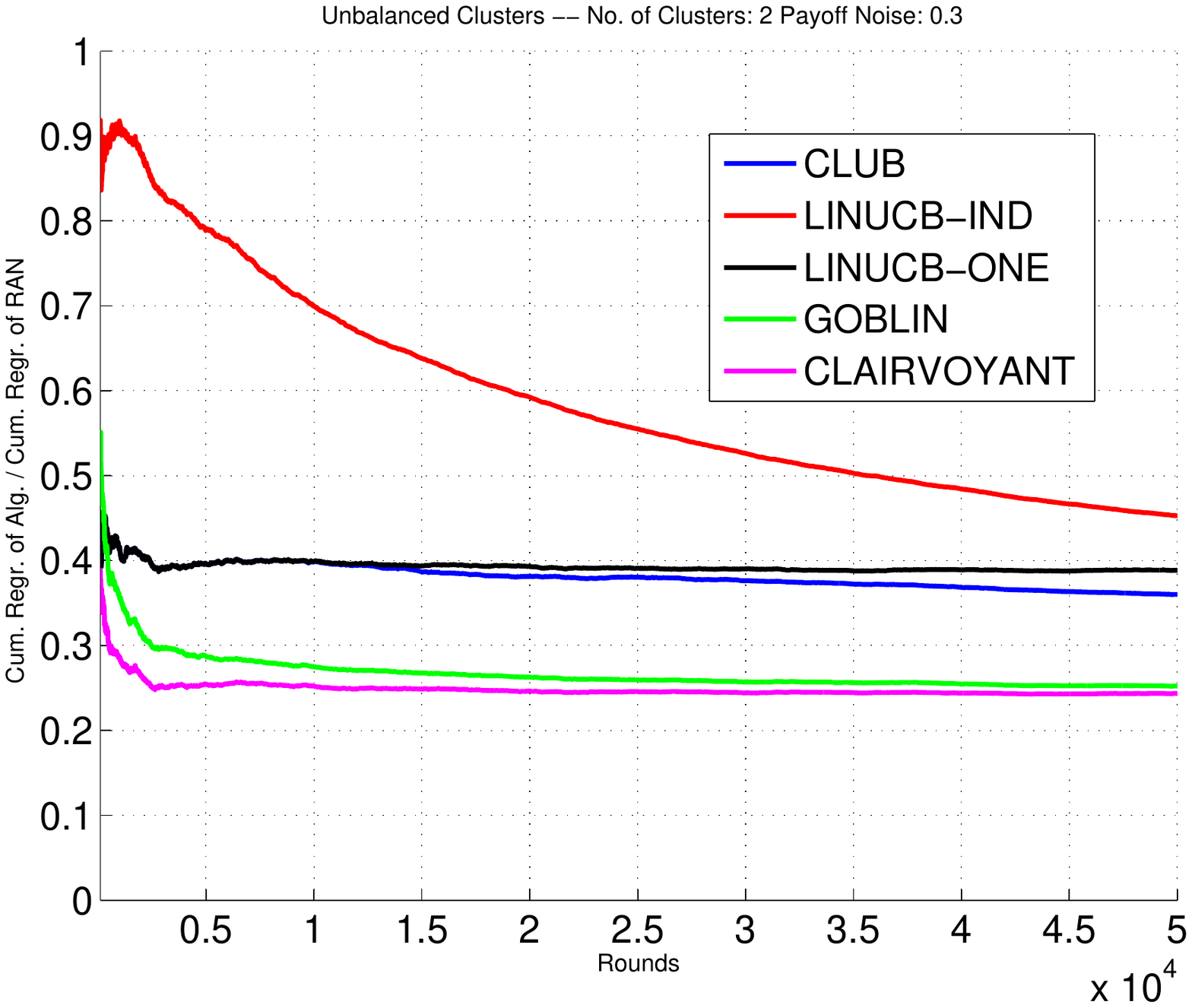}
& \includegraphics[width=0.23\textwidth]{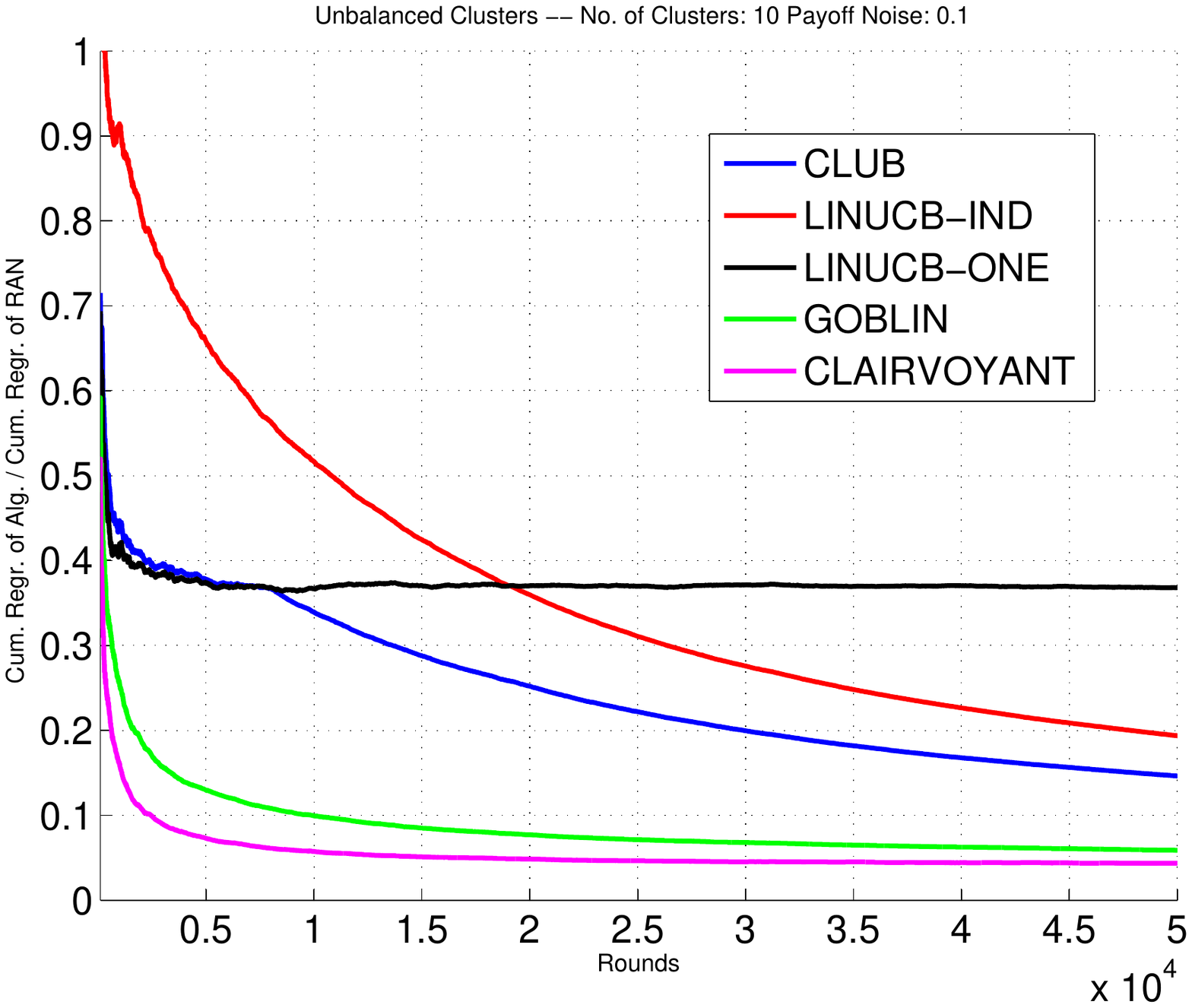}\\[-2.0in]
\end{tabular}
\end{picture}
\vspace{1.9in}
\caption{\label{fig:artificiala}Results on synthetic datasets. Each plot displays the behavior of
the ratio of the current cumulative regret of the algorithm (``Alg") to the current cumulative regret
of RAN, where ``Alg" is either ``CLUB" or ``LinUCB-IND" or ``LinUCB-ONE"
or ``GOBLIN''or ``CLAIRVOYANT''. 
In the top two plots cluster sizes are balanced ($z=0$), while in the bottom two they are unbalanced ($z=2$).
}
\vspace{-0.25in}
\end{figure}
\begin{figure}[t!]
\begin{picture}(6,22)(6,22)
\begin{tabular}{l@{\hspace{0.22pc}}l}
\includegraphics[width=0.234\textwidth]{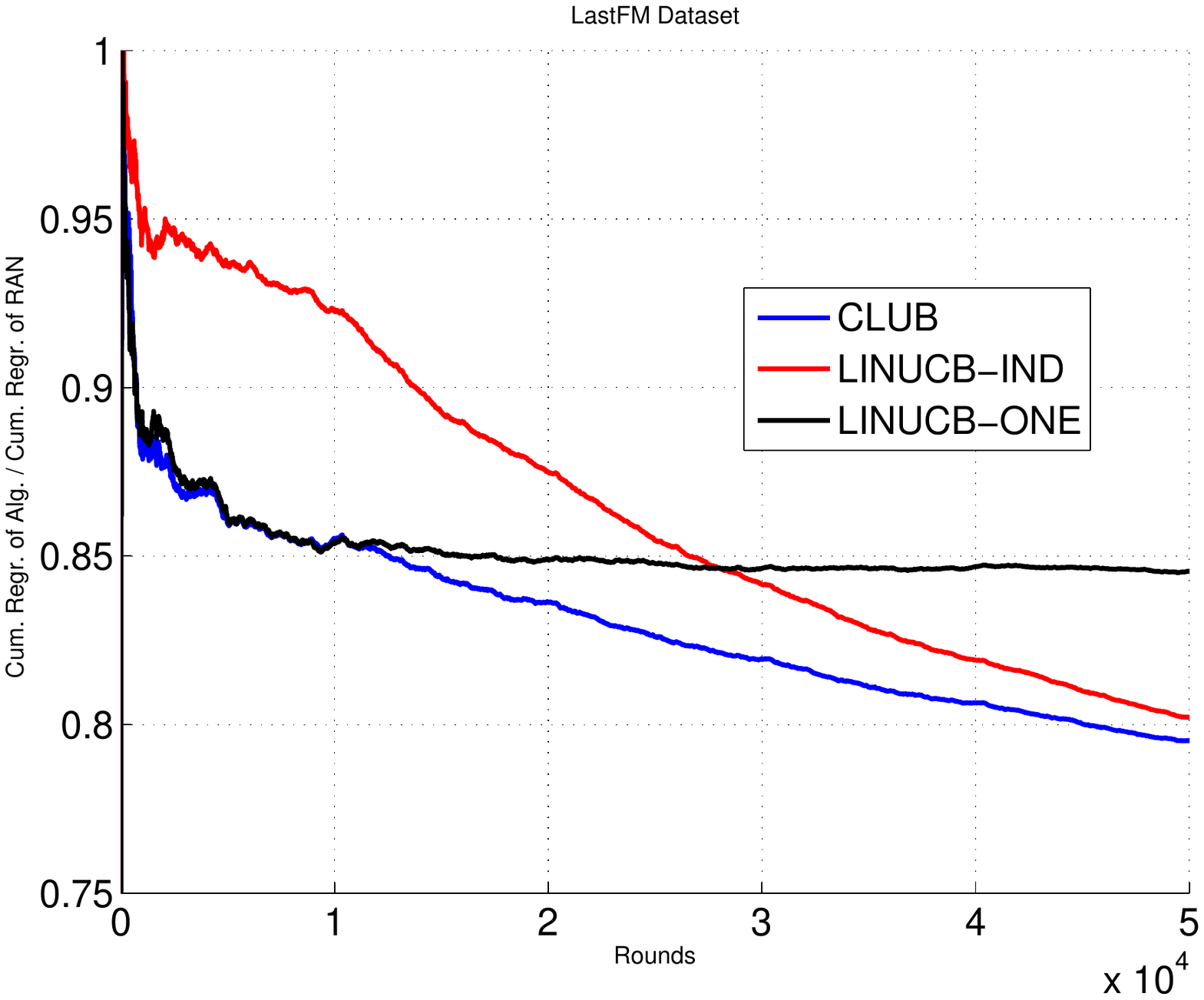}
& \includegraphics[width=0.234\textwidth]{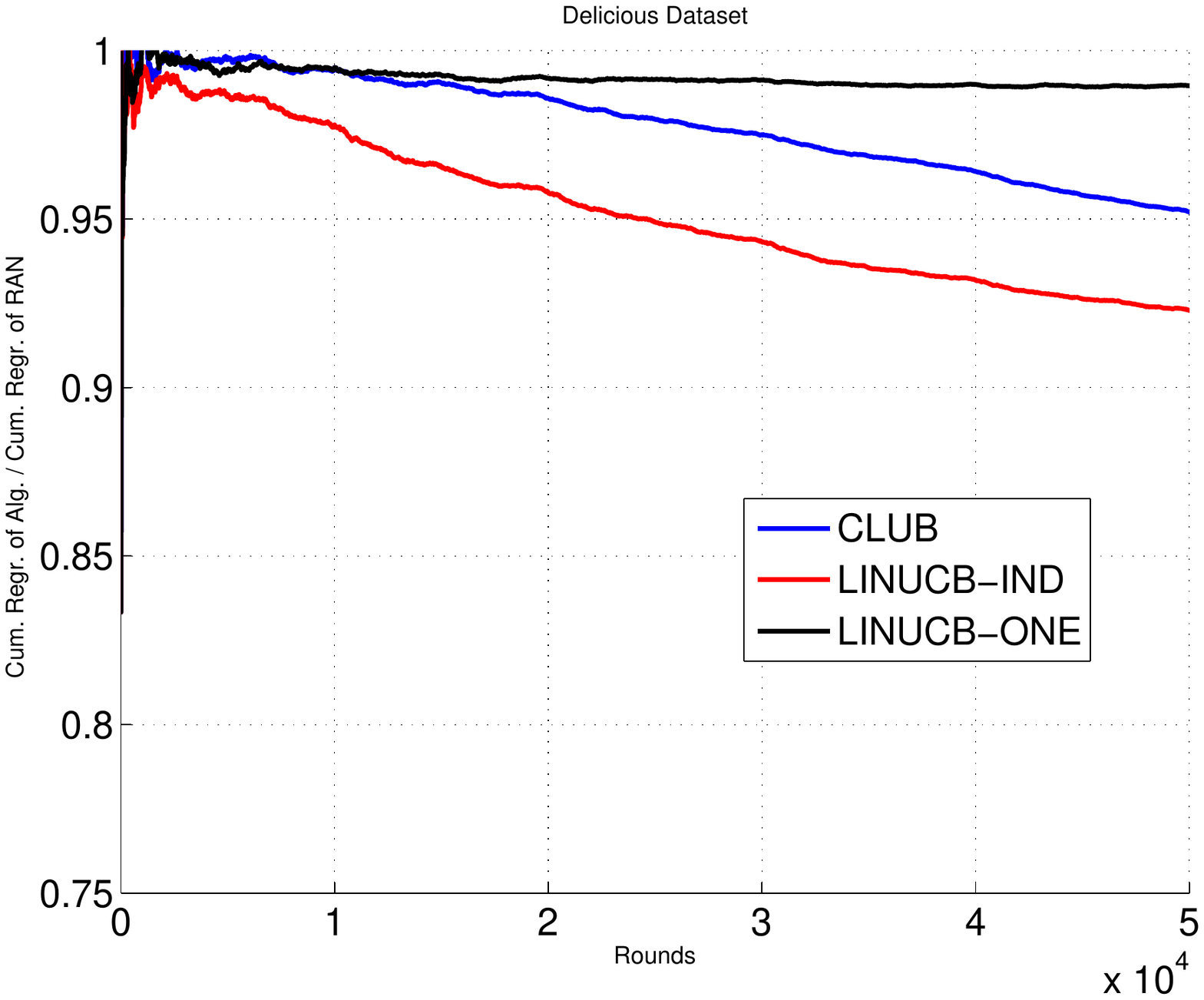}
\end{tabular}
\end{picture}
\vspace{0.85in}
\caption{\label{fig:last_del}Results on the LastFM (left) and the Delicious (right) datasets.
The two plots display the behavior of the ratio of the current cumulative regret of the algorithm
(``Alg") to the current cumulative regret of RAN, where ``Alg" is either ``CLUB" or ``LinUCB-IND"
or ``LinUCB-ONE".}
\vspace{-0.1in}
\end{figure}

\vspace{-0.05in}
Based on the experimental results, some trends can be spotted: 
On the \textbf{synthetic datasets}, CLUB always outperforms its uninformed competitors LinUCB-IND and
LinUCB-ONE, the gap getting larger as we either decrease the number of underlying clusters or we make the 
clusters sizes more and more unbalanced. Moreover, CLUB can clearly interpolate between these two competitors 
taking, in a sense, the best of both. On the other hand (and unsurprisingly), the informed competitors 
GOBLIN and CLEARVOYANT outperform all uninformed ones.
On the \textbf{``few hits" } scenario of LastFM, CLUB is again outperforming
both of its competitors. However, this is not happening in the \textbf{``many niches"} case
delivered by the Delicious dataset, where CLUB is clearly outperformed by LinUCB-IND.
%GZ: we shuld try CLUB with initial graph empty
The proposed alternative of CLUB that starts from an empty graph (Remark \ref{r:emptygraph})
might be an effective alternative in this case.
On the \textbf{Yahoo datasets} we extracted, CLUB tends to outperform its competitors, when
measured by CTR curves, thereby showing that clustering users solely based on past behavior can be beneficial.
In general, CLUB seems to benefit from situations where it is not immediately
clear which is the winner between the two extreme solutions (Lin)UCB-ONE and (Lin)UCB-IND,
and an adaptive interpolation between these two is needed.

% \item When the cluster are unbalanced (as often happen in the real world) CLUB
% performs better than its competitors
% \item CLUB can split more aggressively when the payoff noise is low
% \item As we explained in the previous sections CLUB can interpolate between
% the single predictor and $|V|$ independent predictors: in some extreme cases it
% has the same behaviour of one of them.
% \item In general CLUB benefits from situations where it is not immediatly clear
% which is the winner between LinUCB-SIN and LinUCB-IND

\begin{figure}[t!]
\begin{picture}(5,30)(5,30)
\begin{tabular}{l@{\hspace{0.25pc}}l}
\includegraphics[width=0.234\textwidth]{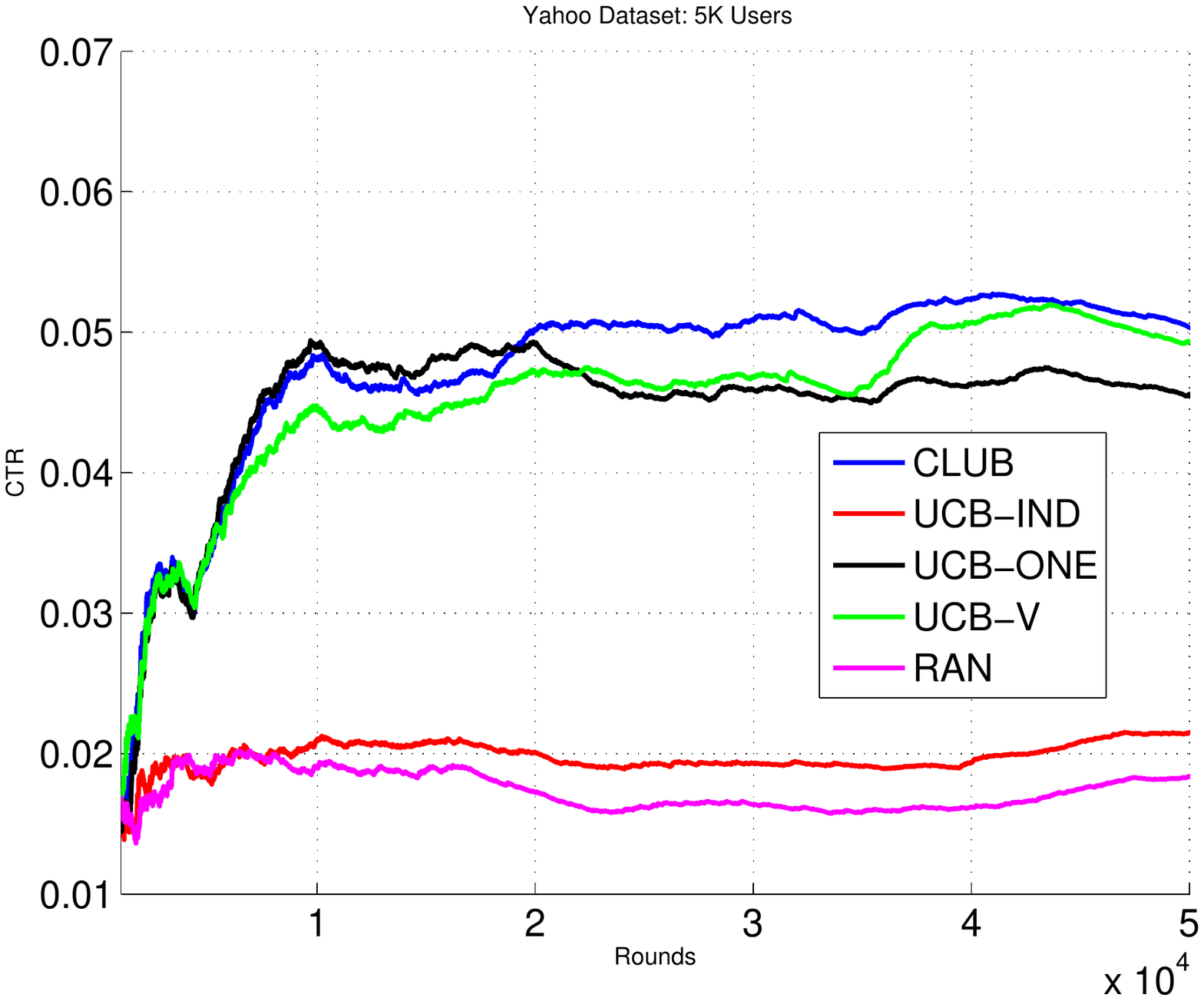}
& \includegraphics[width=0.234\textwidth]{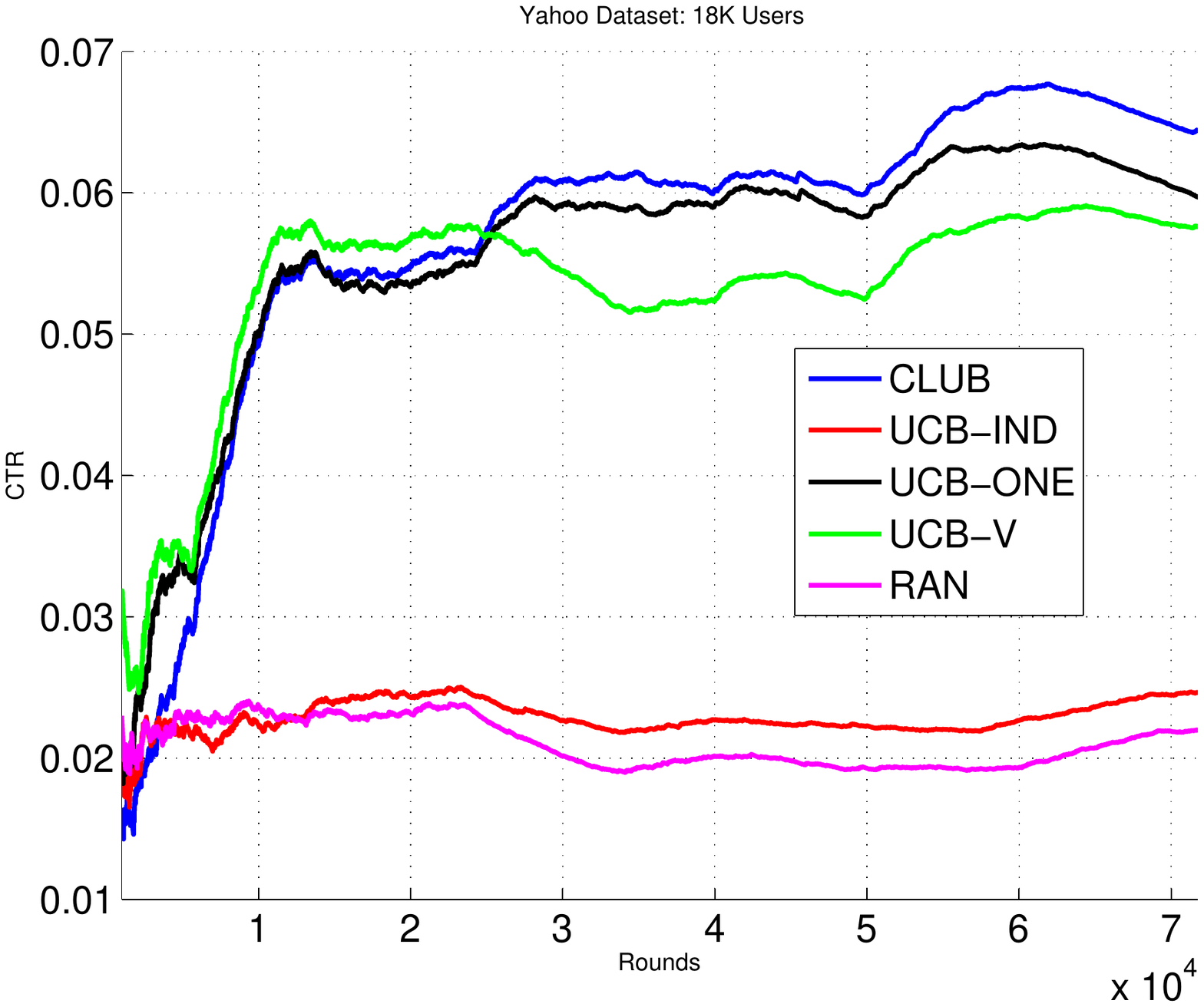}
\end{tabular}
\end{picture}
\vspace{0.93in}
\caption{\label{fig:yahoo}Plots on the Yahoo datasets
reporting Clickthrough Rate (CTR) over time, i.e., the fraction of times
the algorithm gets payoff one out of the number of retained records so far.}
\vspace{-0.1in}
\end{figure}

\section*{Acknowledgments}
We would like to thank the anonymous reviewers for their helpful and 
constructive comments. Also, the first and the second author acknowledge the %updated by shuai
support from Amazon AWS Award in Education Machine Learning Research Grant.
\newpage
%\begin{small}
\bibliographystyle{icml2014}
\bibliography{biblio}
%\end{small}

\newpage
%\newpage

\appendix

\section{Appendix}
This supplementary material contains all proofs and technical details omitted from the main text,
along with ancillary comments, discussion about related work, and extra experimental results.

\subsection{Proof of Theorem 1}
%
%This section contains the proof of Theorem $1$.
%
The following sequence of lemmas are of preliminary importance. The first one
needs extra variance conditions on the process $X$ generating the context vectors.
% It essentially says that (with high probability)
% the current number $m_t$ of
% clusters maintained by the CLUB algorithm cannot be larger then the number $m$ of true clusters.
%

We find it convenient to introduce the node counterpart to $\TCB_{j,t-1}(\bx)$, and
the cluster counterpart to $\sTCB_{i,t-1}$.
Given round $t$, node $i \in V$, and cluster index $j \in \{1, \ldots, m_t\}$, we let
\begin{align*}
\TCB_{i,t-1}(\bx) &=  \sqrt{\bx^\top M_{i,t-1}^{-1}\bx}\,\left(\sigma\sqrt{2\,\log \frac{|M_{i,t-1}|}{\delta/2}} + 1\right)\\
\sTCB_{j,t-1} &= \frac{\sigma\sqrt{2d\,\log t + 2\log (2/\delta)} + 1}
         {\sqrt{1+A_{\lambda}(\bT_{j,t-1}, \delta/(2^{m+1}d))}}~,
\end{align*}
being
\[
\bT_{j,t-1} = \sum_{i \in {\hat V_{j,t}}} T_{i,t-1} = |\{s \leq t-1\,:\, i_s \in {\hat V_{j,t}} \}|~,
\]
i.e.,
the number of past rounds where a node lying in cluster ${\hat V_{j,t}}$ was served. From a notational
standpoint, notice the difference\footnote
{
Also observe that $2nd$ has been replaced by $2^{m+1}d$ inside the log's.
}
between $\sTCB_{i,t-1}$ and $\TCB_{i,t-1}(\bx)$, both referring to a single node $i \in V$, and
$\sTCB_{j,t-1}$ and $\TCB_{j,t-1}(\bx)$ which refer to an aggregation (cluster) of nodes
$j$ among the available ones at time $t$.

\begin{lemma}\label{l:tcbupperbound}
Let, at each round $t$, context vectors $C_{i_t} = \{\bx_{t,1}, \ldots, \bx_{t,c_t}\}$ being generated
i.i.d. (conditioned on $i_t, c_t$ and all past indices $i_1, \ldots, i_{t-1}$,
rewards $a_1, \ldots, a_{t-1}$, and sets $C_{i_1}, \ldots, C_{i_{t-1}}$)
from a random process $X$ such that $||X|| = 1$, $\E[XX^\top]$ is full rank, with minimal eigenvalue
$\lambda > 0$. Let also, for any fixed unit vector $\bz \in \R^d$, the random variable
\(
(\bz^\top X)^2
\)
be (conditionally) sub-Gaussian with variance parameter\footnote
{
Random variable $(\bz^\top X)^2$ is conditionally sub-Gaussian with variance parameter $\sigma^2 > 0$ when
$\E_t\bigl[\exp(\gamma\,(\bz^\top X)^2)|\,c_t\,\bigr] \leq \exp\bigl(\sigma^2\,\gamma^2/2\bigr)$
for all $\gamma \in \R$.
The sub-Gaussian assumption can be removed here at the cost of assuming the conditional variance
of $(\bz^\top X)^2$ scales with $c_t$ like $\frac{\lambda^2}{c_t}$, instead of $\frac{\lambda^2}{\log (c_t)}$.
}
\[
\nu^2 = \Var_t\bigl[(\bz^\top X)^2\,|\,c_t\,\bigr]
\leq \frac{\lambda^2}{8\log (4c_t)}\quad \forall t~.
\]
Then
\[
\TCB_{i,t}(\bx) \leq \sTCB_{i,t}
\]
holds with probability at least $1-\delta/2$, uniformly over $i \in V$, $t = 0, 1, 2 \ldots $,
and $\bx \in \R^d$ such that $||\bx|| =1$.
\end{lemma}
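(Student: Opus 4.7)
Writing out the two sides, what has to be shown is
\[
\sqrt{\bx^\top M_{i,t-1}^{-1}\bx}\,\Bigl(\sigma\sqrt{2\log(2|M_{i,t-1}|/\delta)}+1\Bigl)
\;\le\;
\frac{\sigma\sqrt{2d\log t+2\log(2/\delta)}+1}{\sqrt{1+A_{\lambda}(T_{i,t-1},\delta/(2nd))}}\,,
\]
and my plan is to attack the numerator and the denominator separately. The numerator reduces to a deterministic determinant bound. Since $M_{i,t-1}=I+\sum_{s:\,i_s=i}\bar\bx_s\bar\bx_s^\top$ with $\|\bar\bx_s\|\le 1$, we have $M_{i,t-1}\preceq(1+T_{i,t-1})I$, so $|M_{i,t-1}|\le(1+T_{i,t-1})^d\le(1+t)^d$. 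Plugging in and absorbing the additive constants in the $\log(2/\delta)$ term gives $\sigma\sqrt{2\log(2|M_{i,t-1}|/\delta)}+1\le\sigma\sqrt{2d\log t+2\log(2/\delta)}+1$, exactly the numerator of $\sTCB_{i,t-1}$.

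The denominator is the probabilistic heart. It suffices to show, with the claimed confidence, that $\lambda_{\min}(M_{i,t-1})\ge 1+A_\lambda(T_{i,t-1},\delta/(2nd))$, equivalently $\lambda_{\min}\bigl(\sum_{s:\,i_s=i,\,s<t}\bar\bx_s\bar\bx_s^\top\bigr)\ge A_\lambda(T_{i,t-1},\delta/(2nd))$. The first step is to use the sub-Gaussian hypothesis to certify that, on each round $s$ with $i_s=i$ and conditionally on the past, \emph{irrespective of which $k_s$ the algorithm picks}, the selected $\bar\bx_s$ satisfies $(\bz^\top\bar\bx_s)^2\ge\lambda/2$ with conditional probability at least $3/4$, for any fixed unit vector $\bz$. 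Indeed, $\mu:=\E_s[(\bz^\top X)^2]\ge\lambda$ since $\E[XX^\top]\succeq\lambda I$, so by the sub-Gaussian tail and a union bound over the $c_t\le c$ i.i.d.~samples in $C_{i_s}$,
\[
\Pr\bigl(\min_k (\bz^\top\bx_{s,k})^2\le\lambda/2\bigr)\le c_t\exp\!\bigl(-\lambda^2/(8\nu^2)\bigr)\le c_t\exp(-\log(4c_t))=1/4,
\]
using the assumption $\nu^2\le\lambda^2/(8\log(4c_t))$; and clearly $(\bz^\top\bar\bx_s)^2\ge\min_k(\bz^\top\bx_{s,k})^2$. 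Taking expectation this gives the PSD bound $\E_s[\bar\bx_s\bar\bx_s^\top\mid i_s=i]\succeq(3\lambda/8)I$.

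Second, I would lift this one-step bound to the sum via a matrix Freedman/Bernstein inequality applied to the PSD matrix martingale $\{\bar\bx_s\bar\bx_s^\top-\E_s[\bar\bx_s\bar\bx_s^\top]\}_{s\in S_i}$ (where $S_i=\{s\le t-1:i_s=i\}$). With per-step norm $\le 1$ and predictable quadratic variation $\le T_{i,t-1}$, the matrix concentration yields
\[
\lambda_{\min}\!\Bigl(\textstyle\sum_{s\in S_i}\bar\bx_s\bar\bx_s^\top\Bigl)\;\ge\;\tfrac{3\lambda}{8}T_{i,t-1}-c_1\sqrt{T_{i,t-1}\log(d/\delta')}-c_2\log(d/\delta')
\]
with failure probability $\le d\cdot\delta'$ (the dimension factor is the standard matrix-concentration price for controlling the whole spectrum at once, which is what removes the need for a net argument over $\bx$). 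Choosing $\delta'=\delta/(2nd)$ absorbs both the $d$ of matrix concentration and a union bound over the $n$ users, and slackening the leading $3\lambda/8$ down to $\lambda/4$ to tidy constants produces exactly the expression $A_\lambda(T_{i,t-1},\delta/(2nd))$. A final union bound over $t$ (inside the $\log((T+3)/\delta')$) then makes the bound hold simultaneously in $(i,t,\bx)$ at overall confidence $1-\delta/2$.

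\paragraph{Main obstacle.}
The subtle point is step one: the algorithm picks $\bar\bx_s=\bx_{s,k_s}\in C_{i_s}$ in a way that may be adversarial relative to any given test direction $\bz$, so an unconditional lower bound on $(\bz^\top\bar\bx_s)^2$ must hold for the \emph{worst} of the $c_t$ candidates. The sub-Gaussian variance condition $\nu^2\le\lambda^2/(8\log(4c_t))$ is engineered precisely so that a union bound of size $c_t$ still leaves $\lambda/2$ of slack below the mean $\lambda$, so that no matter what the policy does, every selected $\bar\bx_s$ contributes at least $\lambda/2$ to $(\bz^\top\bar\bx_s)^2$ with constant probability. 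Once this uniform-in-$k$ control is in place, the remaining matrix-martingale concentration is routine.
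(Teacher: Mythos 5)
Your proposal is correct and follows essentially the same route as the paper's proof: a deterministic determinant bound for the numerator, and for the denominator a lower bound on $\lambda_{\min}\bigl(\sum_{s:\,i_s=i}\bar\bx_s\bar\bx_s^\top\bigr)$ obtained by (i) the min-over-candidates argument showing $\E_s[\bar\bx_s\bar\bx_s^\top]\succeq c\,\lambda I$ regardless of the selection rule (the paper gets $\lambda/4$ by bounding the expected minimum directly, you get $3\lambda/8$ via a union bound over the $c_t$ candidates — an immaterial difference), and (ii) a matrix Freedman/Oliveira--Tropp inequality for the optional-skipping martingale, followed by the same $\delta/(2nd)$ calibration and union over users and rounds.
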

%\noindent{\bf Proof of Lemma \ref{l:tcbupperbound}.}
\begin{proof}
Fix node $i \in V$ and round $t$.
By the very way the algorithm in Figure $1$ is defined, we have
\[
M_{i,t} = I + \sum_{s \leq t\,:\, i_s = i} {\bar \bx_s}{\bar \bx_s}^\top = I+ S_{i,t}~.
\]
First, notice that by standard arguments (e.g., \cite{dgs10}) we have
\[
\log |M_{i,t}| \leq d\,\log(1+T_{i,t}/d) \leq d\,\log(1+t)~.
\]
Moreover, denoting by $\lambda_{\max}(\cdot)$ and $\lambda_{\min}(\cdot)$ the maximal and the minimal eigenvalue
of the matrix at argument we have that, for any fixed unit norm $\bx \in \R^d$,
\[
\bx^\top M^{-1}_{i,t}\bx \leq \lambda_{\max}(M^{-1}_{i,t}) = \frac{1}{1+\lambda_{\min}(S_{i,t})}~.
\]
Hence, we want to show with probability at least $1-\delta/(2n)$ that
\begin{equation}
\label{e:boundgoal}
\begin{split}
\lambda_{\min}(S_{i,t}) \geq\
&\lambda T_{i,t}/4 - 8 \log \left( \frac{T_{i,t}+3}{\delta/(2nd)} \right) \\
&- 2 \sqrt{T_{i,t} \log\left(\frac{T_{i,t}+3}{\delta/(2nd)}\right)}
\end{split}
\end{equation}
holds for any fixed node $i$.
To this end, fix a unit norm vector $\bz \in \R^d$, a round $s \leq t$, and consider the variable
\begin{equation}
\begin{split}
V_s &= \bz^\top \left({\bar \bx_s}{\bar \bx_s}^\top - \E_s[{\bar \bx_s}{\bar \bx_s}^\top\,|\, c_s]\right)\bz \nonumber \\
&=
(\bz^\top {\bar \bx_s})^2 - \E_s[(\bz^\top {\bar \bx_s})^2\,|\, c_s]~.
\end{split}
\end{equation}
%\]
The sequence $V_1, V_2, \ldots, V_{T_{i,t}}$ is a martingale difference sequence, with optional skipping,
where $T_{i,t}$ is a stopping time.\footnote
{
More precisely, we are implicitly considering the sequence
$\eta_{i,1}V_1, \eta_{i,2}V_2, \ldots, \eta_{i,t}V_{t}$, where $\eta_{i,s} = 1$ if $i_s = i$, and
0 otherwise, with $T_{i,t} = \sum_{s=1}^t  \eta_{i,s}$.
}
Moreover, the following claim holds.
\begin{claim}\label{cl:1}
Under the assumption of this lemma, \\\\
\centerline{$\E_s[(\bz^\top {\bar \bx_s})^2\,|\, c_s] \geq \lambda/4$~.}
\end{claim}
{\em Proof of claim.} Let\footnote
{
This proof is based on standard arguments, and is reported here for the sake of completeness.
}
in round $s$ the context vectors be $C_{i_s} = \{\bx_{s,1}, \ldots, \bx_{s,c_s}\}$,
and consider the corresponding i.i.d. random variables
$Z_i = (\bz^\top\bx_{s,i})^2 - \E_s[(\bz^\top\bx_{s,i})^2\,|\,c_s],\ i = 1, \ldots, c_s$.
Since by assumption these variables are (zero-mean) sub-Gaussian, we have that (see, e.g., \cite{massart}[Ch.2])
\[
\Pr_s\left(Z_i < -a \,|\, c_t\right) \leq \Pr_s\left(|Z_i| > a \,|\, c_t\right) \leq 2e^{-a^2/2\nu^2}.
\]
holds for any $i$, where $\Pr_s(\cdot)$ is the shorthand for the conditional probability
\[
\Pr\left( \,\cdot\,\big|\, (i_1, C_{i_1}, a_1), \ldots, (i_{s-1}, C_{i_{s-1}}, a_{s-1}), i_s\, \right)~.
\]
The above implies
\begin{align*}
\Pr_s&\left( \min_{i = 1, \ldots, c_s} (\bz^\top\bx_{s,i})^2 \geq \lambda -a \,\Big|\, c_t\right) \\
&\qquad \qquad \qquad \qquad \geq \left(1-2e^{-a^2/2\nu^2}\right)^{c_s}~.
\end{align*}
Therefore
\begin{equation}
\begin{split}
\E_s[(\bz^\top {\bar \bx_s})^2\,|\, c_s]
&\geq
\E_s\left[  \min_{i = 1, \ldots, c_s}  (\bz^\top\bx_{s,i})^2\,\Big|\,c_s\right] \nonumber \\
&\geq
(\lambda-a)\, \left(1-2e^{-a^2/2\nu^2}\right)^{c_s}.
\end{split}
\end{equation}
Since this holds for all $a \in \R$, we set $a = \sqrt{2\nu^2\log(4c_s)}$ to get
$\left(1-2e^{-a^2/2\nu^2}\right)^{c_s} = (1-\frac{1}{2c_s})^{c_s} \geq 1/2$ (because $c_s \geq 1$),
and $\lambda - a \geq \lambda/2$ (because of the assumption on $\nu^2$). Putting together concludes
the proof of the claim. \qed

We are now in a position to apply a Freedman-like inequality for matrix martingales due to \cite{oliveira10, tropp11}
to the (matrix) martingale difference sequence
\[
\E_1[{\bar \bx_1}{\bar \bx_1}^\top\,|\, c_1] - {\bar \bx_1}{\bar \bx_1}^\top,\
\E_2[{\bar \bx_2}{\bar \bx_2}^\top\,|\, c_2] - {\bar \bx_2}{\bar \bx_2}^\top, \ldots ~
\]
with optional skipping.
Setting for brevity $X_s = {\bar \bx_s}{\bar \bx_s}^\top$, and
\[
W_t = \sum_{s \leq t\,:\, i_s = i} \left( \E_{s}[X^2_s\,|\, c_s] - \E^2_{s}[X_s\,|\, c_s]\right)~,
\]
Theorem 1.2 in \cite{tropp11} implies
\begin{align}
\label{e:tropp}
\Pr\Bigl( \exists t\,:\,\lambda_{\min} \left(S_{i,t} \right) \leq T_{i,t}\lambda_{\min}(\E_1[X_1\,|\,c_1]) &- a, ||W_t|| \leq \sigma^2 \Bigl) \notag \\
&\leq d\,e^{-\frac{a^2/2}{\sigma^2+2a/3}}~.
\end{align}
where $||W_t||$ denotes the operator norm of matrix $W_t$.

We apply Claim \ref{cl:1}, so that $\lambda_{\min}(\E_1[X_1\,|\,c_1]) \geq \lambda/4$,
and proceed as in, e.g., \cite{cbg08-ieee}. We set for brevity $A(x,\delta) = 2\log\frac{(x+1)(x+3)}{\delta}$,
and $f(A,r) = 2A + \sqrt{Ar}$. We can write
\begin{align*}
\Pr\Bigl(\exists t\,:\, \lambda_{\min} (S_{i,t}) \leq \lambda_{\min}T_{i,t}/4 - f(A(||W_t||,\delta),||W_t||)\Bigl)  \\
\leq \sum_{r=0}^{\infty} \Pr \Bigl( \exists t\,:\, \lambda_{\min} (S_{i,t}) \leq \lambda_{\min}T_{i,t}/4 - f(A(r,\delta),r),  \\
 \lfloor ||W_t||\rfloor = r\Bigl)  \\
\leq
\sum_{r=0}^{\infty}
\Pr\Bigl(\exists t\,:\, \lambda_{\min} \left(S_{i,t}\right) \leq \lambda_{\min}T_{i,t}/4 - f(A(r,\delta),r),  \\
||W_t||\leq  r + 1 \Bigl)\\
\leq d\,\sum_{r=0}^{\infty} e^{-\frac{f^2(A(r,\delta),r)/2}{r+1+2f(A(r,\delta),r)/3}}~,\qquad\qquad\qquad\qquad\qquad\quad
\end{align*}
the last inequality deriving from (\ref{e:tropp}). Because $f(A,r)$ satisfies
$f^2(A,r) \geq Ar + A+ \frac{2}{3}f(A,r)A$, we have that the exponent in the last exponential is at least $A(r,\delta)/2$,
implying
\[
\sum_{r=0}^{\infty} e^{-A(r,\delta)/2} = \sum_{r=0}^{\infty} \frac{\delta}{(r+1)(r+3)} < \delta
\]
which, in turn, yields
\begin{equation*}
\begin{split}
\Pr\Bigl(\exists t\,:\, \lambda_{\min} (S_{i,t}) &\leq T_{i,t}\lambda_{\min}/4 \\
 &\quad - f(A(||W_t||,\delta/d),||W_t||)\Bigl) \\
&\leq \delta~.
\end{split}
\end{equation*}
Finally, observe that
\begin{align*}
||W_t||
&\leq \sum_{s \leq t\,:\, i_s = i} ||\E_{s}[X^2_s\,|\, c_s]|| \\
&= \sum_{s \leq t\,:\, i_s = i} ||\E_{s}[X_s\,|\, c_s]|| \\
&\leq \sum_{s \leq t\,:\, i_s = i} \E_{s}[||X_s\,|\, c_s||] \\
&\leq T_{i,t}~.
\end{align*}
Therefore we conclude
\begin{align*}
\Pr\Bigl(\forall t\,:\,\lambda_{\min} (S_{i,t}) &\geq \lambda_{\min}T_{i,t}/4
- f(A(T_{i,t},\delta/d),T_{i,t})\Bigl)\\
 &\geq 1- \delta~.
\end{align*}
Stratifying over $i \in V$, replacing $\delta$ by $\delta/(2n)$ in the last inequality, and
overapproximating proves the lemma.
\end{proof}

\begin{lemma}\label{l:distanceupperbound}
Under the same assumptions as in Lemma \ref{l:tcbupperbound}, we have
\[
||\bu_i-\bw_{i,t}|| \leq \sTCB_{i,t}
\]
holds with probability at least $1-\delta$, uniformly over $i \in V$, and $t = 0, 1, 2, \ldots $.
%and $\bx \in \R^d$ such that $||\bx|| =1$.
\end{lemma}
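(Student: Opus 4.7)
The plan is to combine a standard self-normalized ridge-regression confidence bound (as in Abbasi-Yadkori et al., 2011) with the minimum-eigenvalue lower bound that was the heart of Lemma~1's proof, then convert the resulting $M_{i,t}$-weighted confidence bound into a Euclidean one.

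First I would recall that, by construction, $\bw_{i,t} = M_{i,t}^{-1}\bb_{i,t}$ is precisely the ridge estimator based on the subsequence of rounds $\{s \le t : i_s = i\}$, on which the observed payoffs take the form $a_s = \bu_i^\top\bar\bx_s + \epsilon_{j(i)}(\bar\bx_s)$ with conditionally zero-mean, sub-Gaussian (bounded $[-1,1]$) noise. Applying Theorem~2 of Abbasi-Yadkori et al.\ to this ridge regression, and stratifying over $i \in V$ by replacing $\delta$ by $\delta/(2n)$ and further by a union bound to $\delta/2$ over the two events I need, yields that uniformly in $t$,
\[
\|\bu_i - \bw_{i,t}\|_{M_{i,t}}
\;\le\; \sigma\sqrt{2\log\tfrac{|M_{i,t}|}{\delta/2}} + 1
\]
holds for every $i \in V$ with probability at least $1 - \delta/2$. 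Using the standard determinant-trace bound $\log|M_{i,t}| \le d\log(1 + T_{i,t}/d) \le d\log(1+t)$ already noted in Lemma~1, I can upper bound the RHS by $\sigma\sqrt{2d\log t + 2\log(2/\delta)} + 1$, which is precisely the numerator of $\sTCB_{i,t}$.

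Next I would pass from the $M_{i,t}$-weighted norm to the Euclidean one via the inequality
\[
\|\bu_i - \bw_{i,t}\|
\;\le\;
\frac{\|\bu_i - \bw_{i,t}\|_{M_{i,t}}}{\sqrt{\lambda_{\min}(M_{i,t})}}.
\]
Since $M_{i,t} = I + S_{i,t}$, we have $\lambda_{\min}(M_{i,t}) \ge 1 + \lambda_{\min}(S_{i,t})$. This is exactly where Lemma~1's proof is leveraged: the Freedman-type matrix-martingale argument there, applied with confidence parameter $\delta/(2nd)$ and stratified over $i \in V$, shows that
\[
\lambda_{\min}(S_{i,t}) \;\ge\; A_\lambda(T_{i,t}, \delta/(2nd))
\]
holds uniformly over $i$ and $t$ with probability at least $1 - \delta/2$.

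Finally, combining the two bounds on the same high-probability event (obtained by a union bound over the two failure events, each of mass $\delta/2$) gives
\[
\|\bu_i - \bw_{i,t}\|
\;\le\;
\frac{\sigma\sqrt{2d\log t + 2\log(2/\delta)} + 1}{\sqrt{1 + A_\lambda(T_{i,t}, \delta/(2nd))}}
\;=\; \sTCB_{i,t},
\]
uniformly in $i \in V$ and $t \ge 0$, which is the claim. The only real subtlety I expect is the careful bookkeeping of the two independent confidence-parameter choices (one at $\delta/(2n)$ for the self-normalized bound, one at $\delta/(2nd)$ for the eigenvalue bound) and the accompanying union bound; everything else is a direct composition of the Abbasi-Yadkori self-normalized inequality with the matrix-Freedman bound already established in Lemma~1.
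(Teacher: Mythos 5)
Your proposal is correct and follows essentially the same route as the paper: the self-normalized confidence ellipsoid of Abbasi-Yadkori et al.\ for the per-node ridge estimator, combined with the determinant bound and the matrix-Freedman lower bound on $\lambda_{\min}(S_{i,t})$ established in Lemma~\ref{l:tcbupperbound}. The only cosmetic difference is that you convert from the $M_{i,t}$-weighted norm to the Euclidean norm via $\lambda_{\min}(M_{i,t})^{-1/2}$ directly, whereas the paper writes $||\bu_i-\bw_{i,t}|| \leq \max_{||\bx||=1}\TCB_{i,t}(\bx) \leq \sTCB_{i,t}$ and lets Lemma~\ref{l:tcbupperbound} absorb that same eigenvalue step, the two being identical since $\max_{||\bx||=1}\sqrt{\bx^\top M_{i,t}^{-1}\bx} = \lambda_{\min}(M_{i,t})^{-1/2}$.
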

%
%\noindent{\bf Proof of Lemma \ref{l:distanceupperbound}.}
\begin{proof}
From \cite{abbasi2011improved} it follows that
%\footnote
%{
%In fact, translated into our notation, the analysis in \cite{abbasi-yadkori2011} does actually refer
%to the unprojected vectors $\bw'_{i,t}$, rather than the projected ones. It is not hard to see that
%the projection step $\bw'_{i,t} \rightarrow \bw_{i,t}$ does not hurt this approximation. See, e.g.,
%\cite{cg13,go12}.
%}
\[
|\bu_{i}^\top\bx -\bw_{i,t}^\top\bx| \leq \TCB_{i,t}(\bx)
\]
holds with probability at least $1-\delta/2$, uniformly over $i \in V$, $t = 0, 1, 2, \ldots$.
and $\bx \in \R^d$. Hence,
\begin{align*}
||\bu_i-\bw_{i,t}|| &\leq \max_{\bx \in \R^d\,:\, ||\bx|| = 1} |\bu_{i}^\top\bx -\bw_{i,t}^\top\bx| \\
&\leq \max_{\bx \in \R^d\,:\, ||\bx|| = 1} \TCB_{i,t}(\bx) \\
&\leq \sTCB_{i,t}~,
\end{align*}
the last inequality holding with probability $\geq 1-\delta/2$ by Lemma \ref{l:tcbupperbound}.
This concludes the proof.
\end{proof}

\begin{lemma}\label{l:edgedelete}
Under the same assumptions as in Lemma \ref{l:tcbupperbound}:
\begin{enumerate}
\item If $||\bu_i - \bu_j|| \geq \gamma$ and $\sTCB_{i,t} + \sTCB_{j,t} < \gamma/2$ then
\[
||\bw_{i,t}-\bw_{j,t}|| > \sTCB_{i,t} + \sTCB_{j,t}
\]
holds with probability at least $1-\delta$, uniformly over $i,j \in V$ and $t = 0, 1, 2, \ldots $;
\item if $||\bw_{i,t} - \bw_{j,t}|| > \sTCB_{i,t} + \sTCB_{j,t}$ then
\[
||\bu_{i}-\bu_{j}|| \geq \gamma
\]
holds with probability at least $1-\delta$, uniformly over $i,j \in V$ and $t = 0, 1, 2, \ldots $.
\end{enumerate}
\end{lemma}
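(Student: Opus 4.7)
The plan is to derive both parts as direct consequences of Lemma~\ref{l:distanceupperbound} (the concentration bound $||\bu_i - \bw_{i,t}|| \leq \sTCB_{i,t}$) combined with the triangle inequality. Both statements can be established on the single high-probability event furnished by Lemma~\ref{l:distanceupperbound}, so no further union bound is needed and the failure probability stays at $\delta$.

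For part (1), I would use the reverse triangle inequality to write
\[
||\bw_{i,t} - \bw_{j,t}|| \;\geq\; ||\bu_i - \bu_j|| \;-\; ||\bu_i - \bw_{i,t}|| \;-\; ||\bu_j - \bw_{j,t}||.
\]
Invoking Lemma~\ref{l:distanceupperbound} on each of the last two terms and the hypothesis $||\bu_i-\bu_j|| \geq \gamma$ yields $||\bw_{i,t}-\bw_{j,t}|| \geq \gamma - \sTCB_{i,t} - \sTCB_{j,t}$. Combined with the assumption $\sTCB_{i,t}+\sTCB_{j,t} < \gamma/2$, this gives $||\bw_{i,t}-\bw_{j,t}|| > \gamma/2 > \sTCB_{i,t}+\sTCB_{j,t}$, which is the claim.

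For part (2), the cleanest approach is by contrapositive. Suppose $||\bu_i-\bu_j|| < \gamma$. The well-separatedness assumption $||\bu_{j'}-\bu_{j''}|| \geq \gamma$ for $j'\neq j''$ forces $i$ and $j$ to belong to the same true cluster, hence $\bu_i=\bu_j$. Then the forward triangle inequality gives
\[
||\bw_{i,t} - \bw_{j,t}|| \;\leq\; ||\bw_{i,t} - \bu_i|| + ||\bu_j - \bw_{j,t}|| \;\leq\; \sTCB_{i,t} + \sTCB_{j,t},
\]
using Lemma~\ref{l:distanceupperbound} for each summand. This contradicts the hypothesis $||\bw_{i,t}-\bw_{j,t}|| > \sTCB_{i,t}+\sTCB_{j,t}$, so we must have $||\bu_i-\bu_j|| \geq \gamma$.

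There is no substantive obstacle: the lemma is essentially a packaging of the Lemma~\ref{l:distanceupperbound} guarantee through two applications of the triangle inequality, and the separatedness assumption is exactly what converts the negation $||\bu_i-\bu_j||<\gamma$ into the strict equality $\bu_i=\bu_j$ used in part (2). The only subtle point worth explicitly noting is that both parts are asserted on the same event of probability at least $1-\delta$, inherited verbatim from Lemma~\ref{l:distanceupperbound}, so that the two conclusions hold jointly, uniformly over $i,j \in V$ and $t=0,1,2,\dots$.
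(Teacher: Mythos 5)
Your proposal is correct and follows essentially the same route as the paper: both parts reduce to the concentration bound of Lemma~\ref{l:distanceupperbound} plus the triangle inequality, with the well-separatedness assumption upgrading $||\bu_i-\bu_j||>0$ (equivalently, ruling out $||\bu_i-\bu_j||<\gamma$ in your contrapositive phrasing) to $||\bu_i-\bu_j||\geq\gamma$. The only cosmetic difference is that the paper proves part 2 directly rather than by contraposition, which is logically the same argument.
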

%
%\noindent{\bf Proof of Lemma \ref{l:edgedelete}.}
\begin{proof}
\begin{enumerate}
\item We have
\begin{align}
\gamma
& \leq ||\bu_i - \bu_j|| \notag\\
& = ||\bu_i - \bw_{i,t} + \bw_{i,t} - \bw_{j,t} + \bw_{j,t} - \bu_j||\notag\\
& \leq ||\bu_i - \bw_{i,t}|| + ||\bw_{i,t} - \bw_{j,t}|| + ||\bw_{j,t} - \bu_j||\notag\\
& \leq \sTCB_{i,t} + ||\bw_{i,t} - \bw_{j,t}|| + \sTCB_{j,t}\notag\\
&\ \ \ \ \ {\mbox{(from Lemma \ref{l:distanceupperbound})}}\notag\\
& \leq ||\bw_{i,t} - \bw_{j,t}|| + \gamma/2, \notag
\end{align}
i.e.,
\(
||\bw_{i,t} - \bw_{j,t}|| \geq \gamma/2 > \sTCB_{i,t} + \sTCB_{j,t}~.
\)
\item Similarly, we have
\begin{align}
\sTCB_{i,t} + \sTCB_{j,t}
& < ||\bw_{i,t} - \bw_{j,t}|| \notag\\
& \leq ||\bu_i - \bw_{i,t}|| + ||\bu_{i} - \bu_{j}|| \notag \\
& \quad + ||\bw_{j,t} - \bu_j||\notag\\
& \leq \sTCB_{i,t} + ||\bu_{i} - \bu_{j}|| + \sTCB_{j,t}~,\notag
\end{align}
implying $||\bu_i - \bu_j|| > 0$. By the well-separatedness assumption, it must be the case
that $||\bu_i - \bu_j|| \geq \gamma$.
\end{enumerate}
\end{proof}

From Lemma \ref{l:edgedelete}, it follows that if any two nodes $i$ and $j$
belong to different true clusters
and the upper confidence bounds $\sTCB_{i,t}$ and  $\sTCB_{j,t}$ are both small enough, then
it is very likely that edge $(i,j)$ will get deleted by the algorithm (Lemma \ref{l:edgedelete}, Item 1).
Conversely, if the
algorithm deletes an edge $(i,j)$, then it is very likely that the two involved nodes $i$ and $j$
belong to different true clusters (Lemma \ref{l:edgedelete}, Item 2).
Notice that, we have $E \subseteq E_t$ with high
probability for all $t$.
Because the clusters  $\hat V_{1,t}, \ldots, \hat V_{m_t,t}$ are induced by the connected components of
$G_t = (V,E_t)$, every true cluster $V_i$ must be entirely included (with high probability) in some
cluster $\hat V_{j,t}$.
Said differently, for all rounds $t$,
the partition of $V$ produced by $V_1, \ldots, V_{m}$ is likely to be a refinement of the one
produced by $\hat V_{1,t}, \ldots, \hat V_{m_t,t}$ (in passing, this also shows that, with high probability,
$m_t \leq m$ for all $t$). This is a key property to all our analysis.
See Figure 2 in the main text for reference.

\begin{lemma}\label{l:tcbupperbound_cluster}
Under the same assumptions as in Lemma \ref{l:tcbupperbound}, if $\hj_t$ is the index of the
current cluster node $i_t$ belongs to, then we have
\[
\TCB_{\hj_t,t-1}(\bx) \leq \sTCB_{\hj_t,t-1}
\]
holds with probability at least $1-\delta/2$, uniformly over all rounds
$t = 1, 2, \ldots $, and $\bx \in \R^d$ such that $||\bx|| =1$.
\end{lemma}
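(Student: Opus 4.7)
The plan is to mimic the proof of Lemma~\ref{l:tcbupperbound}, but apply the martingale concentration argument to the aggregated matrix $\bM_{\hj_t,t-1} = I + \sum_{s \leq t-1\,:\, i_s \in \hat V_{\hj_t,t}} {\bar \bx_s}{\bar \bx_s}^\top$ instead of the per-node matrix $M_{i,t-1}$. Writing $\bM_{\hj_t,t-1} = I + \bS_{\hj_t,t-1}$, the usual determinant bound gives $\log|\bM_{\hj_t,t-1}| \leq d\log(1+\bT_{\hj_t,t-1}/d) \leq d\log(1+t)$, so the noise term in the definition of $\TCB_{\hj_t,t-1}(\bx)$ is at most $\sigma\sqrt{2d\log t + 2\log(2/\delta)} + 1$, which matches the numerator of $\sTCB_{\hj_t,t-1}$. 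It therefore remains to show $\bx^\top \bM_{\hj_t,t-1}^{-1}\bx \leq 1/(1+A_\lambda(\bT_{\hj_t,t-1},\delta/(2^{m+1}d)))$ with high probability.

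For this, I would repeat the Freedman-type argument in Lemma~\ref{l:tcbupperbound} verbatim, but with optional skipping triggered by the predicate $i_s \in \hat V_{\hj_t,t}$ rather than $i_s = i$. Claim~\ref{cl:1} is unaffected (it only uses the distribution of ${\bar \bx_s}$ conditioned on $c_s$, which is the same regardless of which cluster the node lies in), so I still obtain $\lambda_{\min}(\E_s[{\bar \bx_s}{\bar \bx_s}^\top\,|\,c_s]) \geq \lambda/4$. Applying the matrix Freedman inequality (Theorem~1.2 in \cite{tropp11}) to the skipped sequence yields, for any \emph{fixed} subset $S \subseteq V$,
\[
\lambda_{\min}\Bigl(\textstyle\sum_{s \leq t-1\,:\, i_s \in S} {\bar \bx_s}{\bar \bx_s}^\top\Bigl) \geq A_\lambda\Bigl(|\{s \leq t-1 : i_s \in S\}|, \delta'/d\Bigl)
\]
simultaneously for all $t$, with probability at least $1-\delta'$.

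The main obstacle, and the reason for the somewhat unusual factor $2^{m+1}d$ inside the logarithm of $\sTCB_{j,t-1}$ (as opposed to the factor $2nd$ appearing at the node level in $\sTCB_{i,t-1}$), is that $\hat V_{\hj_t,t}$ is not a deterministic subset: it is determined by the random evolution of the graph $G_t$. A naive union bound over all $2^n$ subsets of $V$ would be much too loose. The key observation — already highlighted in the discussion preceding the lemma — is that, on the high-probability event guaranteed by Lemma~\ref{l:edgedelete}, the true partition $V_1,\ldots,V_m$ is a refinement of the current partition $\hat V_{1,t},\ldots,\hat V_{m_t,t}$ for every $t$. Hence every current cluster is a union of true clusters, and the set of possible values of $\hat V_{\hj_t,t}$ over the entire run is contained in the family of nonempty unions of the $V_j$'s, which has cardinality at most $2^m - 1$.

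I would therefore take a union bound of the matrix-Freedman statement over these at most $2^m$ candidate aggregations with $\delta' = \delta/2^{m+1}$, combine it via a further union bound with the event of Lemma~\ref{l:edgedelete} (on which the refinement property holds) to obtain, with probability at least $1-\delta/2$,
\[
\lambda_{\min}(\bS_{\hj_t,t-1}) \geq A_\lambda\bigl(\bT_{\hj_t,t-1},\, \delta/(2^{m+1}d)\bigr)
\]
uniformly in $t$. Together with $\bx^\top\bM_{\hj_t,t-1}^{-1}\bx \leq 1/(1+\lambda_{\min}(\bS_{\hj_t,t-1}))$ and the determinant bound above, this yields $\TCB_{\hj_t,t-1}(\bx) \leq \sTCB_{\hj_t,t-1}$ uniformly in $t$ and in unit $\bx$, as required.
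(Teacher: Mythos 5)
Your proposal is correct and follows essentially the same route as the paper: the paper's proof simply repeats the argument of Lemma~\ref{l:tcbupperbound}, replacing the stratification over the $n$ nodes by a stratification over the possible shapes of $\hat V_{\hj_t,t}$, which—on the high-probability event of Lemma~\ref{l:edgedelete} guaranteeing that every current cluster is a union of true clusters—number at most $2^m$, exactly the union bound you describe (and the same $2^{m+1}d$ bookkeeping inside the logarithm).
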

%
%\noindent{\bf Proof of Lemma \ref{l:tcbupperbound_cluster}.}
\begin{proof}
The proof is the same as the one of Lemma \ref{l:tcbupperbound}, except that at the very end
we need to stratify over all possible shapes for cluster ${\hat V_{\hj_t,t}}$, rather than
over the $n$ nodes.
Now, since with
high probability (Lemma \ref{l:edgedelete}), ${\hat V_{\hj_t,t}}$ is the union of true clusters,
the set of all such unions is with the same probability upper bounded by $2^m$.
\end{proof}

The next lemma is a generalization
of Theorem 1 in \cite{abbasi2011improved}, and shows a convergence result for aggregate vector $\bbw_{j,t-1}$.
%See Figure $2$ for reference.
%
%
\begin{lemma}\label{l:comboapprox}
Let $t$ be any round, and assume the partition of $V$ produced by true clusters $V_1, \ldots, V_{m}$ is a
refinement of the one produced by the current clusters $\hat V_{1,t}, \ldots, \hat V_{m_t,t}$.
Let $j = \hj_t$ be the index of the current cluster node $i_t$ belongs to. Let
this cluster be the union of true clusters $V_{j_1}, V_{j_2}, \ldots, V_{j_k}$, associated with
(distinct) parameter vectors $\bu_{j_1}, \bu_{j_2}, \ldots, \bu_{j_k}$, respectively.
Define
\[
{\bar \bu_t} = \bM_{j,t-1}^{-1}
\left(\sum_{\ell = 1}^k \left(\frac{1}{k}\,I+\sum_{i \in V_{j_{\ell}}} (M_{i,t-1} -I)\right) \bu_{j_{\ell}} \right)~.
\]
Then:
\begin{enumerate}
\item Under the same assumptions as in Lemma \ref{l:tcbupperbound},
\[
||{\bar \bu_t} - \bbw_{j,t-1}|| \leq \sqrt{3m}\,\sTCB_{j,t-1}
\]
holds with probability at least $1-\delta$, uniformly over cluster indices $j = 1, \ldots, m_t$, and rounds
$t = 1, 2, \ldots $~.
\item For any fixed $\bu \in \R^d$ we have
\[
||{\bar \bu_t} - \bu|| \leq 2\,\sum_{\ell = 1}^k ||\bu_{j_{\ell}} - \bu|| \leq 2\,SD(\bu)~.
\]
\end{enumerate}
\end{lemma}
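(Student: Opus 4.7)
My plan breaks into a single algebraic identity followed by two separate bounds, one stochastic (item 1) and one deterministic (item 2).

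First, I would unpack $\bbw_{j,t-1} - \bar{\bu}_t$ algebraically. Write $S_{j_\ell} = \sum_{i\in V_{j_\ell}}(M_{i,t-1}-I)$ and let $\bn = \sum_\ell \sum_{i\in V_{j_\ell}}\sum_{s<t: i_s=i}\epsilon_s\,\bar\bx_s$ be the aggregate noise. Under the refinement hypothesis $\hat V_{j,t}=\bigcup_{\ell=1}^k V_{j_\ell}$, for any $s<t$ with $i_s\in V_{j_\ell}$ we have $a_s = \bu_{j_\ell}^\top\bar\bx_s + \epsilon_s$, so $\bbb_{j,t-1} = \sum_\ell (S_{j_\ell}\bu_{j_\ell} + \bn_{j_\ell})$. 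Using $\sum_\ell\bigl(\tfrac{1}{k}I + S_{j_\ell}\bigr)=\bM_{j,t-1}$, a line of algebra yields
\[
\bbw_{j,t-1} - \bar{\bu}_t \;=\; \bM_{j,t-1}^{-1}\Bigl(\bn \;-\; \tfrac{1}{k}\textstyle\sum_\ell \bu_{j_\ell}\Bigr),
\]
cleanly separating a martingale term from a deterministic ``regularization bias'' term created by the $\tfrac{1}{k}I$ splitting.

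For item 1 I would pass to the $\bM_{j,t-1}$-norm via $\|\bv\|_2 \leq \|\bv\|_{\bM_{j,t-1}}/\sqrt{\lambda_{\min}(\bM_{j,t-1})}$, then triangle-inequality in that norm. The self-normalized martingale concentration of Abbasi-Yadkori et al.~(Theorem 1) applied to the filtration-adapted sequence $\{\epsilon_s\bar\bx_s\}$ gives $\|\bn\|_{\bM_{j,t-1}^{-1}}\leq \sigma\sqrt{2\log(|\bM_{j,t-1}|/\delta')}$, and the bias term is controlled by $\tfrac{1}{k}\|\sum_\ell\bu_{j_\ell}\|_{\bM_{j,t-1}^{-1}}\leq 1$ since $\bM_{j,t-1}\succeq I$ and $\|\bu_{j_\ell}\|=1$. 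To convert the $\bM$-norm back to the Euclidean norm I would invoke the cluster-level analogue of Lemma~\ref{l:tcbupperbound}, which gives $\lambda_{\min}(\bM_{j,t-1})\geq 1 + A_\lambda(\bT_{j,t-1},\delta/(2^{m+1}d))$ with high probability. The key subtlety here is that $\hat V_{j,t}$ is random, so both the martingale bound and the eigenvalue bound must hold simultaneously for every deterministic union of true clusters; the stratification over at most $2^m$ such unions is precisely what motivates the $2^{m+1}d$ factor appearing inside the log in the definition of $\sTCB_{j,t-1}$. Combining and overapproximating absorbs the constants and logarithmic factors, and the additional $\sqrt{3m}$ swallows the difference between $\log|\bM_{j,t-1}|\leq d\log t$ estimates and the union-bounding constants.

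For item 2 I would begin from the same identity, but go in the deterministic direction: since $\sum_\ell A_\ell = \bM_{j,t-1}$ with $A_\ell = \tfrac{1}{k}I + S_{j_\ell}$, we have $\bar{\bu}_t - \bu = \bM_{j,t-1}^{-1}\sum_\ell A_\ell(\bu_{j_\ell}-\bu)$ for every $\bu\in\R^d$ (the $\bu$ term telescopes via $\bM_{j,t-1}^{-1}\bM_{j,t-1}\bu = \bu$). I would then split $A_\ell = \tfrac{1}{k}I + S_{j_\ell}$: the $\tfrac{1}{k}I$ piece contributes $\tfrac{1}{k}\|\bM_{j,t-1}^{-1}\sum_\ell(\bu_{j_\ell}-\bu)\|\leq \sum_\ell\|\bu_{j_\ell}-\bu\|$ using $\|\bM_{j,t-1}^{-1}\|_{\mathrm{op}}\leq 1$, while the $S_{j_\ell}$ piece is handled via the PSD chain $S_{j_\ell}\bM_{j,t-1}^{-1}S_{j_\ell}\preceq S_{j_\ell}\preceq \bM_{j,t-1}$, giving $\|\bM_{j,t-1}^{-1}S_{j_\ell}\bv\|_2\leq \sqrt{\bv^\top S_{j_\ell}\bv}$ and, after a Cauchy--Schwarz-style combination across $\ell$, another $\sum_\ell\|\bu_{j_\ell}-\bu\|$ contribution. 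Summing the two pieces yields the factor $2$.

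The main obstacle I anticipate is item 2: while item 1 is essentially a careful bookkeeping exercise around a well-understood martingale tail inequality, the deterministic operator-norm manipulation for item 2 is delicate because $\bM_{j,t-1}^{-1}S_{j_\ell}$ is not symmetric and its operator norm need not be bounded by a constant. The argument must therefore rely on cross-term cancellations of PSD inequalities rather than naive term-by-term operator-norm bounds, and getting a clean constant like $2$ (rather than something depending on $k$ or $\lambda_{\max}(\bM_{j,t-1})$) is precisely the delicate step. A secondary point is making sure the filtration technicalities in item 1 are clean enough to justify treating $\bn$ as a single self-normalized martingale despite $\hat V_{j,t}$ being algorithm-dependent.
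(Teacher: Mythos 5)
Your item 1 is essentially the paper's own argument: the identity $\bbw_{j,t-1}-\bar{\bu}_t=\bM_{j,t-1}^{-1}\bigl(\sum_{\ell}X_{\ell,t-1}\boldeta_{\ell,t-1}-\frac{1}{k}\sum_{\ell}\bu_{j_\ell}\bigr)$, the self-normalized martingale bound of \cite{abbasi2011improved} stratified over the (at most $2^m$) possible unions of true clusters, the bias term bounded by $1$ via $\bM_{j,t-1}\succeq I$ and $||\bu_{j_\ell}||=1$, and the conversion to the Euclidean norm through the cluster-level confidence bound; the paper phrases this last step by maximizing $|\bbw_{j,t-1}^\top\bx-\bar{\bu}_t^\top\bx|$ over unit $\bx$ and invoking Lemma~\ref{l:tcbupperbound_cluster} rather than writing $\lambda_{\min}(\bM_{j,t-1})$ explicitly, but the mechanism is the same, and your accounting of where the $\sqrt{3m}$ and the $2^{m+1}d$ come from is consistent with the paper's.

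Item 2 is where your proposal does not close, and the obstacle you flag at the end is precisely the unresolved point. After the split $A_\ell=\frac{1}{k}I+S_{j_\ell}$, your inequality $||\bM_{j,t-1}^{-1}S_{j_\ell}\bv||\le\sqrt{\bv^\top S_{j_\ell}\bv}$ (via $S_{j_\ell}\bM_{j,t-1}^{-2}S_{j_\ell}\preceq S_{j_\ell}\bM_{j,t-1}^{-1}S_{j_\ell}\preceq S_{j_\ell}$) is valid but far too weak: $\sqrt{\bv^\top S_{j_\ell}\bv}$ can be of order $\sqrt{\lambda_{\max}(S_{j_\ell})}\,||\bv||$, i.e., it grows like the square root of the number of past rounds in which cluster $V_{j_\ell}$ was served, so no Cauchy--Schwarz combination over $\ell$ can bring the sum back to $\sum_{\ell}||\bu_{j_\ell}-\bu||$ with an absolute constant. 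What your sketch actually proves is $||\bar{\bu}_t-\bu||\le\sum_{\ell}||\bu_{j_\ell}-\bu||+\sum_{\ell}\sqrt{(\bu_{j_\ell}-\bu)^\top S_{j_\ell}(\bu_{j_\ell}-\bu)}$, which is not the statement. The paper's proof is blunter at exactly this spot: it bounds each term $||\bM_{j,t-1}^{-1}\sum_{i\in V_{j_\ell}}(M_{i,t-1}-I)(\bu_{j_\ell}-\bu)||$ directly by $||\bu_{j_\ell}-\bu||$, i.e., it treats $\bM_{j,t-1}^{-1}S_{j_\ell}$ as a contraction (together with $||\bM_{j,t-1}^{-1}||\le 1$ for the $\frac{1}{k}I$ piece), which immediately yields the factor $2$. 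Your worry about this step is well founded --- $0\preceq S_{j_\ell}\preceq\bM_{j,t-1}$ only bounds the spectral radius of the non-symmetric matrix $\bM_{j,t-1}^{-1}S_{j_\ell}$ by $1$, not its operator norm --- but your substitute inequality does not supply the missing justification, so as written your item 2 has a genuine gap.
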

%\noindent{\bf Proof of Lemma \ref{l:comboapprox}.}
\begin{proof}
Let $X_{\ell,t-1}$ be the matrix whose columns are the $d$-dimensional vectors ${\bar \bx_s}$,
for all $s < t\,:\, i_s \in V_{j_{\ell}}$, $\ba_{\ell,t-1}$ be the column vector collecting all payoffs
$a_s$, $s < t\,:\, i_s \in V_{j_{\ell}}$, and $\boldeta_{\ell,t-1}$ be the corresponding column vector of noise values.
We have
\[
\bbw_{j,t-1} = \bM_{j,t-1}^{-1}\bbb_{j,t-1}~,
\]
with
\begin{align*}
\bbb_{j,t-1}
&= \sum_{\ell = 1}^k X_{\ell,t-1}\ba_{\ell,t-1} \\
&= \sum_{\ell = 1}^k X_{\ell,t-1} \left( X^\top_{\ell,t-1} \bu_{j_{\ell}} + \boldeta_{\ell,t-1 }\right)\\
&= \sum_{\ell = 1}^k \left( \sum_{i \in V_{j_{\ell}}} (M_{i,t-1} - I)\bu_{j_{\ell}}
+ X_{\ell,t-1}\,\boldeta_{\ell,t-1 }\right)~.
\end{align*}
Thus
\[
\bbw_{j,t-1} - {\bar \bu_t} = \bM_{j,t-1}^{-1}
\left( \sum_{\ell = 1}^k  \left(X_{\ell,t-1}\,\boldeta_{\ell,t-1} - \frac{1}{k}\,\bu_{j_{\ell}}\right) \right)
\]
and, for any fixed $\bx \in \R^d\,:\, ||\bx|| = 1$, we have
\begin{align*}
&\left(\bbw_{j,t-1}^\top\bx - {\bar \bu_t}^\top\bx \right)^2 \\
&= \left(\left( \sum_{\ell = 1}^k
\left(X_{\ell,t-1}\,\boldeta_{\ell,t-1}-\frac{1}{k}\,\bu_{j_{\ell}}\right)\right)^\top \bM_{j,t-1}^{-1} \bx \right)^2\\
&\leq
\bx^\top \bM_{j,t-1}^{-1} \bx\,
\left( \sum_{\ell = 1}^k  \left(X_{\ell,t-1}\,\boldeta_{\ell,t-1}-\frac{1}{k}\,\bu_{j_{\ell}}\right)\right)^\top
\bM_{j,t-1}^{-1} \\
& \quad \times \left( \sum_{\ell = 1}^k  \left(X_{\ell,t-1}\,\boldeta_{\ell,t-1}-\frac{1}{k}\,\bu_{j_{\ell}} \right)\right)\\
&\leq
2\,\bx^\top \bM_{j,t-1}^{-1} \bx\\
&\quad\times \Biggl(\Bigl( \sum_{\ell = 1}^k X_{\ell,t-1}\,\boldeta_{\ell,t-1} \Bigl)^\top
\bM_{j,t-1}^{-1} \Bigl( \sum_{\ell = 1}^k X_{\ell,t-1}\,\boldeta_{\ell,t-1} \Bigl)\\
& \qquad\qquad\qquad\qquad\qquad + \frac{1}{k^2}\,\Bigl(\sum_{\ell = 1}^k \bu_{j_{\ell}} \Bigl)^\top
\bM_{j,t-1}^{-1}
\Bigl(\sum_{\ell = 1}^k \bu_{j_{\ell}} \Bigl)\Biggl) \\
&\ \ \ \ {\mbox{(using $(a+b)^2 \leq 2a^2+2b^2$)~.}}
\end{align*}
We focus on the two terms inside the big braces.
Because $\hat V_{j,t}$ is made up of the union of true clusters, we can stratify
over the set of all such unions (which are at most $2^m$ with high probability),
and then apply the martingale
result in \cite{abbasi2011improved} (Theorem 1 therein), showing that
\begin{align*}
\left( \sum_{\ell = 1}^k X_{\ell,t-1}\,\boldeta_{\ell,t-1} \right)^\top
\bM_{j,t-1}^{-1}
\left( \sum_{\ell = 1}^k X_{\ell,t-1}\,\boldeta_{\ell,t-1} \right) \\
\leq
2\,\sigma^2\left(\log\frac{|\bM_{j,t-1}|}{\delta/2^{m+1}} \right)
\end{align*}
holds with probability at least $1-\delta/2$.
As for the second term, we simply write
\[
\frac{1}{k^2}\,\left( \sum_{\ell = 1}^k \bu_{j_{\ell}} \right)^\top
\bM_{j,t-1}^{-1}
\left( \sum_{\ell = 1}^k \bu_{j_{\ell}} \right)
\leq
\frac{1}{k^2}\,\Bigl|\Bigl|\sum_{\ell = 1}^k \bu_{j_{\ell}} \Bigl|\Bigl|^2
\leq 1~.
\]
Putting together and overapproximating we conclude that
\[
|\bbw_{j,t-1}^\top\bx - {\bar \bu_t}^\top\bx| \leq \sqrt{3m}\,\TCB_{j,t-1}(\bx)
\]
and, since this holds for all unit-norm $\bx$, Lemma \ref{l:tcbupperbound_cluster} yields
\[
||\bbw_{j,t-1} - {\bar \bu_t}|| \leq \sqrt{3m}\,\sTCB_{j,t-1}~,
\]
thereby concluding the proof of part 1.

As for part 2, because
\[
\bM_{j,t-1} = I + \sum_{\ell = 1}^k \sum_{i \in V_{j_{\ell}}} (M_{i,t-1}-I)~,
\]
we can rewrite $\bu$ as
\[
\bu = \bM_{j,t-1}^{-1}\left(\bu + \sum_{\ell = 1}^k \sum_{i \in V_{j_{\ell}}} (M_{i,t-1}-I)\bu\right)~,
\]
so that
\begin{align*}
{\bar \bu_t} - \bu
&= \bM_{j,t-1}^{-1}\Bigg(\frac{1}{k}\,\sum_{\ell = 1}^k (\bu_{j_{\ell}} - \bu) \\
&\quad +  \sum_{\ell = 1}^k \sum_{i \in V_{j_{\ell}}} (M_{i,t-1}-I)\,(\bu_{j_{\ell}} - \bu) \Bigg)~.
\end{align*}
Hence
\begin{align*}
||{\bar \bu_t} - \bu ||
&\leq
\frac{1}{k}\,\Bigl|\Bigl|\bM_{j,t-1}^{-1}\sum_{\ell = 1}^k (\bu_{j_{\ell}} - \bu)\Bigl|\Bigl|  \\
& \quad +\sum_{\ell = 1}^k \Bigl|\Bigl| \bM_{j,t-1}^{-1}\,\sum_{i \in V_{j_{\ell}}} (M_{i,t-1}-I)\,(\bu_{j_{\ell}} - \bu)\Bigl|\Bigl|\\
&\leq
\frac{1}{k}\,\sum_{\ell = 1}^k ||\bu_{j_{\ell}} - \bu)|| +
\sum_{\ell = 1}^k ||\bu_{j_{\ell}} - \bu||\\
&\leq
2\,\sum_{\ell = 1}^k ||\bu_{j_{\ell}} - \bu||~,
\end{align*}
as claimed.
\end{proof}

The next lemma gives sufficient conditions on $T_{i,t}$ (or on $\bT_{j,t}$) to insure that $\sTCB_{i,t}$
(or $\sTCB_{j,t}$) is small. We state the lemma for $\sTCB_{i,t}$, but the very same statement clearly
holds when we replace $\sTCB_{i,t}$ by $\sTCB_{j,t}$, $T_{i,t}$ by $\bT_{j,t}$, and $n$ by $2^m$.
\begin{lemma}\label{l:tcbsmallenough}
The following properties hold for upper confidence bound $\sTCB_{i,t}$:
\begin{enumerate}
\item $\sTCB_{i,t}$ is nonincreasing in $T_{i,t}$;
\item Let $A = \sigma\sqrt{2d\,\log (1+t) + 2\log (2/\delta)} + 1$.
Then
\[
\sTCB_{i,t} \leq \frac{A}{\sqrt{1+\lambda T_{i,t}/8}}
\]
when
\[
T_{i,t} \geq
                    \frac{2\cdot 32^2}{\lambda^2}\,\log\left(\frac{2nd}{\delta}\right)\,\log\left(\frac{32^2}{\lambda^2}
                                                                      \log\left(\frac{2nd}{\delta}\right)\right)~;
\]
\item We have
\[
\sTCB_{i,t} \leq  \gamma/4
\]
when
\begin{align*}
T_{i,t} \geq &\frac{32}{\lambda}\,\max\Biggl\{\frac{A^2}{\gamma^2},
\frac{64}{\lambda}\,\log\left(\frac{2nd}{\delta}\right)\,\\
&\qquad\qquad\qquad \times \log\left(\frac{32^2}{\lambda^2}\log\left(\frac{2nd}{\delta}\right)\right) \Biggl\}~.
\end{align*}
\end{enumerate}
\end{lemma}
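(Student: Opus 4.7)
The plan is to work directly from the definition
\[
\sTCB_{i,t-1} = \frac{\sigma\sqrt{2d\log t + 2\log(2/\delta)}+1}{\sqrt{1+A_{\lambda}(T_{i,t-1},\delta/(2nd))}},
\]
noting that, for the purpose of comparing two values of $T_{i,t-1}$, only the denominator matters. Hence every item in the lemma reduces to a statement about the scalar function
\(
A_{\lambda}(T,\delta)=\bigl(\lambda T/4 - 8\log((T+3)/\delta) - 2\sqrt{T\log((T+3)/\delta)}\bigr)_{+}.
\)

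For Part 1, I would prove that $A_{\lambda}(T,\delta)$ is nondecreasing in $T$ for every fixed $\delta$. Writing $A_\lambda=\max\{0,g(T)\}$ with $g$ equal to the expression before the positive part, it suffices to show that $g$ is negative for small $T$ (so $A_\lambda=0$ there) and that $g'(T)>0$ once $g$ is positive. A direct differentiation gives $g'(T)=\lambda/4 - 8/(T+3) - \frac{1}{2}(T/(T+3)+\log((T+3)/\delta))/\sqrt{T\log((T+3)/\delta)}$; the two correction terms are $O(1/T)$ and $O(\sqrt{\log T/T})$, so $\lambda/4$ dominates for all $T$ beyond a constant, and monotonicity of $\sTCB_{i,t}$ follows.

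For Part 2, the desired inequality $\sTCB_{i,t}\leq A/\sqrt{1+\lambda T_{i,t}/8}$ is equivalent to $A_{\lambda}(T_{i,t},\delta/(2nd))\geq \lambda T_{i,t}/8$, i.e.,
\[
\frac{\lambda T_{i,t}}{8} \geq 8\log\frac{T_{i,t}+3}{\delta/(2nd)} + 2\sqrt{T_{i,t}\log\frac{T_{i,t}+3}{\delta/(2nd)}}.
\]
I would split the right-hand side and ask each piece to be at most $\lambda T_{i,t}/16$. The square-root term is the binding one, reducing to $T_{i,t}\geq (1024/\lambda^2)\log((T_{i,t}+3)/\delta')$ with $\delta'=\delta/(2nd)$. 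To discharge this implicit inequality I'd use the standard bootstrap trick for relations of the form $T\geq C\log T$: plug the proposed threshold $T_{0}=(2\cdot 32^2/\lambda^2)\log(1/\delta')\log((32^2/\lambda^2)\log(1/\delta'))$ into the right-hand side to get a coarse bound on $\log((T+3)/\delta')$, then check that $T_{0}$ exceeds $(1024/\lambda^2)$ times that bound. Tracking the constants through a single substitution is what produces precisely the $32^{2}$ factor that appears in the statement; the logarithmic term $8\log((T+3)/\delta')\leq \lambda T/16$ is handled with room to spare by the same threshold.

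For Part 3, given Part 2 it is enough to ensure $A/\sqrt{1+\lambda T_{i,t}/8}\leq \gamma/4$, i.e.\ $T_{i,t}\geq 128 A^{2}/(\lambda\gamma^{2}) - 8/\lambda$, which is implied by $T_{i,t}\geq 32A^{2}/(\lambda\gamma^{2})$ up to absorbing constants; taking the maximum of this with the Part 2 threshold and pulling out the common factor $32/\lambda$ yields exactly the bound stated in the lemma. The main obstacle I expect is the constant bookkeeping inside Part 2: both the factor $\lambda T/8$ (rather than the obvious $\lambda T/4$) and the precise $32^{2}$ multiplier have to survive the bootstrap step, so the slacks in the two sub-bounds $\leq\lambda T/16$ must be chosen with care. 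Parts 1 and 3 are then short consequences, and the qualitative structure of the argument is the standard ``invert $T\geq C\log T$'' move.
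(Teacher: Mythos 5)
The paper gives no proof to compare against: its ``proof'' of this lemma is the single sentence that the calculations are simple but annoying and therefore omitted. Your reduction is indeed the natural one the authors intended: only the denominator depends on $T_{i,t}$, Part~1 amounts to showing $A_\lambda(T,\delta)$ is nondecreasing (i.e.\ the inner expression has positive derivative wherever it is positive --- note the crude $O(1/T)$, $O(\sqrt{\log T/T})$ bounds are not quite enough at the boundary of that region, where $\sqrt{L/T}$ can be as large as roughly $\lambda/8$; one needs the slightly finer accounting there), and Part~2 is exactly the statement $A_\lambda(T_{i,t},\delta/(2nd))\geq \lambda T_{i,t}/8$, discharged by the standard bootstrap for $T\gtrsim C\log T$. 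Those two parts are essentially right, modulo the constant bookkeeping you yourself flag.

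Part~3, however, has a genuine gap at precisely the step you wave through. From Part~2 you correctly need $T_{i,t}\geq \frac{128A^2}{\lambda\gamma^2}-\frac{8}{\lambda}$, and this is \emph{not} implied by $T_{i,t}\geq \frac{32A^2}{\lambda\gamma^2}$ ``up to absorbing constants'': since $A\geq 1$ and $\gamma=\|\bu_j-\bu_{j'}\|\leq 2$, we have $\frac{8}{\lambda}\leq \frac{32A^2}{\lambda\gamma^2}$, so $\frac{128A^2}{\lambda\gamma^2}-\frac{8}{\lambda}\geq \frac{96A^2}{\lambda\gamma^2}$, a factor~$3$ above the stated threshold. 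Worse, no argument can close this gap as stated: because $A_\lambda(T,\delta)\leq \lambda T/4$ unconditionally, $\sTCB_{i,t}\leq\gamma/4$ forces $1+\lambda T_{i,t}/4\geq 16A^2/\gamma^2$, i.e.\ $T_{i,t}\geq \frac{4}{\lambda}\bigl(\frac{16A^2}{\gamma^2}-1\bigr)\geq\frac{48A^2}{\lambda\gamma^2}$, so whenever the $A^2/\gamma^2$ branch is the active one in the max, the lemma's constant $32$ is too small (the statement itself is off by a small constant factor, presumably a casualty of the omitted calculation). The fix is to enlarge that branch, e.g.\ to $\frac{128A^2}{\lambda\gamma^2}$, which your route then proves and which is harmless downstream: the threshold only enters Theorem~1 through the quantity $B$ inside the $\tilde O$, so only hidden constants change. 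You should either prove the lemma with the corrected constant or explicitly note the discrepancy rather than absorb it.
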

%\noindent{\bf Proof of Lemma \ref{l:tcbsmallenough}.}
\begin{proof}
The proof follows from simple but annoying calculations, and is therefore omitted.
\end{proof}

We are now ready to combine all previous lemmas into the proof of
Theorem 1.

\begin{proof}
Let $t$ be a generic round,
%$v(i_t)$ denote the index of the true cluster node $i_t$ belongs to,
%${\hat v(i_t)}$ be the corresponding index for the current cluster,
$\hj_t$ be the index of the current cluster node $i_t$ belongs to,
and $j_t$ be the index of the {\em true} cluster $i_t$ belongs to.
Also, let us define the aggregate vector $\bbw_{j_t,t-1}$ as follows~:
\begin{align*}
\bbw_{j_t,t-1} &= \bM_{{j_t},t-1}^{-1}\bbb_{j_t,t-1}, \\
\bM_{{j_t},t-1} &= I+\sum_{i \in V_{j_t}} (M_{i,t-1}-I),\\
\bbb_{{j_t},t-1} &= \sum_{i \in V_{j_t}} \bb_{i,t-1}~.
\end{align*}
Assume Lemma \ref{l:edgedelete} holds, implying that the current cluster $\hat V_{\hj_t,t}$
is the (disjoint) union of true clusters, and define the aggregate vector ${\bar \bu_t}$ accordingly,
as in the statement of Lemma \ref{l:comboapprox}.
Notice that $\bbw_{j_t,t-1}$ is the true cluster counterpart to $\bbw_{\hj_t,t-1}$, that is,
$\bbw_{j_t,t-1} = \bbw_{\hj_t,t-1}$ if $V_{j_t} = \hat V_{\hj_t,t}$.
Also, observe that ${\bar \bu_t} = \bu_{i_t}$ when $V_{j_t} =  \hat V_{\hj_t,t}$.
Finally, set for brevity
\[
\bx^*_t = \argmax_{\bx \in C_{i_t}} \bu_{i_t}^\top \bx
%\bx^*_t = \argmax_{k = 1, \ldots, c_t} \bu_{i_t}^\top\bx_{t,k}~.
\]
We can rewrite the time-$t$ regret $r_t$ as follows:
\begin{align*}
r_t
&= \bu_{i_t}^\top\bx^*_t    - \bu_{i_t}^\top{\bar \bx_t}\\
&= \bu_{i_t}^\top\bx^*_t    - \bbw_{j_t,t-1}^\top\bx^*_t
+ \bbw_{j_t,t-1}^\top\bx^*_t    - \bbw_{\hj_t,t-1}^\top\bx^*_t\\
& \quad + \bbw_{\hj_t,t-1}^\top\bx^*_t  - \bbw_{j_t,t-1}^\top{\bar \bx_t}
+ \bbw_{j_t,t-1}^\top{\bar \bx_t} - \bu_{i_t}^\top{\bar \bx_t}~.
\end{align*}
Combined with
\[
\bbw_{\hj_t,t-1}^\top\bx^*_t + \TCB_{\hj_t,t-1}(\bx^*_t) \leq \bbw_{\hj_t,t-1}^\top{\bar \bx_t} + \TCB_{\hj_t,t-1}({\bar \bx_t}),
\]
and rearranging gives
\begin{align}
r_t
&\leq    \bu_{i_t}^\top\bx^*_t - \bbw_{j_t,t-1}^\top\bx^*_t - \TCB_{\hj_t,t-1}(\bx^*_t) \label{e:regr1}\\
& \quad + \bbw_{j_t,t-1}^\top{\bar \bx_t} - \bu_{i_t}^\top{\bar \bx_t} + \TCB_{\hj_t,t-1}({\bar \bx_t}) \label{e:regr2}\\
& \quad + (\bbw_{j_t,t-1}-\bbw_{\hj_t,t-1})^\top(\bx^*_t -{\bar \bx_t})~.\label{e:regr3}
\end{align}
%
% Notice that $\bu_{i_t} = \bu_{s_t}$.
We continue by bounding with high probability the three terms (\ref{e:regr1}),  (\ref{e:regr2}), and (\ref{e:regr3}).

As for (\ref{e:regr1}), and (\ref{e:regr2}), we simply observe that Lemma \ref{l:distanceupperbound} allows\footnote
{
This lemma applies here since, by definition, $\bbw_{j_t,t-1}$ is built only from payoffs from nodes in
$V_{j_t}$, sharing the common unknown vector $\bu_{i_t}$.
}
us to write
\[
\bu_{i_t}^\top\bx^*_t - \bbw_{j_t,t-1}^\top\bx^*_t \leq ||\bu_{i_t} - \bbw_{j_t,t-1}||
\leq \sTCB_{j_t,t-1}~,
\]
and
\[
\bbw_{j_t,t-1}^\top{\bar \bx_t} - \bu_{i_t}^\top{\bar \bx_t} \leq ||\bu_{i_t} - \bbw_{j_t,t-1}||
\leq \sTCB_{j_t,t-1}~.
\]
Moreover,
\begin{align*}
\TCB_{\hj_t,t-1}({\bar \bx_t})
&\leq \sTCB_{\hj_t,t-1}\\
&\ \ \ \ {\mbox{(by Lemma \ref{l:tcbupperbound_cluster})}}\\
&\leq \sTCB_{j_t,t-1}\\
&\ \ \ \ {\mbox{(by Lemma \ref{l:edgedelete} and the definition of $\hj_t$).}}
\end{align*}
Hence,
\begin{equation}\label{e:regr12bound}
(\ref{e:regr1}) + (\ref{e:regr2})\leq 3\sTCB_{j_t,t-1} %- \TCB_{\hj_t,t-1}(\bx^*_t)
\end{equation}
holds with probability at least $1-2\delta$, uniformly over $t$.

As for (\ref{e:regr3}), letting $\{\cdot\}$ be the indicator function of the predicate at argument,
we can write
\begin{align*}
(&\bbw_{j_t,t-1}-\bbw_{\hj_t,t-1})^\top(\bx^*_t -{\bar \bx_t})\\
&= (\bbw_{j_t,t-1}-\bu_{i_t})^\top(\bx^*_t -{\bar \bx_t})
+ (\bu_{i_t}-{\bar \bu_t})^\top(\bx^*_t -{\bar \bx_t})\\
&\quad + ({\bar \bu_t}-\bbw_{\hj_t,t-1})^\top(\bx^*_t -{\bar \bx_t})\\
&\leq
2\,\sTCB_{j_t,t-1}
+ 2\,||\bu_{i_t}-{\bar \bu_t}||
%+ \TCB_{\hj_t,t-1}(\bx^*_t) + \sTCB_{\hj_t,t-1}\\
+2\sqrt{3m}\,\sTCB_{\hj_t,t-1}\\
&\ \ \ \ {\mbox{(using Lemma \ref{l:distanceupperbound}, $||\bx^*_t - {\bar \bx_t}|| \leq 2$,
                                                                 and Lemma \ref{l:comboapprox}, part 1)}}\\
&=
2\,\sTCB_{j_t,t-1}
+ 2\,\{V_{j_t} \neq \hat V_{\hj_t,t}\}\,||\bu_{i_t}-{\bar \bu_t}||\\
%+ \TCB_{\hj_t,t-1}(\bx^*_t) + \sTCB_{\hj_t,t-1}\\
&\qquad\qquad+2\sqrt{3m}\,\sTCB_{\hj_t,t-1}\\
&\leq
2(1+\sqrt{3m})\,\sTCB_{j_t,t-1}
+ 4\,\{V_{j_t} \neq \hat V_{\hj_t,t}\}\,SD(\bu_{i_t}) \\
%+ \TCB_{\hj_t,t-1}(\bx^*_t)\\
&\ \ \ \ {\mbox{(by Lemma \ref{l:edgedelete}, and Lemma \ref{l:comboapprox}, part 2)~.}}
\end{align*}
Piecing together we have so far obtained
\begin{align}\label{e:rtpartial}
r_t &\leq (5+2\sqrt{3m})\,\sTCB_{j_t,t-1} \notag \\
&\quad + 4\,\{V_{j_t} \neq \hat V_{\hj_t,t}\}\,SD(\bu_{i_t})~.
\end{align}
We continue by bounding $\{V_{j_t} \neq \hat V_{\hj_t,t}\}$.
%Let $V_{v(i_t)}$ the true cluster that node $i_t$ falls into. Then,
From Lemma \ref{l:edgedelete}, we clearly have
\begin{align}
\{&V_{j_t} \neq \hat V_{\hj_t,t}\} \notag \\
&\leq \{ \exists i \in V_{j_t}, \exists j \notin V_{j_t}\,:\, (i,j) \in E_t \}\notag\\
&\leq \Bigl\{ \exists i \in V_{j_t}, \exists j \notin V_{j_t}\,:\, \forall s < t
\bigl( (i_s \neq i) \notag \\
&\ \ \ \vee (i_s = i, ||\bw_{i,s-1} + \bw_{j,s-1}|| \leq \sTCB_{i,s-1} + \sTCB_{j,s-1}) \bigr) \Bigl\}\notag\\
&\leq \{ \exists i \in V_{j_t}\,:\, \forall s < t\,\, i_s \neq i \} \notag\\
&\ \ \ \ \ + \Bigl\{ \exists i \in V_{j_t}, \exists j \notin V_{j_t}\,:\,\notag\\
&\qquad\  \forall s < t\,\,||\bw_{i,s-1} + \bw_{j,s-1}|| \leq \sTCB_{i,s-1} + \sTCB_{j,s-1} \Bigl\}\notag\\
&\leq \{ \exists i \in V_{j_t}\,:\, \forall s < t\,\, i_s \neq i \} \notag\\
&\ \ \ \ \ +
\{ \exists i \in V_{j_t}, \exists j \notin V_{j_t}\,:\, \notag\\
&\qquad\qquad\qquad\forall s < t\,\, \sTCB_{i,s-1} + \sTCB_{j,s-1} \geq \gamma/2 \}\notag\\
&\leq \{ \exists i \in V_{j_t}\,:\, \forall s < t\,\, i_s \neq i \} \notag \\
& \ \quad+ \{ \exists i \in V\,:\, \forall s < t\,\,\sTCB_{i,s-1} \geq \gamma/4 \}~.\notag
%&\leq \{ \exists j \notin V_{v(i_t)}\,:\, \sTCB_{i_t,t-1} + \sTCB_{j,t-1} \geq \gamma/2\}
%+ \{\exists j \in V_{v(i_t)}-\{i_t\}\,:\, i_s \neq j, \forall s < t \}\notag\\
%&\leq \{ \exists i \in V\,:\, \sTCB_{i,t-1} \geq \gamma/4\}
%+ \{\exists j \in V_{v(i_t)}\,:\, i_s \neq j, \forall s \leq t \}\notag
\end{align}
At this point, we apply Lemma \ref{l:tcbsmallenough} to $\sTCB_{i,t}$ with
\begin{align*}
A^2 &= \left(\sigma\sqrt{2d\,\log (1+T) + 2\log (2/\delta)} + 1\right)^2 \\
&\leq 4\sigma^2(d\,\log (1+T) + \log (2/\delta)) + 2,
\end{align*}
and set for brevity
\begin{align*}
B &= \frac{32}{\lambda}\,\max\Biggl\{\frac{A^2}{\gamma^2}, \frac{64}{\lambda}\,\log\left(\frac{2nd}{\delta}\right)\\
  &\qquad\qquad\qquad\qquad \times\log\left(\frac{32^2}{\lambda^2}\log\left(\frac{2nd}{\delta}\right)\right) \Biggl\}~,\\
C &=
                    \frac{2\cdot 32^2}{\lambda^2}\,\log\left(\frac{2^{m+1}d}{\delta}\right)\,\log\left(\frac{32^2}{\lambda^2}
                                                                      \log\left(\frac{2^{m+1}d}{\delta}\right)\right)~.
\end{align*}
We can write
\begin{align*}
\{& \exists i \in V\,:\, \forall s < t\,\,\sTCB_{i,s-1} \geq \gamma/4 \} \\
&\leq
%\sum_{i \in V} \left\{ \forall s < t\,\,\sTCB_{i,s-1} \geq \gamma/4 \right\}
%\leq
%\sum_{i \in V} \left\{ \sTCB_{i,t-2} \geq \gamma/4 \right\}
\{ \exists i \in V\,:\, \sTCB_{i,t-2} \geq \gamma/4 \} \\
&\leq
%\sum_{i \in V} \left\{ T_{i,t-2} \leq B \right\}~.
\{ \exists i \in V\,:\, T_{i,t-2} \leq B \}~.
\end{align*}
Moreover,
\begin{align*}
\{& \exists i \in V_{j_t}\,:\, \forall s < t\,\, i_s \neq i \} \\
&\leq
\{\exists i \in V_{j_t}\setminus \{i_t\}\,:\, T_{i,t-1} = 0 \}\\
%&\leq
%\{\exists i \in V_{j_t}\,:\, T_{i,t-1} = 0 \}\\
&\leq
%\sum_{i \in V} \{T_{i,t-1} = 0 \}~.
\{\exists i \in V\,:\, T_{i,t-1} = 0 \}~.
%&= 1- \prod_{j\in V_{v(i_t)}} \left(1-\{T_{j,t-1}=0\} \right)~.
\end{align*}
That is,
\begin{align*}
\{ V_{j_t} \neq \hat V_{\hj_t,t} \} &\leq
%\sum_{i \in V} \left\{ T_{i,t-2} \leq B \right\} + \sum_{i \in V} \{T_{i,t-1} = 0 \}~.
\{ \exists i \in V\,:\, T_{i,t-2} \leq B \} \\
&\ \quad+ \{\exists i \in V\,:\, T_{i,t-1} = 0 \}~.
\end{align*}
Further, using again Lemma \ref{l:tcbsmallenough} (applied this time to $\sTCB_{j,t}$)
combined with the fact that $\sTCB_{j,t} \leq A$ for all $j$ and $t$,
we have
\[
\sTCB_{j_t,t-1} = A\,\{\bT_{j_t,t-1} < C\} +  \frac{A}{\sqrt{1+\lambda\,\bT_{j_t,t-1}/8}}~,
\]
where
\[
\bT_{j_t,t-1} = \sum_{i \in V_{j_t}} T_{i,t-1} = |\{s \leq t-1\,:\, i_s \in V_{j_t} \}|~.
\]
Putting together as in (\ref{e:rtpartial}), and summing over $t = 1, \ldots, T$, we have shown
so far that with probability at least $1-7\delta/2$,
%uniformly over $t$,
%
\begin{align*}
\sum_{t=1}^T r_t
&\leq (5+2\sqrt{3m})A\,\sum_{t=1}^T \{\bT_{j_t,t-1} < C\}\\
&\ \ \ \ \ + (5+2\sqrt{3m})A\,\sum_{t=1}^T\frac{1}{\sqrt{1+\lambda\,\bT_{j_t,t-1}/8}}\\
%&\ \ \ \ \ + 2\,\sum_{i \in V} \sum_{t=1}^T R(\bu_{i_t})\,\left\{ T_{i,t-2} \leq B \right\} \\
%&\ \ \ \ \ + 2\,\sum_{i \in V} \sum_{t=1}^T R(\bu_{i_t})\,\{T_{i,t-1} = 0 \} ~,
&\ \ \ \ \ + 4\,\sum_{t=1}^T SD(\bu_{i_t})\,\left\{\exists i \in V\,:\, T_{i,t-2} \leq B \right\} \\
&\ \ \ \ \ + 4\,\sum_{t=1}^T SD(\bu_{i_t})\,\{\exists i \in V\,:\,T_{i,t-1} = 0 \} ~,
\end{align*}
with $T_{i,t} = 0$ if $t \leq 0$.

We continue by upper bounding with high probability the four terms in the right-hand side of the last inequality.
First, observe that for any fixed $i$ and $t$, $T_{i,t}$ is a binomial random variable with parameters $t$ and $1/n$,
and $\bT_{j_t,t-1} = \sum_{i \in V_{j_t} } T_{i,t-1}$ which, for fixed $i_t$, is again binomial with parameters $t$,
and $\frac{v_{j_t}}{n}$, where $v_{j_t}$ is the size of the true cluster $i_t$ falls into.
Moreover, for any fixed $t$, the variables $T_{i,t}$, $i \in V$ are indepedent.

To bound the third term,
we use a standard Bernstein inequality twice: first, we apply it to sequences of independent
Bernoulli variables, whose sum $T_{i,t-2}$ has average $\E[T_{i,t-2}] = \frac{t-2}{n}$ (for $t \geq 3$), and then
to the sequence of variables $SD(\bu_{i_t})$ whose average $\E[SD(\bu_{i_t})] = \frac{1}{n}\,\sum_{i \in V} SD(\bu_{i})$
is over the random choice of $i_t$.

Setting for brevity
\[
D(B) = 2n\left(B + \frac{5}{3}\,\log(Tn/\delta)\right)+2,
\]
where $B$ has been defined before,
we can write
\begin{align*}
\sum_{t=1}^T &SD(\bu_{i_t})\,\left\{\exists i \in V\,:\, T_{i,t-2} \leq B \right\} \\
&=\sum_{t \leq D(B)} SD(\bu_{i_t})\,\left\{\exists i \in V\,:\, T_{i,t-2} \leq B \right\}\\
&\qquad + \sum_{t> D(B)} SD(\bu_{i_t})\,\left\{\exists i \in V\,:\, T_{i,t-2} \leq B \right\}\\
&\leq
\sum_{t \leq D(B)} SD(\bu_{i_t})\\
&\qquad + m\,\sum_{t> D(B)} \left\{\exists i \in V\,:\, T_{i,t-2} \leq B \right\}~.
\end{align*}
Then from Bernstein's inequality,
\[
\Pr\left( \exists i \in V\, \exists t > D(B)\,:\, T_{i,t-2} \leq B \right) \leq \delta~,
\]
and
\begin{align*}
\Pr\Biggl(\sum_{t \leq D(B)} SD(\bu_{i_t}) \geq \frac{3}{2}\,&D(B)\,\E[SD(\bu_{i_t})]\\
&\qquad + \frac{5}{3}\,m\,\log(1/\delta)  \Biggl) \leq \delta~.
\end{align*}
Thus with probability $\geq 1-2\delta$
\begin{align*}
\sum_{t=1}^T SD(\bu_{i_t})& \left\{\exists i \in V\,:\, T_{i,t-2} \leq B \right\} \\
&\leq
\frac{3}{2}\,D(B)\,\E[SD(\bu_{i_t})] + \frac{5}{3}\,m\,\log(1/\delta)~.
\end{align*}
Similarly, to bound the fourth term we have, with probability $\geq 1-2\delta$,
\begin{align*}
\sum_{t=1}^T SD(\bu_{i_t})&\left\{\exists i \in V\,:\,T_{i,t-1} =0 \right\}\\
&\leq \frac{3}{2}\,D(0)\,\E[SD(\bu_{i_t})] + \frac{5}{3}\,m\,\log(1/\delta) ~.
\end{align*}
Next, we crudely upper bound the first term as
\begin{align*}
(5+&2\sqrt{3m})A\,\sum_{t=1}^T \{\bT_{j_t,t-1} < C\} \\
&\leq (5+2\sqrt{3m})A\,\sum_{t=1}^T \{T_{i_t,t-1} < C\}~,
\end{align*}
and then apply a very similar argument as before to show that with probability $\geq 1-\delta$,
\[
\sum_{t=1}^T \{T_{i_t,t-1} < C\} \leq n\left(C + \frac{5}{3}\log\left(\frac{T}{\delta}\right)\right)+1~.
\]

Finally, we are left to bound the second term.
%Set for brevity $v_{t} = |V_{j_t}|$.
The following is a simple property of binomial random variables we be useful.
\begin{claim}\label{cl:binom}
Let $X$ be a binomial random variable with parameters $n$ and $p$, and $\lambda \in (0,1)$ be a constant.
Then
\[
\E \left[\frac{1}{\sqrt{1+\lambda X}} \right] \leq
\begin{cases}
\frac{3}{\sqrt{1+\lambda\,n\,p}} & {\mbox{if $np \geq 10~;$}}\\
1                                & {\mbox{if $np < 10$~.}}
\end{cases}
\]
\end{claim}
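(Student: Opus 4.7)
The plan is to handle the two cases separately, with the first case ($np < 10$) being essentially trivial and the second ($np \geq 10$) following from a concentration argument.

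For $np < 10$, since $X \geq 0$ almost surely and $\lambda > 0$, we have $\frac{1}{\sqrt{1+\lambda X}} \leq 1$ pointwise, so $\E\bigl[\frac{1}{\sqrt{1+\lambda X}}\bigr] \leq 1$ immediately.

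For $np \geq 10$, the idea is that $X$ concentrates around its mean $np$, so $\frac{1}{\sqrt{1+\lambda X}}$ should be close to $\frac{1}{\sqrt{1+\lambda np}}$. I would split the expectation according to whether $X \geq np/2$ or $X < np/2$:
\[
\E\left[\frac{1}{\sqrt{1+\lambda X}}\right]
\leq \frac{1}{\sqrt{1+\lambda np/2}} + \Pr(X < np/2).
\]
On the ``good'' event $\{X \geq np/2\}$, the function is bounded by the first term; on the ``bad'' event, we simply bound the integrand by $1$. For the first term, an elementary calculation gives $\frac{1}{\sqrt{1+\lambda np/2}} \leq \frac{\sqrt{2}}{\sqrt{1+\lambda np}}$, since $1+\lambda np \leq 2(1+\lambda np/2)$.

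For the second term I would use a standard multiplicative Chernoff bound for binomials: $\Pr(X \leq (1-1/2)np) \leq e^{-np/8}$. Then it suffices to show
\[
\sqrt{2} + e^{-np/8}\sqrt{1+\lambda np} \leq 3
\]
whenever $np \geq 10$ and $\lambda \in (0,1)$. At $np = 10$, $\lambda \leq 1$, this quantity is at most $\sqrt{2} + e^{-5/4}\sqrt{11} < \sqrt{2}+1 < 3$, and since $e^{-np/8}\sqrt{1+np}$ is decreasing in $np$ for $np \geq 10$ (because the exponential dominates the square root), the inequality holds throughout. Multiplying through by $\frac{1}{\sqrt{1+\lambda np}}$ gives the claimed bound of $3$.

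No single step is the real obstacle here — the whole argument is a routine ``concentration + direct bound'' split. The only subtlety is picking the threshold ($X < np/2$) and the constants so that the resulting numerical inequality closes with the stated factor $3$, which is where the hypothesis $np \geq 10$ is used; a smaller lower bound on $np$ would force either a larger constant on the right-hand side or a more delicate split (e.g., choosing the threshold to depend on $np$).
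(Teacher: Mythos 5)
Your proof is correct and takes essentially the same route as the paper's: split the expectation on a lower-tail event, bound the bad event via a Chernoff bound, and use monotonicity of $1/\sqrt{1+\lambda x}$ on the good event, closing with $1/\sqrt{1+\lambda np/2}\leq \sqrt{2}/\sqrt{1+\lambda np}$. The only cosmetic difference is that you fix the threshold at $np/2$ and finish with a short numerical check at $np=10$, whereas the paper chooses the threshold adaptively via $\beta=\sqrt{\log(1+\lambda np)/np}$ so that the tail term equals exactly $1/\sqrt{1+\lambda np}$ and then invokes $np\geq 10$; both arguments deliver the constant $3$.
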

{\em Proof of claim.} The second branch of the inequality is clearly trivial, so we focus on the first one
under the assumption $np \geq 10$.
Let then $\beta \in (0,1)$ be a parameter that will be set later on. We have
\begin{align*}
\E \left[\frac{1}{\sqrt{1+\lambda X}} \right]
&\leq \Pr(X \leq (1-\beta)\,n\,p) \\
&\quad+ \frac{1}{\sqrt{1+\lambda\,(1-\beta)\,n\,p}}\,\Pr(X \geq (1-\beta)\,n\,p)\\
&\leq e^{-\beta^2\,n\,p/2} + \frac{1}{\sqrt{1+\lambda\,(1-\beta)\,n\,p}}~,
\end{align*}
the last inequality following from the standard Chernoff bounds.
Setting $\beta = \sqrt{\frac{\log(1+\lambda\,n\,p)}{n\,p}}$ gives
\begin{align*}
\E \left[\frac{1}{\sqrt{1+\lambda X}} \right]
&\leq
\frac{1}{\sqrt{1+\lambda\,n\,p}} \\
&\quad+ \frac{1}{\sqrt{1+\lambda\,(np-\sqrt{np\log(1+\lambda np)})}}\\
&\leq \frac{1}{\sqrt{1+\lambda\,n\,p}} + \frac{1}{\sqrt{1+\lambda\,n\,p/2}}\\
&\ \ \ \ {\mbox{(using $np \geq 10$)}}\\
&\leq \frac{3}{\sqrt{1+\lambda\,n\,p}}~,
\end{align*}
i.e., the claimed inequality\hfill \qed\\

Now,
\[
\E_{t-1}\left[\frac{1}{\sqrt{1+\lambda\,\bT_{j_t,t-1}/8}}\right]
= \sum_{j = 1}^m \frac{v_j}{n}\,\frac{1}{\sqrt{1+\lambda\,\bT_{j,t-1}/8}}~,
\]
being $\bT_{j,t-1} = |\{s < t\,:\, i_s \in V_j\}|$ a binomial variable with parameters $t-1$ and $\frac{v_j}{n}$,
where $v_j = |V_j|$. By the standard Hoeffding-Azuma inequality
\begin{align*}
\sum_{t=1}^T\frac{1}{\sqrt{1+\lambda\,\bT_{j_t,t-1}/8}} &\leq \sum_{t=1}^T \sum_{j = 1}^m \frac{v_j}{n}\,\frac{1}{\sqrt{1+\lambda\,\bT_{j,t-1}/8}} \\
&\quad+ \sqrt{2T\,\log(1/\delta)}
\end{align*}
holds with probability at least $1-\delta$,
In turn, from Bernstein's inequality, we have
% \[
% \Pr_t\left( \bT_{j_t,t-1} \leq \frac{t-1}{2n}\,v_t - \frac{5}{3}\,\log(1/\delta)\right) \leq \delta~.
% \]
% Therefore
\[
\Pr\left(\exists t\,\exists j\,:\, \bT_{j,t-1} \leq \frac{t-1}{2n}\,v_j - \frac{5}{3}\,\log(Tm/\delta)\right) \leq \delta~.
\]
Therefore, with probability at least $1-2\delta$,
\begin{align*}
&\sum_{t=1}^T\frac{1}{\sqrt{1+\lambda\,\bT_{j_t,t-1}/8}} \\
&\leq
\sum_{t=1}^T \sum_{j = 1}^m \frac{v_j}{n}\,\frac{1}{\sqrt{1+\frac{\lambda}{8}\,\left(\frac{t-1}{2n}\,v_j
                                                        - \frac{5}{3}\,\log(Tm/\delta) \right)_+}}
\\ & \quad +\sqrt{2T\,\log(1/\delta)}\\
&\leq
\sum_{j = 1}^m \frac{v_j}{n}\,\left(4n\,\frac{5}{3}\,\log(Tm/\delta)+1
+ \sum_{t=1}^T\frac{1}{\sqrt{1+\frac{\lambda}{8}\,\frac{t-1}{4n}\,v_j}}\right)
\\ & \quad +\sqrt{2T\,\log(1/\delta)}\\
&=
4n\,\frac{5}{3}\,\log(Tm/\delta)+1
+ \sum_{j = 1}^m \frac{v_j}{n}\,\sum_{t=1}^T\frac{1}{\sqrt{1+\frac{\lambda}{8}\,\frac{t-1}{4n}\,v_j}}
\\ & \quad +\sqrt{2T\,\log(1/\delta)}~.
\end{align*}
If we set for brevity $r_j = \frac{\lambda}{8}\,\frac{v_j}{4n}$, $j = 1, \ldots, m$,
we have
\begin{align*}
\sum_{t=1}^T\frac{1}{\sqrt{1+\frac{\lambda}{8}\,\frac{t-1}{4n}\,v_j}}
&\leq
\int_{0}^{T} \frac{dx}{\sqrt{1+(x-1)r_j}}\\
&=
\frac{2}{r_j} \left(\sqrt{1+T\,r_j - r_j} - \sqrt{1-r_j} \right)\\
&\leq
2\,\sqrt{\frac{T}{r_j}}~,
\end{align*}
so that
\begin{align*}
\sum_{t=1}^T\frac{1}{\sqrt{1+\lambda\,\bT_{j_t,t-1}/8}}
&\leq
4n\,\frac{5}{3}\,\log(Tm/\delta)+1\\
&\quad +\sqrt{2T\,\log(1/\delta)}
+ 8\,\sum_{j = 1}^m \sqrt{\frac{2 T v_j}{\lambda n}}~.
\end{align*}

Finally, we put all pieces together. In order for all claims to hold simultaneously with probability at least
$1-\delta$, we need to replace $\delta$ throughout by $\delta/10.5$. Then we switch to a ${\tilde O}$-notation,
and overapproximate once more to conclude the proof.
\end{proof}

\iffalse
**********************************************************************************************
\[
\E[\{T_{s_t,t-1} < C\}]
=
\E[\E_t[\{T_{s_t,t-1} < C\}]]
=
\E\left[(\Pr(T_{1,t-1} < C))^{|V_{v(i_t)}|}\right]
\leq
\E\left[\Pr(T_{1,t-1} < C)\right]
\leq
\exp\left(-\frac{(t-1-nC)^2}{2n(t-1)}\right)~,
\]
the last inequality deriving from the standard Chernoff bounds. Since
$\frac{(t-1-nC)^2}{2n(t-1)} \geq \frac{t-1}{4n}$
when $t \geq 4nC+1$, we can write
\[
\E[\{T_{s_t,t-1} < C\}] \leq
\begin{cases}
1                                                     &{\mbox{ if $t < 4nC+1$,}}\\
e^{-(t-1)/(4n)}                                   &{\mbox{otherwise}}~.
\end{cases}
\]
Similarly,
\[
\sum_{i\in V} \E[\{T_{i,t-2} \leq B\}] \leq
\begin{cases}
n                     &{\mbox{ if $t < 4nB+2$,}}\\
n\,e^{-(t-2)/(4n)}    &{\mbox{otherwise}}~.
\end{cases}
\]
Moreover,
\[
\E\left[\sum_{j \in V_{v(i_t)}} \{T_{j,t-1} = 0 \} \right]
=
\E\left[ \E_t\left[\sum_{j \in V_{v(i_t)}} \{T_{j,t-1} = 0 \} \right] \right]
=
\E\left[|V_{v(i_t)}|\,\left(1-\frac{1}{n}\right)^{t-1} \right]
\leq
\E\left[|V_{v(i_t)}|\right] e^{-\frac{t-1}{n}}~.
\]
*****************************************************************************************
\fi

\subsection{Implementation}\label{sa:implementation}
As we said in the main text, in implementing the algorithm in Figure $1$,
the reader should keep in mind that it is reasonable to
expect $n$ (the number of users) to be quite large, $d$ (the number of features of each item) to be relatively small,
and $m$ (the number of true clusters) to be very small compared to $n$.
Then the algorithm can be implemented by storing a least-squares estimator $\bw_{i,t-1}$
at each node $i \in V$, an aggregate least squares estimator $\bbw_{\hj_t,t-1}$ for each current cluster
$\hj_t \in \{1,\ldots, m_t\}$, and an extra data-structure which is able to perform decremental dynamic connectivity.
Fast implementations of such data-structures are those studied by~\cite{Tho97,kkm13}
(see also the research thread referenced therein). In particular, in \cite{Tho97} (Theorem 1.1 therein)
it is shown that a randomized construction exists that maintains a spanning forerst which, given an initial
undirected graph $G_1 = (V,E_1)$, is able to perform edge deletions and answer connectivity queries of the form
``Is node $i$ connected to node $j$" in expected total time
$O\left(\min\{|V|^2, |E_1|\,\log |V|\} + \sqrt{|V|\,|E_1|}\,\log^{2.5} |V|\right)$ for $|E_1|$ deletions.
Connectivity queries and deletions can be
interleaved, the former being performed in {\em constant} time. Notice that when we start off from the full graph,
we have $|E_1| = O(|V|^2)$, so that the expected amortized time per query becomes constant. On the other hand,
if our initial graph has
% \footnote
% {
% It is well known that a random Erdos-Renyi graph where $p = \frac{\log |V|}{|V|}$ is the independent probability that
% two nodes are connected will be a connected graph with high probability, as $|V| \rightarrow \infty$. Hence,
% if the initial graph $G_1 = (V,E_1)$ is an Erdos-Renyi graph with $|E_1| = \Omega(|V|\,\log |V|)$, then $G_1$
% is likely to be a connected graph.
% }
$|E_1| = O(|V|\,\log |V|)$ edges, then the expected amortized time per query is $O(\log^2 |V|)$. This becomes
$O(\log^{2.5} |V|)$ if the initial graph has $|E_1| = O(|V|)$.
In addition, we maintain an $n$-dimensional vector $\ci$ containing, for each node $i \in V$,
the index $j$ of the current cluster $i$ belongs to.

With these data-structures handy, we can implement our algorithm as follows.
% and the set of connected components of the current graph $G_t$.
% The least-squares estimators are maintained
% by an inverse matrix $M^{-1}_{i,t}$ (or $\bM^{-1}_{j,t}$) and a vector $\bb_{i,t}$
After receiving $i_t$, computing $j_t$ is $O(1)$ (just by accessing $\ci$). Then, computing $k_t$ can be done
in time $O(d^2)$ (matrix-vector multiplication, executed $c_t$ times, assuming $c_t$ is a constant).
Then the algorithm directly updates $\bb_{i_t,t-1}$ and  $\bbb_{\hj_t,t-1}$, as well
as the inverses of matrices $M_{i_t,t-1}$ and
$\bM_{\hj_t,t-1}$, which is again $O(d^2)$, using standard formulas for rank-one adjustment of inverse matrices.
In order to prepare the ground for the subsequent edge deletion phase,
it is convenient that the algorithm also stores at each node $i$ matrix $M_{i,t-1}$ (whose time-$t$ update is again $O(d^2)$).

Let $\delete(i,\ell)$ and $\connected(i,\ell)$ be the two operations delivered by the decremental dynamic connectivity
data-structure.
Edge deletion at time $t$ corresponds to cycling through all nodes $\ell$ such that $(i_t,\ell)$ is an existing edge.
The number of such edges is on average equal to the average degree of node $i_t$, which is $O\left(\frac{|E_1|}{n}\right)$,
where $|E_1|$ is the number of edges in the initial graph $G_1$.
Now, if $(i_t,\ell)$ has to be deleted (each the deletion test being $O(d)$),
then we invoke $\delete(i_t,\ell)$, and then $\connected(i_t,\ell)$. If $\connected(i_t,\ell)$ = ``no",
this means that the current cluster $\hat V_{j_t,t-1}$ has to split into two new clusters as a consequence of the
deletion of edge $(i_t,\ell)$.
The set of nodes contained in these two clusters correspond to the two sets
\begin{align*}
&\{ k \in V\,:\, \connected(i_t,k) = ``yes"\},\\
&\{ k \in V\,:\, \connected(\ell,k) = ``yes"\}`,
\end{align*}
whose expected amortized computation {\em per node} is $O(1)$ to $O(\log^{2.5}n)$
(depending on the density of the initial graph $G_1$). We modify the $\ci$ vector accordingly,
but also the aggregate least squares estimators. This is because $\bbw_{\hj_t,t-1}$
(represented through $\bM^{-1}_{\hj_t,t}$ and $\bbb_{\hj_t,t}$) has to be spread over the two newborn clusters.
This operation can be
performed by adding up all matrices $M_{i,t}$ and all $\bb_{i,t}$, over all $i$ belonging to each of the
two new clusters
(it is at this point that we need to access $M_{i,t}$ for each $i$),
and then inverting the resulting aggregate matrices. This operation takes $O(n\,d^2 + d^3)$.
However, as argued in the comments following Lemma \ref{l:edgedelete},
with high probability the number of current clusters $m_t$ can never exceed $m$, so that with the
same probability this operation is only
performed at most $m$ times throughout the learning process.
Hence in $T$ rounds we have an overall (expected) running time
\begin{align*}
O\Biggl(&T\,\left(d^2 + \frac{|E_1|}{n}\,d\right) + m\,(n\,d^2 + d^3) + |E_1|\notag\\
&+ \min\{n^2, |E_1|\,\log n\} + \sqrt{n\,|E_1|}\,\log^{2.5} n\Biggl)~.%\label{e:runningtime}
\end{align*}
%
%where $|E_1|$ is the number of edges in the initial graph $G_1$.
Notice that the above is $n\cdot{\mbox{poly}}(\log n)$,
if so is $|E_1|$. In addition, if $T$ is large compared to $n$ and $d$, the average running time per round
becomes $O(d^2 + d\cdot{\mbox{poly}}(\log n))$.

As for memory requirements, we need to store two $d\times d$ matrices and one $d$-dimensional
vector at each node, one $d\times d$ matrix and one $d$-dimensional vector for each current cluster, vector
$\ci$, and the data-structures allowing for fast deletion and connectivity tests. Overall, these data-structures
do not require more than $O(|E_1|)$ memory to be stored, so that this implementation
takes $O(n\,d^2 + m\,d^2 + |E_1|) = O(n\,d^2+|E_1|)$, where we again relied upon the $m_t \leq m$ condition.
Again, this is $n\cdot{\mbox{poly}}(\log n)$ if so is $|E_1|$.

\subsection{Further Plots}\label{sa:furtherplots}
This section contains a more thorough set of comparative plots on the synthetic datasets described in the main text.
See Figure \ref{fig:artificial0} and Figure \ref{fig:artificial2}.

%\vspace{2in}

%updated the figure and tune the corresponding best display arguments by shuai
\begin{figure}[t!]
\begin{picture}(0,22)(0,22)
\begin{tabular}{l@{\hspace{-1.2pc}\vspace{6.4pc}}l}
\includegraphics[width=0.23\textwidth]{plots/z0m2eta01.pdf}
& \includegraphics[width=0.23\textwidth]{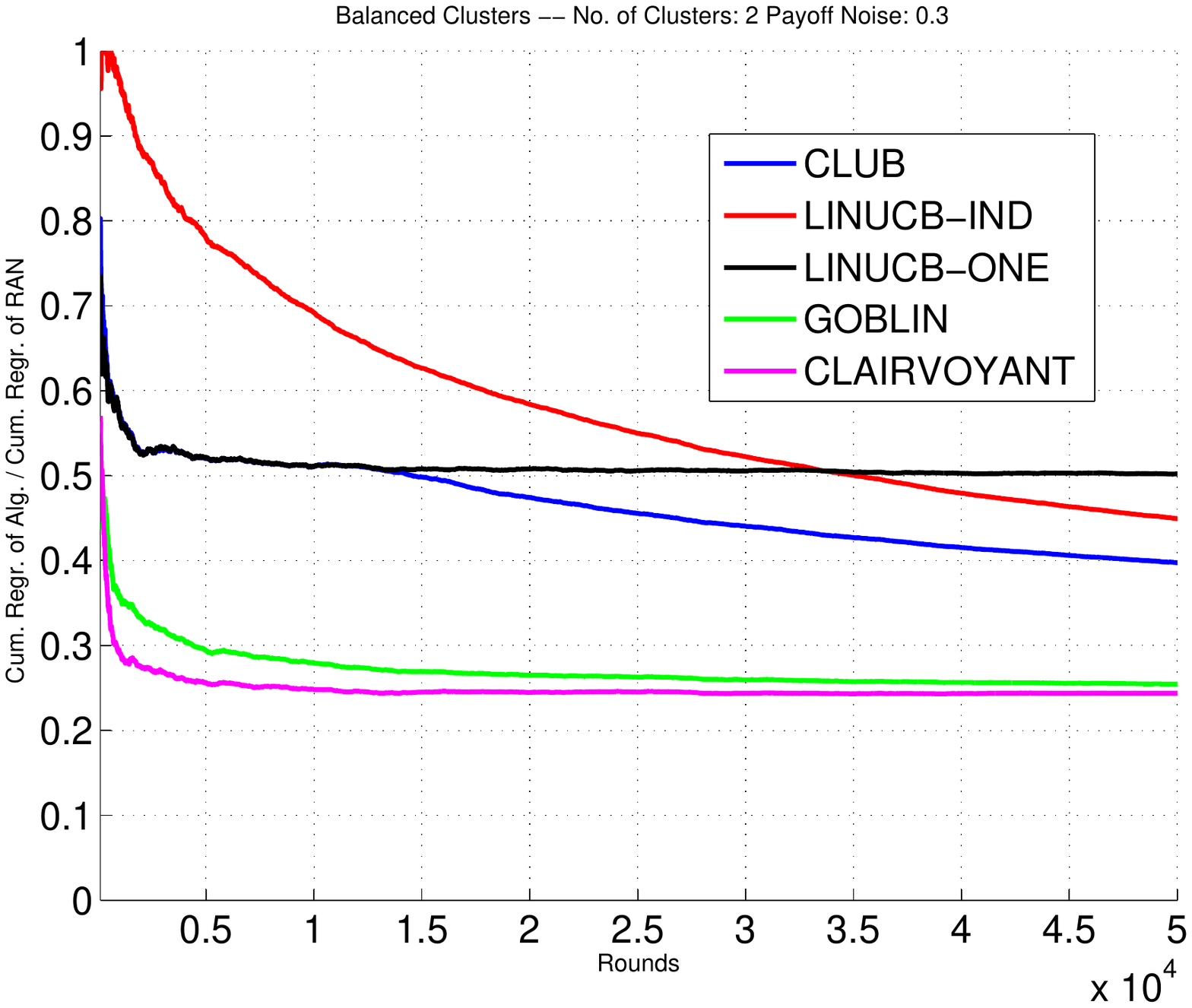}\\[-2.0in]
\includegraphics[width=0.23\textwidth]{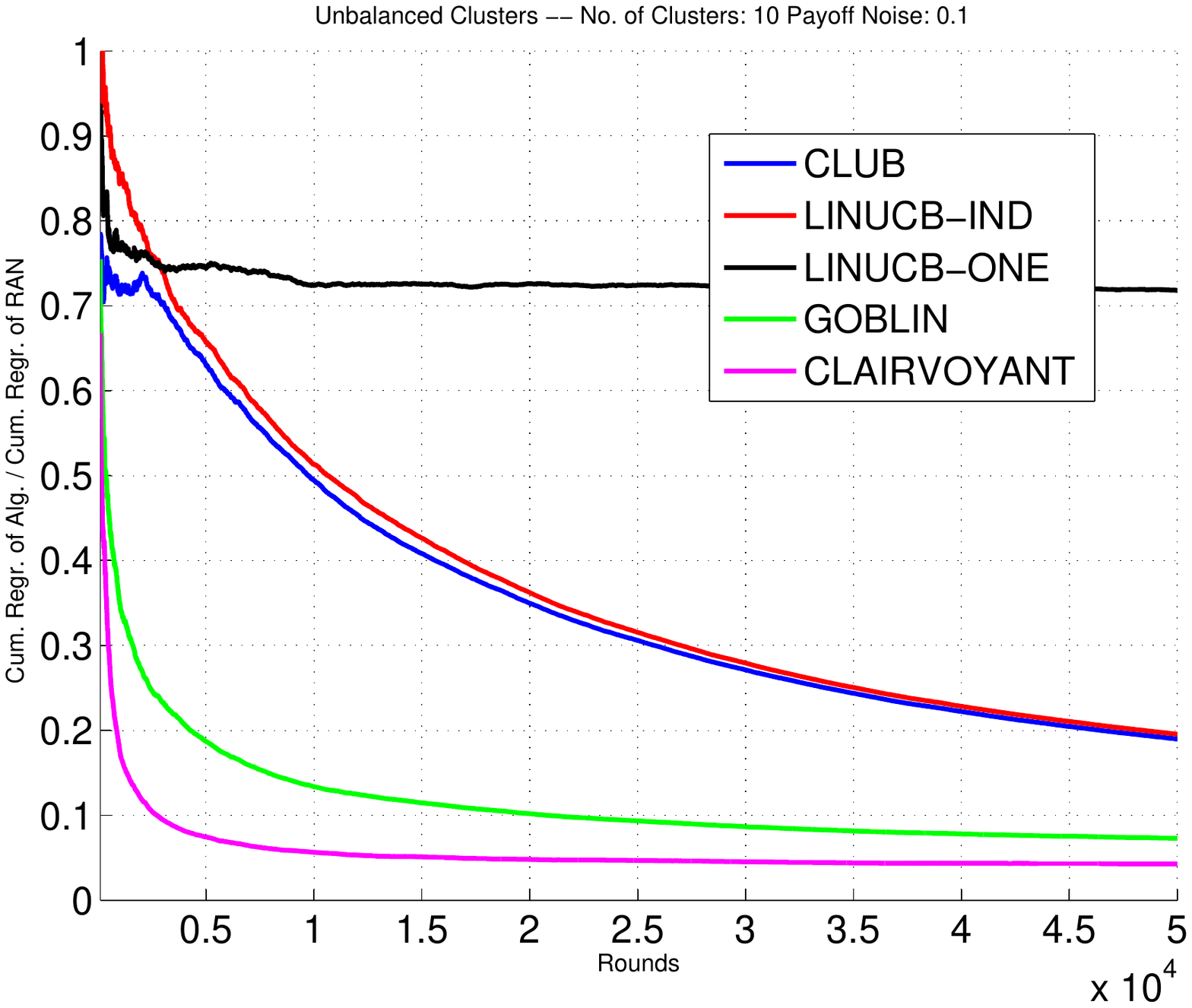}
& \includegraphics[width=0.23\textwidth]{plots/z0m10eta03.pdf}\\[-2.0in]
\end{tabular}
\end{picture}
\vspace{1.9in}
\caption{\label{fig:artificial0}Results on synthetic datasets. Each plot displays the behavior of
the ratio of the current cumulative regret of the algorithm (``Alg") to the current cumulative regret
of RAN, where
where ``Alg" is either ``CLUB" or ``LinUCB-IND" or ``LinUCB-ONE"
or ``GOBLIN''or ``CLAIRVOYANT''. The cluster sizes
are balanced ($z=0$). From left to right, payoff noise steps from $0.1$ to $0.3$, and from top to bottom
the number of clusters jumps from $2$ to $10$.}
\vspace{-0.15in}
\end{figure}

%updated the figure and tune the corresponding best display arguments by shuai
\begin{figure}[t!]
\begin{picture}(0,22)(0,22)
\begin{tabular}{l@{\hspace{0.22pc}\vspace{6.4pc}}l}
\includegraphics[width=0.23\textwidth]{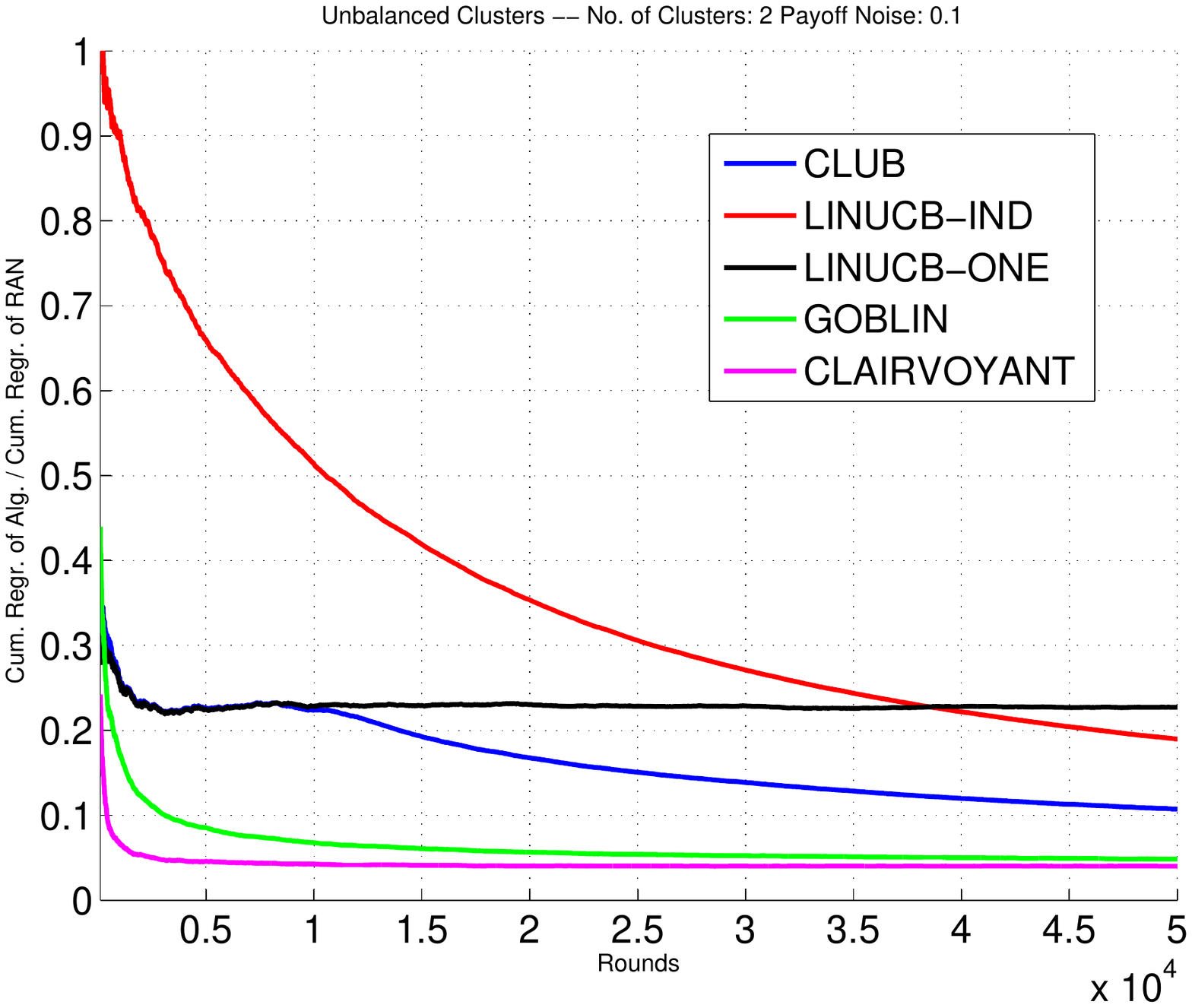}
& \includegraphics[width=0.23\textwidth]{plots/z2m2eta03.pdf}\\[-2.0in]
\includegraphics[width=0.23\textwidth]{plots/z2m10eta01.pdf}
& \includegraphics[width=0.23\textwidth]{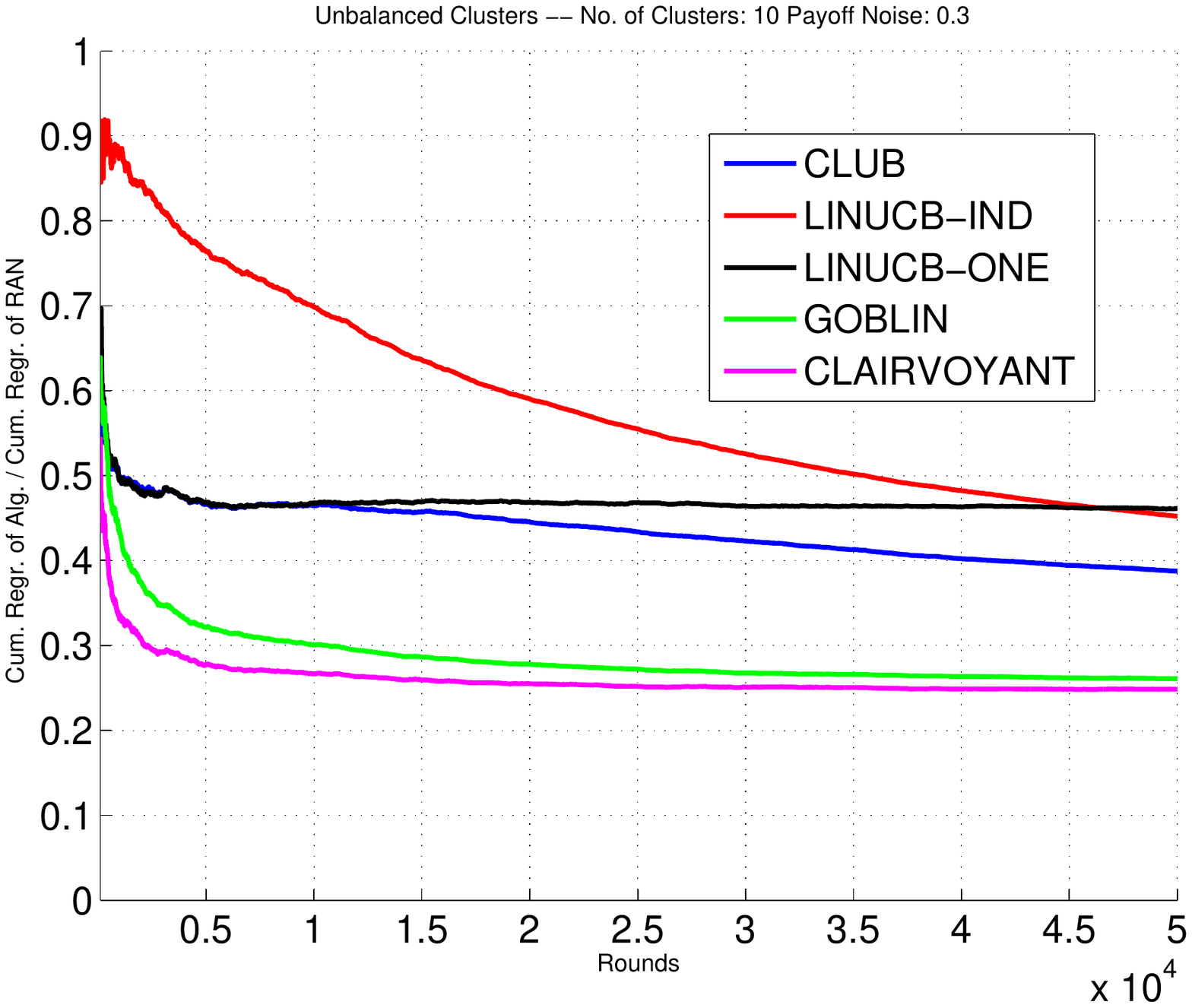}\\[-2.0in]
\end{tabular}
\end{picture}
\vspace{1.9in}
\caption{\label{fig:artificial2}Results on synthetic datasets in the case of unbalanced ($z=2$) cluster sizes.
The rest is the same as in Figure \ref{fig:artificial0}.}
\vspace{-0.15in}
\end{figure}

\subsection{Derivation of the Reference Bounds}
We now provide a proof sketch of the reference bounds mentioned in Section $2$ of the main text.

Let us start off from the {\em single user} bound for LINUCB (either ONE or IND) one can extract from 
\cite{abbasi2011improved}. Let $\bu_j \in \R^d$ be the profile vector of this user. Then, with probability
at least $1-\delta$, we have
\begin{align*}
\sum_{t=1}^T r_t 
&= 
O\left(\sqrt{T\left(\sigma^2\,d\,\log T + \sigma^2\,\log\frac{1}{\delta} +||\bu_i||^2\right)\,d\,\log T}\right)\\
&={\widetilde O}\left(\sqrt{T\left(\sigma^2\,d +||\bu_j||^2\right) d}\right)\\
&={\widetilde O}\left((\sigma\,d+\sqrt{d})\sqrt{T}\right),
\end{align*}
the last line following from assuming $||\bu_j|| = 1$.

Then, a straightforward way of turning this bound into a bound for the CLEARVOYANT algorithm that knows all clusters
$V_1, \ldots, V_m$ ahead of time and runs one instance of LINUCB per cluster is to sum the regret contributed
by each cluster throughout the $T$ rounds. Letting $T_{j,T}$ denote the set of rounds $t$ such that $i_t \in V_j$, we
can write
\[
\sum_{t=1}^T r_t ={\widetilde O}\left((\sigma\,d+\sqrt{d})\sum_{j=1}^m\sqrt{T_{j,T}}\right)~.
\]
However, because $i_t$ is drawn uniformly at random over $V$, we also have $\E[T_{j,T}] = T\frac{|V_j|}{n}$,
so that we essentially have with high probability
\[
\sum_{t=1}^T r_t = {\widetilde O}\left((\sigma\,d+\sqrt{d})\sqrt{T}\,\left(1+\sum_{j=1}^m\sqrt{\frac{|V_j|}{n}}\right)\right)~,
\]
i.e., Eq. ($1$) in the main text.

\subsection{Further Comments}\label{sa:fixeddesign}
As we said in Remark $3$,
a data-dependent variant of the CLUB algorithm can be designed and analyzed which relies on
data-dependent clusterability assumptions of the set of users with respect to a set of context vectors.
These data-dependent assumptions allow us to work in a fixed design setting for the sequence of context vectors
$\bx_{t,k}$, and remove the sub-Gaussian and full-rank hypotheses regarding $\E[XX^\top]$.
To make this more precise, consider an adversary that generates (unit norm) context vectors
in a (possibly adaptive) way that {\em for all} $\bx$ so generated\
\(
|\bu_j^\top\bx - \bu_{j'}^\top\bx| \geq \gamma~,
\)
\ whenever $j \neq j'$. In words, the adversary's power is restricted in that it cannot
generate two distict context vectors $\bx$ and $\bx'$ such that
$|\bu_j^\top\bx - \bu_{j'}^\top\bx|$ is small and $|\bu_j^\top\bx' - \bu_{j'}^\top\bx'|$ is large.
The two quantities must either be both zero (when $j = j'$) or both bounded away from 0
(when $j \neq j'$). Under this assumption, one can show that a modification to the $\TCB_{i,t}(\bx)$
and $\TCB_{j,t}(\bx)$ functions exists
%and a corresponding analysis
that makes the CLUB algorithm
in Figure $1$ achieve a cumulative regret bound similar to the one in
(5), where the $\sqrt{\frac{1}{\lambda}}$ factor therein is turned back into
$\sqrt{d}$, as in the reference bound (1), but with a worse dependence
on the geometry of the set of $\bu_j$, as compared to $\E[SD(\bu_{i_t})]$.
The analysis goes along the very same lines as the one of Theorem $1$.

\subsection{Related Work}
The most closely related papers are \cite{dkc13,alb13,bl13,mm14}. 

In \cite{alb13}, the authors define a transfer learning problem within a stochastic
multiarmed bandit setting, where a prior distribution is defined over the set of possible models
over the tasks. More similar in spirit to our paper is the recent work \cite{bl13} that relies 
on clustering Markov Decision Processes based on their model parameter similarity.
A paper sharing significant similarities with ours, in terms of both setting and
technical tools is the very recent paper \cite{mm14} that came to our attention at the time 
of writing ours. In that paper, the authors analyze a noncontextual stochastic bandit problem where
model parameters can indeed be clustered in a few (unknown) types, thereby requiring the algorithm to 
learn the clusters rather than learning the parameters in isolation. Yet, the provided algorithmic 
solutions are completely different from ours.
Finally, in \cite{dkc13}, the authors work under the assumption that users are defined using a context 
vector, and try to learn a low-rank subspace under the assumption that variation across users is low-rank.
The paper combines low-rank matrix recovery with high-dimensional Gaussian Process Bandits, but it gives
rise to algorithms which do not seem easy to use in large scale practical scenarios.

\subsection{Ongoing Research}
% 
% We described and analyzed a new online learning method for clustering bandits
% as a function of past observed behavior. We have analyzed this algorithm under
% suitable statistical assumptions about context vectors and payoff noise, and
% provided an experimental investigation on both synthetic data and
% known real-world benchmarks.  
%
This work could be extended along several directions.
First, we may rely on a softer notion of
clustering than the one we adopted here: a cluster is made up of nodes
where the ``within distance" between associated profile vectors is smaller than
their ``between distance". Yet, this is likely to require prior knowledge
of either the distance threshold or the number of underlying clusters,
which are assumed to be unknown in this paper. Second, it might be possible
to handle partially overlapping clusters. Third, CLUB can clearly be modified so
as to cluster nodes through off-the-shelf graph clustering techniques (mincut,
spectral clustering, etc.). Clustering via connected components has the twofold
advantage of being computationally faster and relatively easy to analyze.
In fact, we do not know how to analyze CLUB when combined with
alternative clustering techniques, and we suspect that Theorem $1$ already
delivers the sharpest results (as $T\rightarrow \infty$) when clustering
is indeed based on connected components only. Fourth, from a practical
standpoint, it would be important to incorporate further side information,
like must-link and cannot-link constraints. Fifth, in recommender systems practice,
it is often relevant to provide recommendations to new users, even in the absence
of past information (the so-called ``cold start" problem). In fact,
there is a way of tackling this problem through the machinery we developed here
(the idea is to duplicate the newcomer's node as many times as the current clusters
are, and then treat each copy as a separate user). This would potentially allow
CLUB to work even in the presence of (almost) idle users.
% \footnote
% {
% Recall that such users have been filtered out in the Yahoo datasets we tested.
% }
We haven't so far collected any experimental evidence on the effectiveness
of this strategy.
Sixth, following the comments we made in Remark $3$,
we are trying to see if the i.i.d. and other statistical assumptions we made
in Theorem $1$ could be removed.

\iffalse
Compared to running $n$ independent bandit algorithms (which corresponds to $\aod$ being
the identity matrix), the bound in the above theorem has an extra term $\sum_{(i,j) \in E} \|\bu_i-\bu_j\|^2$,
which we assume small according to our working assumption. However, the bound has also a significantly
smaller log determinant $\ln|M_T|$ on the resulting matrix $M_T$, due to the construction of
${\tilde \phi_{t,k}}$ via $\aod^{-1/2}$. In particular, when the graph is very dense, the log determinant
in GOB.Lin is a factor $n$ smaller than the corresponding term for the $n$ independent bandit case
(see, e.g.,\cite{CCG10}, Section 4.2 therein).
\fi

\end{document}